\newtheorem*{remark}{Remark}
\newtheorem{theorem}{Theorem}[section]
\newtheorem{lemma}[theorem]{Lemma}
\theoremstyle{definition}
\newtheorem{definition}[theorem]{Definition}
\newcommand{\model}{{{TransLATE}}}
\title{Continuous Transfer Learning \\ with Label-informed Distribution Alignment}
\author{%
  Jun Wu \\
  University of Illinois at Urbana-Champaign\\
  \texttt{junwu3@illinois.edu}
  \And
  Jingrui He \\
  University of Illinois at Urbana-Champaign\\
  \texttt{jingrui@illinois.edu}
}
\begin{document}

\maketitle

\vspace{-7mm}
\begin{abstract}
Transfer learning has been successfully applied across many high-impact applications. However, most existing work focuses on the static transfer learning setting, and very little is devoted to modeling the time evolving target domain, such as the online reviews for movies. To bridge this gap, in this paper, we study a novel continuous transfer learning setting with a time evolving target domain. One major challenge associated with continuous transfer learning is the potential occurrence of negative transfer as the target domain evolves over time. To address this challenge, we propose a novel label-informed $\mathcal{C}$-divergence between the source and target domains in order to measure the shift of data distributions as well as to identify potential negative transfer. We then derive the error bound for the target domain using the empirical estimate of our proposed $\mathcal{C}$-divergence. Furthermore, we propose a generic adversarial Variational Auto-encoder framework named \model{} by minimizing the classification error and $\mathcal{C}$-divergence of the target domain between consecutive time stamps in a latent feature space. In addition, we define a transfer signature for characterizing the negative transfer based on $\mathcal{C}$-divergence, which indicates that larger $\mathcal{C}$-divergence implies a higher probability of negative transfer in real scenarios. Extensive experiments on synthetic and real data sets demonstrate the effectiveness of our \model{} framework.
\end{abstract}

\vspace{-5mm}
\section{Introduction}
\vspace{-2mm}

\begin{wrapfigure}{r}{0.5\textwidth}
  \vspace{-7mm}
  \begin{center}
    \includegraphics[width=0.5\textwidth]{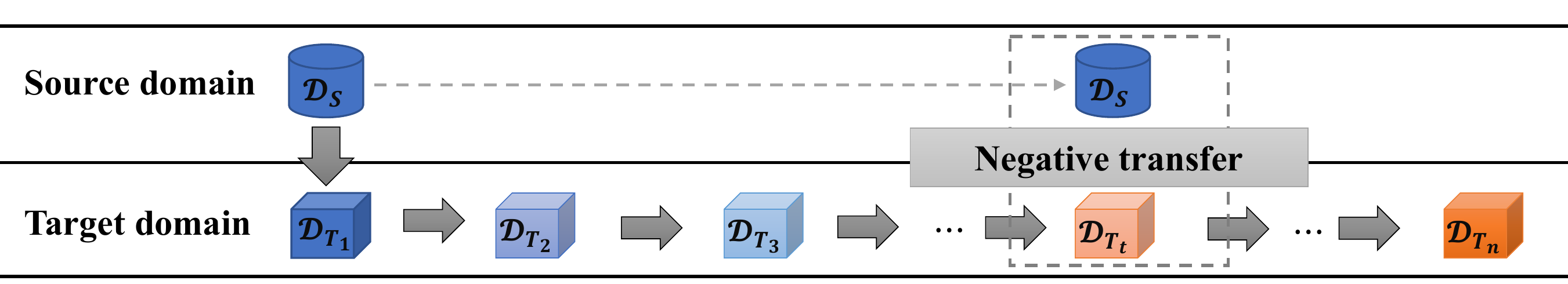}
  \end{center}
  \vspace{-5mm}
  \caption{Illustration of continuous transfer learning. It learns a predictive function in $\mathcal{D}_{T_t}$ using knowledge from both source domain $\mathcal{D}_S$ and historical target domain $\mathcal{D}_{T_i} (i=1,\cdots,t-1)$. Directly transferring from the source domain $\mathcal{D}_S$ to the target domain $\mathcal{D}_{T_t}$ might lead to negative transfer with undesirable predictive performance.}\label{fig:ctl}
  \vspace{-2mm}
\end{wrapfigure}
Transfer learning has achieved significant success across multiple high-impact application domains~\cite{pan2009survey}. Compared to conventional machine learning methods assuming both training and test data have the same data distribution, transfer learning allows us to learn the target domain with limited label information by leveraging a related source domain with abundant label information~\cite{sun2016return, ying2018transfer}. However, in many real applications, the target domain is constantly evolving over time. For example, the online movie reviews are changing over the years: some famous movies were not well received by the mainstream audience when they were first released, but became famous only years later (e.g., {\it Citizen Cane}, {\it Fight Club}, and {\it The Shawshank Redemption}); whereas the online book reviews typically do not have this type of dynamics. It is challenging to transfer knowledge from the static source domain (e.g., the book reviews) to the time evolving target domain (e.g., the movie reviews). Therefore, in this paper, we study a novel transfer learning setting with a static source domain and a continuously time evolving target domain (see Fig. \ref{fig:ctl}). The unique challenge for continuous transfer learning lies in the time evolving nature of the task relatedness between the static source domain and the time evolving target domain. Although the change in the target data distribution in consecutive time stamps might be small, over time, the cumulative change in the target domain might even lead to negative transfer~\cite{rosenstein2005transfer}.

Existing theoretical analysis on transfer learning~\cite{ben2010theory, mansour2009domain} showed that the target error is typically bounded by the source error, the domain discrepancy and the difference of labeling functions. It has been observed~\cite{zhao2019learning, wu2019domain} that marginal feature distribution alignment might not guarantee the minimization of the target error in real world scenarios. In other words, in the context of continuous transfer learning, it would lead to the sub-optimal solution (or even negative transfer) with undesirable predictive performance when directly transferring from $\mathcal{D}_S$ to the target domain $\mathcal{D}_{T_t}$ at the $t^{\mathrm{th}}$ time stamp. This paper aims to bridge the gap in terms of both the theoretical analysis and the empirical solutions for the target domain with a time evolving distribution, which lead to a novel continuous transfer learning framework and the characterization of negative transfer. The main contributions of this paper are summarized as follows:
\begin{itemize}[leftmargin=*,noitemsep,nolistsep]
    \item \textbf{Theoretical results:} We propose a label-informed domain discrepancy measure ($\mathcal{C}$-divergence) with its empirical estimate, followed by the
    error bounds for both static and continuous transfer learning settings. Then we define a transfer signature for characterizing the negative transfer based on $\mathcal{C}$-divergence.
    \item \textbf{Framework:} We propose a generic continuous transfer learning framework (\model{}) using our proposed $\mathcal{C}$-divergence, which re-aligns the label-informed data distribution in a latent feature space for the target domain between consecutive time stamps.
    \item \textbf{Experiments:} Extensive experimental results on synthetic and real-world data sets confirm the effectiveness of the proposed \model{} framework.
\end{itemize}

\vspace{-2mm}
\section{Preliminaries}\label{Preliminaries}
\vspace{-2mm}
In this section, we introduce the notation and problem definition of continuous transfer learning.

\vspace{-2mm}
\subsection{Notation}
\vspace{-2mm}
We use $\mathcal{X}$ and $\mathcal{Y}$ to denote the data input space and label space. Let $\mathcal{D}_S$ and $\mathcal{D}_T$ denote the source and target domains with data distribution $p_S(\mathbf{x},y)$ and $p_T(\mathbf{x},y)$ over $\mathcal{X}\times \mathcal{Y}$, respectively. Let $\mathcal{H}$ be a hypothesis class on $\mathcal{X}$, where a hypothesis is a function $h:\mathcal{X} \rightarrow \mathcal{Y}$. For transfer learning, it assumes that there are $m_S$ labeled source examples drawn independently from $\mathcal{D}_S$ and $m_T$ labeled target examples drawn independently from $\mathcal{D}_T$. The notation is summarized in Table~\ref{tab: notations} in the appendices.

\vspace{-2mm}
\subsection{Continuous Transfer Learning}
\vspace{-2mm}
Transfer learning~\cite{pan2009survey} refers to the knowledge transfer from source domain to target domain such that the prediction performance on the target domain could be significantly improved as compared to learning from the target domain alone. However, in some applications, the target domain is changing over time, hence the time evolving relatedness between the source and target domains.
This motivates us to consider a novel transfer learning setting with the time evolving target domain. We formally define the continuous transfer learning problem as follows.
\begin{definition}({\em Continuous Transfer Learning})
    Given a source domain $\mathcal{D}_S$ (available at time stamp $t=1$) and a time evolving target domain $\{\mathcal{D}_{T_t}\}_{t=1}^n$ with time stamp $t$, {\it continuous transfer learning} aims to improve the prediction function for target domain $\mathcal{D}_{T_k}$ using the knowledge from source domain $\mathcal{D}_S$ and the historical target domain $\mathcal{D}_{T_t} (t=1,\cdots,k-1)$.
\end{definition}

\vspace{-2mm}
\section{Label-informed Domain Discrepancy}\label{Discrepancy}
\vspace{-2mm}

\subsection{$\mathcal{C}$-divergence}\label{c_divergence}
\vspace{-2mm}
We begin by considering the binary classification setting, i.e., $\mathcal{Y}=\{0, 1\}$. The source error of a hypothesis $h$ can be defined as follows: $\epsilon_S(h) = \mathbb{E}_{(\mathbf{x},y)\sim p_S(\mathbf{x}, y)} \big[\mathcal{L}(h(\mathbf{x}), y) \big]$
where $\mathcal{L}(\cdot, \cdot)$ is the loss function.
Its empirical estimate is denoted as $\hat{\epsilon}_S(h)$. Similarly, we define the target error $\epsilon_T(h)$ and the empirical estimate of the target error $\hat{\epsilon}_T(h)$ over the target distribution $p_T(\mathbf{x}, y)$.

We then define a label-informed domain discrepancy using the following $L_1$ or variation divergence over joint distributions (i.e., $p_S(\mathbf{x}, y)$ for source domain $\mathcal{D}_S$ and $p_T(\mathbf{x}, y)$ for target domain $\mathcal{D}_T$) between data features and class label:
\begin{equation}\label{L_1_divergence}
    d_1(\mathcal{D}_S, \mathcal{D}_T) = \sup_{Q\in \mathcal{Q}} \big| \text{Pr}_{\mathcal{D}_S}[Q] - \text{Pr}_{\mathcal{D}_T}[Q] \big|
\end{equation}
where $\mathcal{Q}$ is the set of measurable subsets under $p_S(\mathbf{x}, y)$ and $p_T(\mathbf{x}, y)$. 
\begin{remark}
Compared with existing domain divergence measures~\cite{ben2010theory}, in our definition, every measurable subset involves both features and class labels, while existing work only considers the features (i.e., no label information used in the definition of existing domain divergence measures). The additional label information improves the discrimination of subset in $\mathcal{Q}$, thus leading to tighter domain discrepancy between source and target domains.
\end{remark}

For a hypothesis $h\in \mathcal{H}$, we denote $I(h)$ to be the subset of $\mathcal{X}$ such that $\mathbf{x}\in I(h) \Leftrightarrow h(\mathbf{x})=1$. In order to estimate the label-informed domain discrepancy from finite samples in practice, instead of Eq.~(\ref{L_1_divergence}), we propose the following $\mathcal{C}$-divergence between $\mathcal{D}_S$ and $\mathcal{D}_T$, taking into consideration the joint distribution between features and class labels:
\begin{equation}\label{definition_divergence}
d_{\mathcal{C}}(\mathcal{D}_S, \mathcal{D}_T) = \sup_{h\in \mathcal{H}} \Big|\text{Pr}_{\mathcal{D}_S}[\{I(h), y=1\}\cup \{\overline{I(h)}, y=0\}] - \text{Pr}_{\mathcal{D}_T}[\{I(h), y=1\}\cup \{\overline{I(h)}, y=0\}] \Big|
\end{equation}
where $\overline{I(h)}$ is the complement of $I(h)$. We show that some existing domain discrepancy methods~\cite{ben2007analysis} can be seen as special cases of this definition by using the following relaxed covariate shift assumption.
\begin{definition}\label{relaxed_CSA} ({\em Relaxed Covariate Shift Assumption})
    The source and target domains satisfy the relaxed covariate shift assumption if for any $h\in \mathcal{H}$,
    \begin{equation}
        \text{Pr}_{\mathcal{D}_S}[y ~|~ I(h)]=\text{Pr}_{\mathcal{D}_T}[y ~|~ I(h)] = \text{Pr}[y ~|~ I(h)]
    \end{equation}
\end{definition}
It would be equivalent to covariance shift assumption~\cite{shimodaira2000improving, johansson2019support} when $I(h)$ consists of only one example for all $h\in \mathcal{H}$ (see Lemma \ref{lemma:relaxed_CSA} for more details).
\begin{lemma}\label{joint_marginal}
With the relaxed covariate shift assumption, for any $h\in \mathcal{H}$, we have
\begin{equation*}
\begin{aligned}
d_{\mathcal{C}}(\mathcal{D}_S, \mathcal{D}_T) = \sup_{h\in \mathcal{H}} \Big|\Big(\text{Pr}_{\mathcal{D}_S}[I(h)] - \text{Pr}_{\mathcal{D}_T}[I(h)] \Big)\cdot \mathcal{S}_h + \text{Pr}_{\mathcal{D}_T}[y=1] - \text{Pr}_{\mathcal{D}_S}[y=1]\Big|
\end{aligned}
\end{equation*}
where $\mathcal{S}_h = \text{Pr}[y=1 | I(h)] - \text{Pr}[y=0 | I(h)]$.
\end{lemma}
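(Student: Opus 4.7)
The plan is to unfold the probability of the disjoint union appearing inside the $\mathcal{C}$-divergence, apply the relaxed covariate shift assumption to strip the domain subscripts from the conditional probabilities, and then recombine terms until the coefficient $\mathcal{S}_h$ naturally drops out.

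\textbf{Step 1 (decompose the event).} Fix an arbitrary hypothesis $h\in\mathcal{H}$ and a domain $\mathcal{D}\in\{\mathcal{D}_S,\mathcal{D}_T\}$. The events $\{I(h),y=1\}$ and $\{\overline{I(h)},y=0\}$ are disjoint, so I would write
\begin{equation*}
\mathrm{Pr}_{\mathcal{D}}\!\left[\{I(h),y=1\}\cup\{\overline{I(h)},y=0\}\right]=\mathrm{Pr}_{\mathcal{D}}[I(h),y=1]+\mathrm{Pr}_{\mathcal{D}}[\overline{I(h)},y=0].
\end{equation*}
Then I rewrite the second summand as $\mathrm{Pr}_{\mathcal{D}}[y=0]-\mathrm{Pr}_{\mathcal{D}}[I(h),y=0]$ to pull both joint probabilities over the same subset $I(h)$.

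\textbf{Step 2 (apply relaxed covariate shift).} Using the chain rule, each joint probability factors as $\mathrm{Pr}_{\mathcal{D}}[y=c\mid I(h)]\cdot\mathrm{Pr}_{\mathcal{D}}[I(h)]$ for $c\in\{0,1\}$. By Definition~\ref{relaxed_CSA}, the conditional factors are independent of the domain, so I replace $\mathrm{Pr}_{\mathcal{D}}[y=c\mid I(h)]$ by the common value $\mathrm{Pr}[y=c\mid I(h)]$. Collecting terms, the expression from Step 1 becomes
\begin{equation*}
\bigl(\mathrm{Pr}[y=1\mid I(h)]-\mathrm{Pr}[y=0\mid I(h)]\bigr)\cdot\mathrm{Pr}_{\mathcal{D}}[I(h)]+\mathrm{Pr}_{\mathcal{D}}[y=0]=\mathcal{S}_h\cdot\mathrm{Pr}_{\mathcal{D}}[I(h)]+\mathrm{Pr}_{\mathcal{D}}[y=0].
\end{equation*}

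\textbf{Step 3 (subtract source and target expressions).} Since $\mathcal{S}_h$ does not depend on which domain I take (this is precisely the content of the relaxed covariate shift assumption), subtracting the $\mathcal{D}_T$ version from the $\mathcal{D}_S$ version cancels $\mathcal{S}_h$ out of one factor and leaves
\begin{equation*}
\bigl(\mathrm{Pr}_{\mathcal{D}_S}[I(h)]-\mathrm{Pr}_{\mathcal{D}_T}[I(h)]\bigr)\cdot\mathcal{S}_h+\bigl(\mathrm{Pr}_{\mathcal{D}_S}[y=0]-\mathrm{Pr}_{\mathcal{D}_T}[y=0]\bigr).
\end{equation*}
Using $\mathrm{Pr}_{\mathcal{D}}[y=0]=1-\mathrm{Pr}_{\mathcal{D}}[y=1]$ flips the sign of the marginal term to $\mathrm{Pr}_{\mathcal{D}_T}[y=1]-\mathrm{Pr}_{\mathcal{D}_S}[y=1]$. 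Taking absolute value and the supremum over $h\in\mathcal{H}$ matches the claimed identity exactly.

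\textbf{Main obstacle.} The derivation itself is just algebra; the only subtle point worth flagging is that $\mathcal{S}_h=\mathrm{Pr}[y=1\mid I(h)]-\mathrm{Pr}[y=0\mid I(h)]$ is well defined only because the conditional distribution on $I(h)$ is the same under $\mathcal{D}_S$ and $\mathcal{D}_T$, which is precisely the relaxed covariate shift hypothesis. Without this, $\mathcal{S}_h$ would depend on the domain and no clean factorization would occur. Hence the crux of the proof is recognizing that Definition~\ref{relaxed_CSA} supplies the one substitution that simultaneously removes the source and target subscripts from both $\mathrm{Pr}[y=1\mid I(h)]$ and $\mathrm{Pr}[y=0\mid I(h)]$.
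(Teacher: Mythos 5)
Your proof is correct and follows essentially the same route as the paper's: decompose the disjoint union, write the joint probabilities over $I(h)$ via the chain rule, invoke the relaxed covariate shift assumption to make the conditionals domain-free, and recombine into $\mathcal{S}_h\cdot\bigl(\text{Pr}_{\mathcal{D}_S}[I(h)]-\text{Pr}_{\mathcal{D}_T}[I(h)]\bigr)$ plus the marginal label gap. The only cosmetic difference is that you apply the assumption before subtracting the two domain expressions, whereas the paper subtracts first and then uses the assumption to kill the residual term $\text{Pr}_{\mathcal{D}_T}[I(h)]\bigl(\text{Pr}_{\mathcal{D}_S}[y=1\,|\,I(h)]-\text{Pr}_{\mathcal{D}_T}[y=1\,|\,I(h)]\bigr)$; your ordering is slightly cleaner but not a different argument.
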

\begin{remark}
 From Lemma \ref{joint_marginal}, we can see that in the special case where $\mathcal{S}_h$ is a constant for all $h\in\mathcal{H}$ and $\text{Pr}_{\mathcal{D}_T}[y=1] = \text{Pr}_{\mathcal{D}_S}[y=1]$, the proposed $\mathcal{C}$-divergence is reduced to the $\mathcal{A}$-distance~\cite{ben2007analysis} defined on the marginal distribution of features. More generally speaking, $\mathcal{C}$-divergence can be considered as a weighted version of the $\mathcal{A}$-distance where the hypothesis whose characteristic function has a larger class-separability (i.e., $|\mathcal{S}_h|$) receives a higher weight. Intuitively, compared to $\mathcal{A}$-distance, $\mathcal{C}$-divergence would pay less attention to class-inseparable regions in the input feature space, which provide meaningless information for task learning in domains.
\end{remark}

On the other hand, if the covariate shift assumption does not hold, previous work~\cite{wu2019domain,zhao2019learning} showed that the exact marginal distribution alignment might lead to undesirable performance in transfer learning. Here we provide the following lemma to illustrate a scenario where the same hypothesis might have significantly different error rates in the source and target domains.
\begin{lemma}\label{L: negative_transfer}
When $p_S(\mathbf{x})=p_T(\mathbf{x})$ and $\epsilon_S(h)=0$, if $\mathcal{L}(h(\mathbf{x}),y)=|h(\mathbf{x})-y|$, the $\mathcal{A}$-distance between the source and target domains would be 0. However,
\begin{equation*}
\begin{aligned}
    \epsilon_T(h) \geq \big| \text{Pr}_{\mathcal{D}_T}[y=1] - \text{Pr}_{\mathcal{D}_S}[y=1] \big|
\end{aligned}
\end{equation*}
\end{lemma}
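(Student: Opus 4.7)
The plan is to separate the claim into its two parts and handle them directly from the definitions. The first part, that the $\mathcal{A}$-distance vanishes, is immediate: the $\mathcal{A}$-distance is defined via measurable subsets of the feature space $\mathcal{X}$ only and therefore depends exclusively on the marginals $p_S(\mathbf{x})$ and $p_T(\mathbf{x})$. Since these marginals are assumed equal, every set has identical probability under the two marginals and the supremum in the definition of $\mathcal{A}$-distance is $0$.

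For the inequality on $\epsilon_T(h)$, my plan is to leverage the assumption $\epsilon_S(h)=0$ to convert a source statement about the label marginal into a statement about the hypothesis marginal, and then transport that statement to the target via $p_S(\mathbf{x})=p_T(\mathbf{x})$. Concretely, under the loss $\mathcal{L}(h(\mathbf{x}),y)=|h(\mathbf{x})-y|$ in the binary setting, $\epsilon_S(h)=0$ means $h(\mathbf{x})=y$ almost surely under $p_S$, hence $\text{Pr}_{\mathcal{D}_S}[h(\mathbf{x})=1]=\text{Pr}_{\mathcal{D}_S}[y=1]$. Because $h$ is a function of $\mathbf{x}$ alone and the feature marginals coincide, this probability is preserved under the target, giving $\text{Pr}_{\mathcal{D}_T}[h(\mathbf{x})=1]=\text{Pr}_{\mathcal{D}_S}[y=1]$.

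The remaining step is to lower-bound $\epsilon_T(h)$ by the gap between $\text{Pr}_{\mathcal{D}_T}[h(\mathbf{x})=1]$ and $\text{Pr}_{\mathcal{D}_T}[y=1]$. Writing
\[
\text{Pr}_{\mathcal{D}_T}[h(\mathbf{x})=1]-\text{Pr}_{\mathcal{D}_T}[y=1]
= \text{Pr}_{\mathcal{D}_T}[h(\mathbf{x})=1,y=0]-\text{Pr}_{\mathcal{D}_T}[h(\mathbf{x})=0,y=1],
\]
the triangle inequality bounds the left-hand side in absolute value by $\text{Pr}_{\mathcal{D}_T}[h(\mathbf{x})\neq y]=\epsilon_T(h)$. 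Substituting the identity $\text{Pr}_{\mathcal{D}_T}[h(\mathbf{x})=1]=\text{Pr}_{\mathcal{D}_S}[y=1]$ from the previous step yields the desired inequality
\[
\epsilon_T(h) \geq \big|\text{Pr}_{\mathcal{D}_T}[y=1]-\text{Pr}_{\mathcal{D}_S}[y=1]\big|.
\]

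There is no real obstacle here; the whole argument is a short chain of definitional observations. The only subtlety worth flagging is the implicit use of the fact that $|h(\mathbf{x})-y|$ coincides with the $0/1$ loss on $\{0,1\}\times\{0,1\}$, so that zero expected loss forces pointwise agreement $p_S$-almost everywhere and $\epsilon_T(h)$ equals the target disagreement probability $\text{Pr}_{\mathcal{D}_T}[h(\mathbf{x})\neq y]$; both identities are needed for the argument to go through.
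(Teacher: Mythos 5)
Your proof is correct and follows essentially the same route as the paper's: use $\epsilon_S(h)=0$ to identify $\mathbb{E}_{p_S}[h(\mathbf{x})]$ with $\text{Pr}_{\mathcal{D}_S}[y=1]$, transport this to the target via the equal feature marginals, and lower-bound $\epsilon_T(h)$ by $\big|\mathbb{E}_{p_T}[h(\mathbf{x})]-\text{Pr}_{\mathcal{D}_T}[y=1]\big|$ using the triangle (Jensen) inequality. The paper writes this chain in integral form and omits the (trivial) $\mathcal{A}$-distance part, but the argument is the same.
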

\begin{remark}
 This lemma states that minimizing the source error and marginal domain discrepancy cannot guarantee the minimization of the target domain error due to the difference of marginal label distribution in these domains.
\end{remark}

With the proposed $\mathcal{C}$-divergence, we are able to avoid such scenarios. More specifically, the following theorem states that the target error is bounded in terms of $\mathcal{C}$-divergence and the expected source error.
\begin{theorem}\label{generalization}
Assume that loss function $\mathcal{L}$ is bounded, i.e., there exists a constant $M>0$ such that $0\leq\mathcal{L} \leq M$. For a hypothesis $h\in \mathcal{H}$, we have the following bound:
\begin{equation*}
    \epsilon_T(h) \leq \epsilon_S(h) + M\cdot d_{\mathcal{C}}(\mathcal{D}_S, \mathcal{D}_T)
\end{equation*}
\end{theorem}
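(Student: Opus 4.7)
The plan is to identify the measurable subset appearing in the definition of $\mathcal{C}$-divergence with the correctness event of $h$, and then read the error bound directly from this identification.

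First, I would observe that the set $\{I(h), y=1\} \cup \{\overline{I(h)}, y=0\}$ is exactly $A_h := \{(\mathbf{x}, y) : h(\mathbf{x}) = y\}$, since $I(h) = \{\mathbf{x} : h(\mathbf{x}) = 1\}$ and $\overline{I(h)} = \{\mathbf{x} : h(\mathbf{x}) = 0\}$. Hence
\begin{equation*}
d_{\mathcal{C}}(\mathcal{D}_S, \mathcal{D}_T) = \sup_{h \in \mathcal{H}} \bigl| \text{Pr}_{\mathcal{D}_T}[\overline{A_h}] - \text{Pr}_{\mathcal{D}_S}[\overline{A_h}] \bigr|,
\end{equation*}
where $\overline{A_h} = \{h(\mathbf{x}) \neq y\}$ is the misclassification event. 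This is the structural insight that does most of the work: the discrepancy is a supremum over exactly the right family of events.

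Second, I would use the boundedness of $\mathcal{L}$ to connect the expected loss to the misclassification probability. In the binary classification setting with the (possibly $M$-scaled) $0$--$1$ loss $\mathcal{L}(h(\mathbf{x}), y) = M \cdot \mathbb{1}[h(\mathbf{x}) \neq y]$, the error on any domain is $M \cdot \text{Pr}[\overline{A_h}]$, so
\begin{equation*}
\epsilon_T(h) - \epsilon_S(h) = M \bigl( \text{Pr}_{\mathcal{D}_T}[\overline{A_h}] - \text{Pr}_{\mathcal{D}_S}[\overline{A_h}] \bigr) \le M \cdot \sup_{h' \in \mathcal{H}} \bigl| \text{Pr}_{\mathcal{D}_T}[\overline{A_{h'}}] - \text{Pr}_{\mathcal{D}_S}[\overline{A_{h'}}] \bigr| = M \cdot d_{\mathcal{C}}(\mathcal{D}_S, \mathcal{D}_T),
\end{equation*}
which rearranges to the stated inequality.

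The step I expect to be the main subtle point is justifying the passage from a general bounded loss $\mathcal{L} \in [0, M]$ to the $0$--$1$ form used above. A direct layer-cake decomposition writes $\mathbb{E}_{\mathcal{D}_T}[\mathcal{L}] - \mathbb{E}_{\mathcal{D}_S}[\mathcal{L}]$ as an integral over probabilities of level sets $\{\mathcal{L} \ge s\}$, but these sets need not take the form $A_{h'}$ for any $h' \in \mathcal{H}$, so the layer-cake route would instead yield a total-variation-style bound in terms of $d_1$ rather than the sharper $d_{\mathcal{C}}$. The theorem therefore implicitly relies on $\mathcal{L}$ depending on $(h(\mathbf{x}), y)$ only through the misclassification indicator --- exactly the binary classification loss assumed at the start of Section~\ref{c_divergence} --- and once this is granted, the decisive step is the identification of the $\mathcal{C}$-divergence set with $A_h$ in Step~1, after which monotonicity of expectation and taking the supremum close out the argument.
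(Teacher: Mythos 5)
Your proof is correct and follows essentially the same route as the paper's: the paper likewise identifies $\{I(h),y=1\}\cup\{\overline{I(h)},y=0\}$ with the correctness event $\{h(\mathbf{x})=y\}$ and bounds $\epsilon_T(h)-\epsilon_S(h)$ by $M\,\bigl|\text{Pr}_{\mathcal{D}_S}[h(\mathbf{x})\neq y]-\text{Pr}_{\mathcal{D}_T}[h(\mathbf{x})\neq y]\bigr|$ before taking the supremum over $\mathcal{H}$. Your closing caveat is well taken --- the paper's own passage from $\bigl|\mathbb{E}_{\mathcal{D}_S}[\mathcal{L}]-\mathbb{E}_{\mathcal{D}_T}[\mathcal{L}]\bigr|$ to $M\,\bigl|\text{Pr}_{\mathcal{D}_S}[h(\mathbf{x})\neq y]-\text{Pr}_{\mathcal{D}_T}[h(\mathbf{x})\neq y]\bigr|$ also tacitly requires $\mathcal{L}$ to be a scaled misclassification indicator rather than an arbitrary $[0,M]$-valued loss.
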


\vspace{-2mm}
\subsection{Empirical Estimate of $\mathcal{C}$-divergence}
\vspace{-2mm}
In practice, it is difficult to calculate the proposed $\mathcal{C}$-divergence based on Eq.~(\ref{definition_divergence}) as it uses the true underlying distributions. Therefore, we propose the following empirical estimate of the $\mathcal{C}$-divergence between $\mathcal{D}_S$ and $\mathcal{D}_T$ as follows. Assuming that the hypothesis class $\mathcal{H}$ is symmetric (i.e., $1-h\in\mathcal{H}$ if $h\in \mathcal{H}$), the empirical $\mathcal{C}$-divergence is:
\begin{equation}\label{divergence_estimation}
    d_{\mathcal{C}}(\hat{\mathcal{D}}_S, \hat{\mathcal{D}}_T) = 1 - \min_{h\in \mathcal{H}} \Big| \frac{1}{m_S}\sum_{(\mathbf{x},y):h(\mathbf{x})\neq y}\mathbb{I}[(\mathbf{x},y)\in \hat{\mathcal{D}}_S] + \frac{1}{m_T}\sum_{(\mathbf{x},y):h(\mathbf{x})=y}\mathbb{I}[(\mathbf{x},y)\in \hat{\mathcal{D}}_T] \Big|
\end{equation}
where $\hat{\mathcal{D}}_S$ and $\hat{\mathcal{D}}_T$ denote the source and target domains with finite samples, respectively. $\mathbb{I}[a]$ is the binary indicator function which is 1 if $a$ is true, 0 otherwise.

The following lemma provides the upper bound of the true $\mathcal{C}$-divergence using its empirical estimate.
\begin{lemma}\label{T: Rademacher_bound}
For any $\delta \in (0,1)$, with probability at least $1-\delta$ over $m_S$ labeled source examples $\mathcal{B}_S$ and $m_T$ labeled target examples $\mathcal{B}_T$, we have:
\begin{equation*}
    \begin{aligned}
    d_\mathcal{C}(\mathcal{D}_S, \mathcal{D}_T) \leq d_{\mathcal{C}}(\hat{\mathcal{D}}_S, \hat{\mathcal{D}}_T) + \Big(\hat{\Re}_{\mathcal{B}_S}(L_H) + \hat{\Re}_{\mathcal{B}_T}(L_H) \Big) + 3 \Bigg(\sqrt{\frac{\log{\frac{4}{\delta}}}{2m_S}} + \sqrt{\frac{\log{\frac{4}{\delta}}}{2m_T}} \Bigg)
    \end{aligned}
\end{equation*}
where $\hat{\Re}_{\mathcal{B}}(L_H) (\mathcal{B} \in \{\mathcal{B}_S, \mathcal{B}_T\})$ denote the Rademacher complexity~\cite{mansour2009domain} over $\mathcal{B}$ and $L_H = \{ (\mathbf{x},y) \rightarrow \mathbb{I}[h(\mathbf{x})=y]: h\in \mathcal{H} \}$ be a class of functions mapping $\mathcal{Z}=\mathcal{X}\times \mathcal{Y}$ to $\{0,1\}$.
\end{lemma}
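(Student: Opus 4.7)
The plan is to recast both the true and empirical $\mathcal{C}$-divergences as a one-sided uniform deviation of the function class $L_H$, and then apply a standard empirical Rademacher concentration bound on each of the two domains separately, combining the pieces via a telescoping identity.

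First I would rewrite $d_\mathcal{C}$ in terms of $L_H$. The key observation is that the event $\{I(h), y=1\}\cup\{\overline{I(h)}, y=0\}$ is exactly $\{h(\mathbf{x})=y\}$, so, writing $\phi_h(\mathbf{x},y)=\mathbb{I}[h(\mathbf{x})=y]\in L_H$, both probabilities in Eq.~(\ref{definition_divergence}) reduce to $\mathbb{E}_{\mathcal{D}_S}[\phi_h]$ and $\mathbb{E}_{\mathcal{D}_T}[\phi_h]$. Symmetry of $\mathcal{H}$ (replacing $h$ by $1-h$ sends $\phi_h$ to $1-\phi_h$, which flips the sign of the difference) lets me drop the outer absolute value, giving
\[ d_\mathcal{C}(\mathcal{D}_S, \mathcal{D}_T) = \sup_{h\in \mathcal{H}} \bigl(\mathbb{E}_{\mathcal{D}_S}[\phi_h] - \mathbb{E}_{\mathcal{D}_T}[\phi_h]\bigr). \]
The same bookkeeping applied to Eq.~(\ref{divergence_estimation}) --- noting that the bracketed quantity $\hat{\epsilon}_S(h)+(1-\hat{\epsilon}_T(h))$ is already non-negative so its absolute value is redundant, and that $1-\min_h(\hat{\epsilon}_S(h)+1-\hat{\epsilon}_T(h)) = \max_h(\hat{\mathbb{E}}_{\mathcal{B}_S}\phi_h-\hat{\mathbb{E}}_{\mathcal{B}_T}\phi_h)$ --- yields the matching empirical form $d_\mathcal{C}(\hat{\mathcal{D}}_S, \hat{\mathcal{D}}_T)=\sup_{h\in\mathcal{H}}(\hat{\mathbb{E}}_{\mathcal{B}_S}[\phi_h]-\hat{\mathbb{E}}_{\mathcal{B}_T}[\phi_h])$.

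Next, since every $\phi\in L_H$ is $\{0,1\}$-valued, the standard empirical Rademacher inequality (one McDiarmid pass to concentrate $\mathbb{E}\to\hat{\mathbb{E}}$ around its expected symmetrised value, and a second to replace the expected Rademacher complexity by $\hat{\Re}$) yields, with probability at least $1-\delta/2$ over $\mathcal{B}_S$,
\[ \sup_{\phi\in L_H}\bigl(\mathbb{E}_{\mathcal{D}_S}\phi - \hat{\mathbb{E}}_{\mathcal{B}_S}\phi\bigr) \;\leq\; \hat{\Re}_{\mathcal{B}_S}(L_H) + 3\sqrt{\tfrac{\log(4/\delta)}{2 m_S}}, \]
and the symmetric statement for $\hat{\mathbb{E}}_{\mathcal{B}_T}\phi - \mathbb{E}_{\mathcal{D}_T}\phi$ with probability at least $1-\delta/2$ over $\mathcal{B}_T$; the $\log(4/\delta)$ arises from the $\delta/2$ split together with the McDiarmid failure budget. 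Union-bounding and then telescoping via
\[ \mathbb{E}_{\mathcal{D}_S}\phi_h - \mathbb{E}_{\mathcal{D}_T}\phi_h = \bigl(\mathbb{E}_{\mathcal{D}_S}\phi_h - \hat{\mathbb{E}}_{\mathcal{B}_S}\phi_h\bigr) + \bigl(\hat{\mathbb{E}}_{\mathcal{B}_S}\phi_h - \hat{\mathbb{E}}_{\mathcal{B}_T}\phi_h\bigr) + \bigl(\hat{\mathbb{E}}_{\mathcal{B}_T}\phi_h - \mathbb{E}_{\mathcal{D}_T}\phi_h\bigr), \]
taking $\sup_h$ on each of the three summands separately, and substituting the two Rademacher bounds together with the closed form of $d_\mathcal{C}(\hat{\mathcal{D}}_S,\hat{\mathcal{D}}_T)$ from the first step reproduces exactly the inequality stated in the lemma.

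The main obstacle will be the first step rather than the concentration step: one has to argue carefully that the outer absolute value can be dropped in both the population definition and the empirical estimator (using symmetry of $\mathcal{H}$ and the non-negativity inside the $\min$), and to recognise that the resulting function class is precisely $L_H$. Once the $\mathcal{C}$-divergence is written as $\sup_{\phi\in L_H}(\mathbb{E}_S\phi-\mathbb{E}_T\phi)$, the remainder is a textbook uniform-convergence argument.
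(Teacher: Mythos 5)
Your proposal is correct and follows essentially the same route as the paper: the paper likewise applies the one-sided Rademacher concentration bound to the class $L_H$ separately on each domain (each at confidence $1-\delta/2$, yielding $d_{\mathcal{C}}(\mathcal{D}_S,\hat{\mathcal{D}}_S)\leq \hat{\Re}_{\mathcal{B}_S}(L_H)+3\sqrt{\log(4/\delta)/(2m_S)}$ and its target analogue) and then combines them with the empirical divergence via the triangle inequality for $d_{\mathcal{C}}$, which is exactly your three-term telescoping of $\mathbb{E}_{\mathcal{D}_S}\phi_h-\mathbb{E}_{\mathcal{D}_T}\phi_h$. Your preliminary step --- identifying the event $\{I(h),y=1\}\cup\{\overline{I(h)},y=0\}$ with $\{h(\mathbf{x})=y\}$ and using symmetry of $\mathcal{H}$ to drop the absolute values --- is left implicit in the paper, so making it explicit is a welcome refinement rather than a departure.
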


\vspace{-2mm}
\section{Error Bounds with Empirical $\mathcal{C}$-divergence}\label{Guarantees}
\vspace{-2mm}
In this section, we provide the analysis of the error bounds using the empirical estimate of the proposed $\mathcal{C}$-divergence for various transfer learning settings.

\vspace{-2mm}
\subsection{Static Transfer Learning Scenario}
\vspace{-2mm}
When considering the transfer learning scenario with one source domain and one static target domain, we show that the expected target error is bounded in terms of the empirical estimate of the proposed $\mathcal{C}$-divergence and the empirical Rademacher complexity of function class $L_H$ as well as the number of labeled examples in both domains.
\begin{theorem}({Static Error Bound})\label{empirical_generalization_bound}
Assume the loss function $\mathcal{L}$ is bounded with $0\leq\mathcal{L} \leq M$. For $h\in \mathcal{H}$ and $\delta \in (0,1)$, with probability at least $1-\delta$ over $m_S$ examples $\mathcal{B}_S$ drawn from $\mathcal{D}_S$ and $m_T$ examples $\mathcal{B}_T$ drawn from $\mathcal{D}_T$, we have:
\begin{equation*} 
\small
    \begin{aligned}
    \epsilon_T(h) \leq \hat{\epsilon}_S(h) + M\Bigg(d_{\mathcal{C}}(\hat{\mathcal{D}}_S, \hat{\mathcal{D}}_T) + \hat{\Re}_{\mathcal{B}_S}(L_H) + \hat{\Re}_{\mathcal{B}_T}(L_H) + 3\sqrt{\frac{\log{\frac{8}{\delta}}}{2m_S}} + 3\sqrt{\frac{\log{\frac{8}{\delta}}}{2m_T}} + \sqrt{\frac{M^2\log{\frac{4}{\delta}}}{2m_S}} \Bigg)
    \end{aligned}
\end{equation*}
where $\hat{\epsilon}_S(h)$ denotes the empirical source error over finite data set $\mathcal{B}_S$.
\end{theorem}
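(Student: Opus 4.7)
The plan is to chain the population-level bound of Theorem \ref{generalization} with two empirical concentration arguments, each on a high-probability event, and finish with a union bound so the overall failure probability is at most $\delta$. Concretely, I would take $\epsilon_T(h) \le \epsilon_S(h) + M\, d_{\mathcal{C}}(\mathcal{D}_S, \mathcal{D}_T)$ from Theorem \ref{generalization} as the backbone, and then replace the two right-hand quantities by their empirical counterparts with controlled slack.

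For the source error, because $\mathcal{L}(h(\mathbf{x}), y) \in [0, M]$ and $\mathcal{B}_S$ consists of $m_S$ i.i.d.\ samples from $\mathcal{D}_S$, applying Hoeffding's inequality to the fixed hypothesis $h$ at confidence $1 - \delta/2$ gives $\epsilon_S(h) \le \hat{\epsilon}_S(h) + \sqrt{M^2 \log(4/\delta) / (2 m_S)}$, which is precisely the Hoeffding term appearing in the stated bound. Note that no uniform convergence over $\mathcal{H}$ is needed here since the theorem fixes $h$ in advance; all of the capacity-style work has been deferred into Lemma \ref{T: Rademacher_bound}.

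For the divergence term, I would invoke Lemma \ref{T: Rademacher_bound} with its confidence parameter rescaled from $\delta$ to $\delta/2$, so that $\log(4/(\delta/2)) = \log(8/\delta)$ appears inside each square root. This yields $d_{\mathcal{C}}(\mathcal{D}_S, \mathcal{D}_T) \le d_{\mathcal{C}}(\hat{\mathcal{D}}_S, \hat{\mathcal{D}}_T) + \hat{\Re}_{\mathcal{B}_S}(L_H) + \hat{\Re}_{\mathcal{B}_T}(L_H) + 3\sqrt{\log(8/\delta)/(2 m_S)} + 3\sqrt{\log(8/\delta)/(2 m_T)}$ with probability at least $1 - \delta/2$.

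A union bound then shows the two events hold simultaneously with probability at least $1 - \delta$; substituting both inequalities into the backbone from Theorem \ref{generalization} and distributing the factor $M$ through the divergence bound produces exactly the stated estimate. The only real work is bookkeeping---splitting the failure budget $\delta$ between the Hoeffding step and Lemma \ref{T: Rademacher_bound}, and verifying that the logarithmic arguments land on $\log(4/\delta)$ and $\log(8/\delta)$ respectively---because the genuinely delicate pieces are already encapsulated in Theorem \ref{generalization} and Lemma \ref{T: Rademacher_bound}. I do not anticipate any serious conceptual obstacle beyond these constant-tracking details.
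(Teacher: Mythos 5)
Your proposal matches the paper's proof, which is stated in one line as ``Combining Theorem \ref{generalization}, Lemma \ref{T: Rademacher_bound} and Lemma \ref{estimation_error}, the result can be derived''; you have simply made the $\delta/2$ budget split, the union bound, and the constant tracking explicit, and all the logarithmic arguments ($\log(4/\delta)$ for the Hoeffding step, $\log(8/\delta)$ for the rescaled divergence lemma) come out as stated. The only cosmetic mismatch is that your Hoeffding term $\sqrt{M^2\log(4/\delta)/(2m_S)}$ naturally sits outside the factor $M$, whereas the theorem places it inside the $M(\cdots)$ parenthesis; this is a quirk of the paper's statement rather than a gap in your argument.
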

\begin{remark}
  Compared with existing error bounds~\cite{ben2007analysis, mansour2009domain}, in Theorem \ref{empirical_generalization_bound}, the target error is bounded in terms of only data-dependent terms (e.g., empirical source error and $\mathcal{C}$-divergence), whereas existing error bounds are determined by the error terms that involve the intractable labeling function or optimal target hypothesis. In addition, we empirically show in Section \ref{Ex: empirical_bound} that our bound is much tighter than Rademacher complexity based bound in \cite{mansour2009domain}.
\end{remark}

\vspace{-2mm}
\subsection{Continuous Transfer Learning Scenario}
\vspace{-2mm}
Given a source domain and a time evolving target domain, continuous transfer learning aims to improve the target predictive function over $\mathcal{D}_{T_{t+1}}$ from the source domain and historical target domain. The error bound of continuous transfer learning is given by the following theorem.

\begin{theorem}({Continuous Error Bound})\label{continuous_transfer_learning_bound}
Assume the loss function $\mathcal{L}$ is bounded and $d_{\mathcal{C}}(\mathcal{D}_S, \mathcal{D}_{T_1}) \leq \Delta$, $d_{\mathcal{C}}(\mathcal{D}_{T_{i}}, \mathcal{D}_{T_{i+1}}) \leq \Delta$ for all $i=1,\cdots,n-1$ where $\Delta>0$. Then, for any $\delta>0$ and $h\in \mathcal{H}$, with probability at least $1-\delta$, the target domain error $\epsilon_{T_{t+1}}$ is bounded by
\begin{equation*}
    \begin{aligned}
        &\epsilon_{T_{t+1}}(h) \leq \frac{1}{t+1} \left( \hat{\epsilon}_S(h) + \sum_{i=1}^t \hat{\epsilon}_{T_i}(h) \right) + \frac{(t+2)M\Delta}{2} + \Tilde{\delta}
    \end{aligned}
\end{equation*}
where $\Tilde{\delta}=\frac{M}{t+1}\left( \sqrt{\frac{\log{\frac{2(t+1)}{\delta}}}{2m_S}} +\sum_{i=1}^t \sqrt{\frac{\log{\frac{2(t+1)}{\delta}}}{2m_{T_i}}} + \sqrt{\frac{2\log{\frac{2}{\delta}}}{m_{all}}} \right)$, $m_{all} = m_S + \sum_{i=1}^t m_{T_i}$ and $m_{T_i}$ is the number of labeled instances in $\mathcal{D}_{T_i}$.
\end{theorem}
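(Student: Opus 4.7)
The plan is to chain the static bound of Theorem~\ref{generalization} along the sequence $\mathcal{D}_S, \mathcal{D}_{T_1}, \ldots, \mathcal{D}_{T_{t+1}}$, average the resulting $t+1$ inequalities to exploit the slow per-step drift assumption, and then pass from population to empirical errors using Hoeffding's inequality with a carefully partitioned union bound. The shape of the stated bound, with the coefficient $(t+2)/2$ in front of $M\Delta$ and a $1/(t+1)$ average of empirical risks, points clearly to an arithmetic averaging argument.

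First I would establish the triangle inequality for $d_{\mathcal{C}}$. Writing $E_h = \{I(h), y{=}1\}\cup \{\overline{I(h)}, y{=}0\}$, for any three distributions $\mathcal{D}_A, \mathcal{D}_B, \mathcal{D}_C$ and any hypothesis $h$ the scalar triangle inequality gives $|P_A[E_h]-P_C[E_h]|\leq |P_A[E_h]-P_B[E_h]|+|P_B[E_h]-P_C[E_h]|$, and taking the supremum over $h\in\mathcal{H}$ yields $d_{\mathcal{C}}(\mathcal{D}_A,\mathcal{D}_C) \leq d_{\mathcal{C}}(\mathcal{D}_A,\mathcal{D}_B)+d_{\mathcal{C}}(\mathcal{D}_B,\mathcal{D}_C)$. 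Iterating this on the assumed one-step bounds gives $d_{\mathcal{C}}(\mathcal{D}_S,\mathcal{D}_{T_{t+1}})\leq (t+1)\Delta$ and $d_{\mathcal{C}}(\mathcal{D}_{T_i},\mathcal{D}_{T_{t+1}})\leq (t+1-i)\Delta$ for each $i\in\{1,\ldots,t\}$.

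Next I would apply Theorem~\ref{generalization} with $\mathcal{D}_{T_{t+1}}$ as the target and each of $\mathcal{D}_S,\mathcal{D}_{T_1},\ldots,\mathcal{D}_{T_t}$ in turn as the source, obtaining $t+1$ inequalities of the form $\epsilon_{T_{t+1}}(h) \leq \epsilon_j(h) + M\,d_{\mathcal{C}}(\mathcal{D}_j,\mathcal{D}_{T_{t+1}})$. Summing them and dividing by $t+1$, together with the arithmetic identity $(t+1)+\sum_{i=1}^t (t+1-i) = \frac{(t+1)(t+2)}{2}$, reduces the total divergence contribution to $\frac{(t+2)M\Delta}{2}$ and yields the population-level bound $\epsilon_{T_{t+1}}(h) \leq \frac{1}{t+1}\bigl(\epsilon_S(h)+\sum_{i=1}^t \epsilon_{T_i}(h)\bigr) + \frac{(t+2)M\Delta}{2}$.

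The final step is to replace the population errors by their empirical counterparts. For each of the $t+1$ domains, Hoeffding's inequality applied to the loss (bounded by $M$) yields $\epsilon_j(h)\leq \hat{\epsilon}_j(h)+M\sqrt{\log(2(t+1)/\delta)/(2m_j)}$ with confidence $1-\delta/(2(t+1))$, and a union bound over the $t+1$ domains absorbs $\delta/2$ of the failure probability; after carrying the $1/(t+1)$ prefactor through, this produces the first two groups of terms in $\Tilde{\delta}$. The residual $\frac{M}{t+1}\sqrt{2\log(2/\delta)/m_{all}}$ comes from a separate concentration step on the pooled collection of $m_{all}$ independent labeled examples, spending the remaining $\delta/2$ of confidence. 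The main obstacle will be the confidence budgeting: matching the exact constants $\delta/(2(t+1))$ and $\delta/2$ in the statement requires choosing the per-event failure probabilities precisely so that all Hoeffding events hold simultaneously with probability at least $1-\delta$. The triangle inequality chaining and the averaging identity are essentially mechanical once the concentration piece is in place.
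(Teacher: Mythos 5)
Your proposal follows essentially the same route as the paper's proof: the triangle inequality for $d_{\mathcal{C}}$ chained over time stamps, averaging $t+1$ applications of Theorem~\ref{generalization} with the identity $(t+1)+\sum_{i=1}^t(t+1-i)=\frac{(t+1)(t+2)}{2}$, and per-domain Hoeffding bounds combined via a $\delta/(2(t+1))$ union bound. The one detail you leave unspecified --- the origin of the $\frac{M}{t+1}\sqrt{2\log(2/\delta)/m_{all}}$ term --- is obtained in the paper by applying McDiarmid's bounded-differences inequality to $g(\mathcal{B}_{S,T})=\epsilon_{T_{t+1}}(h)-\frac{1}{t+1}\bigl(\hat{\epsilon}_S(h)+\sum_{i=1}^t\hat{\epsilon}_{T_i}(h)\bigr)$ over the pooled sample of size $m_{all}$, which is exactly the pooled concentration step spending the remaining $\delta/2$ that you describe.
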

This theorem states that the expected error of target domain at the $(t+1)^{\text{th}}$ time stamp is bounded by the historical estimated classification errors and the $\mathcal{C}$-divergence of the target domain between any consecutive time stamps as well as the $\mathcal{C}$-divergence between the source and initial target domains. 

\vspace{-2mm}
\subsection{Negative Transfer}\label{negative_transfer_section}
\vspace{-2mm}
Informally, negative transfer is considered as the situation where transferring knowledge from the source domain has a negative impact on the target learner~\cite{wang2019characterizing}: $\epsilon_T(A(\mathcal{D}_S, \mathcal{D}_T)) > \epsilon_T(A(\emptyset, \mathcal{D}_T))$
where $A$ is the learning algorithm. $\epsilon_T$ is the target error induced by this algorithm $A$. $\emptyset$ implies that it only considers the target data set for target learner.
Thus, in this paper, we define a \textbf{\em transfer signature} to measure the transferability from the source domain to the target domain as follows.
\begin{equation}
    TS(\mathcal{D}_T || \mathcal{D}_S)) = \inf_{A\in \mathcal{G}} \left( \epsilon_T\left(A(\mathcal{D}_S, \mathcal{D}_T)\right) - \epsilon_T\left(A(\emptyset, \mathcal{D}_T)\right) \right)
\end{equation}
where $\mathcal{G}$ is the set of all learning algorithms. We state that source domain knowledge is not transferable over target domain when $TS(\mathcal{D}_T || \mathcal{D}_S)) > 0$. Specially, since $A(\mathcal{D}_S, \mathcal{D}_T)$ learns an optimal classifier using both source and target data, we can define $\epsilon_T(A(\mathcal{D}_S, \mathcal{D}_T)) = \epsilon_T(h_{\alpha}^*)$
where $h_{\alpha}^*= \arg\min_{h\in \mathcal{H}(A)} \alpha \epsilon_T(h) + (1-\alpha) \epsilon_S(h)$ and $\mathcal{H}(A)$ is the hypothesis space induced by $A$. When we only consider the target domain with $\alpha=1$, $\epsilon_T(A(\emptyset, \mathcal{D}_T)) = \epsilon_T(h_{T}^*)$ where $h_{T}^* = \arg\min_{h\in \mathcal{H}(A)} \epsilon_T(h)$. Then we have the following theorem regarding the transfer signature.

\begin{theorem}({Transfer Signature Bound})\label{T: semi_bound}
Assume the loss function $\mathcal{L}$ is bounded with $0\leq\mathcal{L} \leq M$, we have
\begin{equation*}
\begin{aligned}
    \epsilon_T({h}_{\alpha}^*) &\leq \epsilon_T(h_T^*) + 2(1-\alpha) M d_{\mathcal{C}}(\mathcal{D}_S, \mathcal{D}_T)
\end{aligned}
\end{equation*}
Furthermore,
\begin{equation*}
    TS(\mathcal{D}_T || \mathcal{D}_S)) \leq  2(1-\alpha) M d_{\mathcal{C}}(\mathcal{D}_S, \mathcal{D}_T)
\end{equation*}
\end{theorem}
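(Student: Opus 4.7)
The plan is to deduce the transfer signature bound from the first inequality, and to obtain the first inequality by combining the optimality of $h_\alpha^*$ with respect to the mixed risk $J_\alpha(h) := \alpha \epsilon_T(h) + (1-\alpha)\epsilon_S(h)$ and a two-sided version of Theorem \ref{generalization}. The conceptual picture is that for any hypothesis $h$, the mixture $J_\alpha(h)$ differs from $\epsilon_T(h)$ by a $(1-\alpha)$-weighted source/target gap, which the $\mathcal{C}$-divergence controls uniformly.

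First I would observe that Theorem \ref{generalization}, together with the symmetry of the supremum in the definition of $d_\mathcal{C}(\mathcal{D}_S,\mathcal{D}_T)$ in Eq.~(\ref{definition_divergence}), yields the two-sided estimate
\begin{equation*}
|\epsilon_T(h) - \epsilon_S(h)| \leq M\, d_\mathcal{C}(\mathcal{D}_S,\mathcal{D}_T) \qquad \text{for every } h\in\mathcal{H}(A).
\end{equation*}
From this, $|\epsilon_T(h) - J_\alpha(h)| = (1-\alpha)\,|\epsilon_T(h)-\epsilon_S(h)| \leq (1-\alpha) M\, d_\mathcal{C}(\mathcal{D}_S,\mathcal{D}_T)$ for every $h$.

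Next I would carry out the standard three-step sandwich argument. Write $d := d_\mathcal{C}(\mathcal{D}_S,\mathcal{D}_T)$. Then
\begin{equation*}
\epsilon_T(h_\alpha^*) \;\leq\; J_\alpha(h_\alpha^*) + (1-\alpha)M d \;\leq\; J_\alpha(h_T^*) + (1-\alpha)M d \;\leq\; \epsilon_T(h_T^*) + 2(1-\alpha)M d,
\end{equation*}
where the first and third inequalities come from the previous paragraph and the middle inequality uses the definition of $h_\alpha^*$ as a minimizer of $J_\alpha$ over $\mathcal{H}(A)$. This establishes the first stated inequality.

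Finally, for the transfer signature bound, I would use that the infimum over $\mathcal{G}$ is no larger than the value attained by any particular learner. Consider the algorithm $A$ that, on input $(\mathcal{D}_S,\mathcal{D}_T)$, returns $h_\alpha^*$ and, on input $(\emptyset,\mathcal{D}_T)$, returns $h_T^*$. Then $\epsilon_T(A(\mathcal{D}_S,\mathcal{D}_T)) - \epsilon_T(A(\emptyset,\mathcal{D}_T)) = \epsilon_T(h_\alpha^*) - \epsilon_T(h_T^*) \leq 2(1-\alpha)M d$ by the first part, and taking infimum over $\mathcal{G}$ preserves the inequality. The only subtle step is justifying the two-sided divergence bound used in the first paragraph, i.e., making precise that Theorem \ref{generalization} applies equally with the roles of $\mathcal{D}_S$ and $\mathcal{D}_T$ reversed because $d_\mathcal{C}$ is symmetric in its arguments; everything else is bookkeeping.
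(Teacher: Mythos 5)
Your proposal is correct and follows essentially the same route as the paper's own proof: the two-sided bound $|\epsilon_\alpha(h)-\epsilon_T(h)|\leq(1-\alpha)M\,d_{\mathcal{C}}(\mathcal{D}_S,\mathcal{D}_T)$, the three-step sandwich through the optimality of $h_\alpha^*$ for the mixed risk, and bounding the infimum over $\mathcal{G}$ by a particular learner. Your explicit justification of the two-sided gap via the symmetry of $d_{\mathcal{C}}$ is a point the paper only asserts as ``easy to show,'' but it is the same argument.
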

\begin{remark}
  Intuitively, we have the following observations: (1) Larger $\mathcal{C}$-divergence between domains is often associated with a higher transfer signature, which indicates that negative transfer can be characterized using the proposed $\mathcal{C}$-divergence; (2) Empirically, the larger amount of labeled target data could increase the value of $\alpha$, thus lead to the learned classifier relying more on the target data, which is consistent with the observation in \cite{wang2019characterizing}. One extreme case is that $\alpha=1$ implies we have adequate labeled target examples for standard supervised learning on the target domain without transferring knowledge from the source domain.
\end{remark}

\vspace{-2mm}
\section{Proposed Framework}\label{model}
\vspace{-2mm}
In this section, we present an adversarial Variational Auto-encoder (VAE) framework based on our proposed label-informed domain discrepancy.

\vspace{-2mm}
\subsection{Label-informed Adversarial VAE}\label{LI-VAE}
\vspace{-2mm}
We first consider the static transfer learning setting. In our framework (illustrated in Figure \ref{overview} in the appendices), we aim to learn a domain-invariant latent representation for both source and target domains such that the data distributions $p_S(\mathbf{x}, y)$ and $p_T(\mathbf{x}, y)$ could be well aligned in the latent feature space. Following the semi-supervised VAE~\cite{kingma2014semi}, we propose to learn the latent feature space by maximizing the following likelihood on both the source and target domains.
\begin{equation}
    \begin{aligned}
        \log{p_{\theta}(\mathbf{x},y)} &= KL \big(q_{\phi}(\mathbf{z}|\mathbf{x},y) || p_{\theta}(\mathbf{z}|\mathbf{x},y) \big) + \mathbb{E}_{q_{\phi}(\mathbf{z}|\mathbf{x},y)} [\log{p_{\theta}(\mathbf{x}, y, \mathbf{z})} - \log{q_{\phi}}(\mathbf{z}|\mathbf{x},y)]
    \end{aligned}
\end{equation}
where $\phi$ and $\theta$ are the learnable parameters in the encoder and decoder phases respectively. The evidence lower bound (ELBO), a lower bound on this log-likelihood, can be written as follows.
\begin{equation}
    \begin{aligned}
        &\mathcal{L}_{\theta, \phi}(\mathbf{x},y)
        = \mathbb{E}_{q_{\phi}(\mathbf{z}|\mathbf{x},y)} \left[\log{p_{\theta}(\mathbf{x},y|\mathbf{z})} \right] - KL \left(q_{\phi}(\mathbf{z}|\mathbf{x},y) || p(\mathbf{z}) \right)
    \end{aligned}
\end{equation}
where $\mathcal{L}_{\theta, \phi}(\mathbf{x},y) \leq \log{p_{\theta}(\mathbf{x},y)}$. Similarly, we have the following ELBO to maximize the log-likelihood of $p_{\theta}(\mathbf{x})$ when the label is not available:
\begin{equation}\label{eq:unlab_elbo}
        \mathcal{U}_{\theta, \phi}(\mathbf{x},y) = \sum_{y} \left(q_{\phi}(y|\mathbf{x})\cdot \mathcal{L}_{\theta, \phi}(\mathbf{x},y) \right) - \mathbb{E}_{q_{\phi}(y|\mathbf{x})} \left[ \log q_{\phi}(y|\mathbf{x}) \right]
\end{equation}
where $p_{\theta}(\mathbf{x},y, \mathbf{z})=p_{\theta}(\mathbf{x}| y, \mathbf{z})p_{\theta}(y|\mathbf{z})p(\mathbf{z})$ with prior Gaussian distribution $p(\mathbf{z})=\mathcal{N}(\mathbf{0}, \mathbf{I})$.

In our framework, we propose to minimize the following objective function:
\begin{equation}\label{static_objective}
\small
        \mathcal{J}(S, T) = \underbrace{\sum_{i=1}^{m_S+m_T} \mathcal{L}_{clc} \left( y_i, q_{\phi}(\cdot|\mathbf{x}_i) \right)}_{\text{Classification error}} + \underbrace{d_{\mathcal{C}}\left(\hat{\mathcal{D}}_S, \hat{\mathcal{D}}_T\right)}_{\text{Estimated $\mathcal{C}$-divergence}} \nonumber - \lambda \underbrace{\left(\sum_{i=1}^{m_S+m_T} \mathcal{L}_{\theta, \phi}(\mathbf{x}_i,y_i) + \sum_{i=1}^{u_T} \mathcal{U}_{\theta, \phi}(\mathbf{x}_i,y_i) \right)}_{\text{ELBO on source and target examples}}
\end{equation}
where $u_T$ is the number of unlabeled training examples in the target domain, $q_{\phi}(\cdot)$ is the discriminative classifier formed by the distribution $q_{\phi}(y|\mathbf{x})$ in Eq.~(\ref{eq:unlab_elbo}) and $\lambda>0$ is a hyper-parameter. The first term $\mathcal{L}_{clc}$ is the cross-entropy loss function on labeled source and target examples. With the second term $d_{\mathcal{C}}(\hat{\mathcal{D}}_S, \hat{\mathcal{D}}_T)$, we aim to minimize the label-informed domain discrepancy in the latent feature space learned by maximizing the ELBO on the source and target domains.

We define $\Tilde{h}$ to be a two-dimensional characteristic function with $\Tilde{h}(\mathbf{x}, y)=1 \Leftrightarrow h(\mathbf{x})=y \Leftrightarrow \{h(\mathbf{x})=1, y=1\}\vee\{h(\mathbf{x})=0, y=0\}$ for $h\in \mathcal{H}$. Then the empirical $\mathcal{C}$-divergence in Eq.~(\ref{divergence_estimation}) can be rewritten as follows.
\begin{equation}\label{joint_divergence_estimation}
    d_{\mathcal{C}}(\hat{\mathcal{D}}_S, \hat{\mathcal{D}}_T) = 1 - \min_{\Tilde{h}} \Big| \frac{1}{m_S}\sum_{(\mathbf{x},y):\Tilde{h}(\mathbf{x}, y)=0}\mathbb{I}[(\mathbf{x},y)\in \hat{\mathcal{D}}_S] + \frac{1}{m_T}\sum_{(\mathbf{x},y):\Tilde{h}(\mathbf{x}, y)=1}\mathbb{I}[(\mathbf{x},y)\in \hat{\mathcal{D}}_T] \Big|
\end{equation}
Intuitively, by re-labeling each source example $(\mathbf{x},y)$ as 0 and target example as 1, the empirical $\mathcal{C}$-divergence can be derived by minimizing the domain classification accuracy with the hypothesis $\Tilde{h}$ because $\{(\mathbf{x},y):\Tilde{h}(\mathbf{x}, y)=0\}$ and $\{(\mathbf{x},y):\Tilde{h}(\mathbf{x}, y)=1\}$ indicate the correctly classified domain examples. Here, we adopted the domain-adversarial classifier~\cite{ganin2016domain} to calculate the empirical $\mathcal{C}$-divergence. Specifically, by mapping the example $(\mathbf{x},y)$ into a label-informed latent features $\mathbf{z}$, a domain classifier is trained to identify whether an example comes from the source or target domain in the latent feature space.

However, there are two limitations when applying our framework to real transfer learning scenarios: (1) It is difficult to estimate the $\mathcal{C}$-divergence with little labeled target examples when $m_S \gg m_T$; (2) When learning the latent feature $\mathbf{z}$, combining the data $\mathbf{x}$ (e.g., one image) and class-label $y$ directly might lead to over-emphasizing the data itself due to its high dimensionality compared to $y$. To mitigate these problems, we propose the following {\it Pseudo-label Inference}, i.e., we infer the pseudo labels of unlabeled examples using the classifier $q_{\phi}(y|\mathbf{x})$ for each training epoch. Using labeled source and target examples as well as unlabeled target examples with inferred pseudo labels, the $\mathcal{C}$-divergence could be estimated in a balanced setting. Furthermore, to enforce the compatibility between features $\mathbf{x}$ and label $y$, we adopt a pre-encoder step to learn a dense representation for the input $\mathbf{x}$, and then learn the label-informed latent feature $\mathbf{z}$.

\vspace{-2mm}
\subsection{Learning Time Evolving Target Domain}
\vspace{-2mm}
For continuous transfer learning, we leverage both the source domain and historical target domain data to learn the predictive function for the current time stamp. In other words, the objective function for the target domain $\mathcal{D}_{T_{t+1}}$ can be defined as $\mathcal{J}(T_{t}, T_{t+1})$. In this paper, we present an iterative optimization method to learn the optimal predictive function on target domain $\mathcal{D}_{T_{t+1}}$.

\begin{wrapfigure}{r}{0.6\textwidth}
    \begin{minipage}{0.6\textwidth}
    \vspace{-6mm}
      \setlength{\textfloatsep}{4pt}
\begin{algorithm}[H]
   \caption{Continuous Transfer Learning (\model)}
   \label{algorithm:CTL}
\begin{algorithmic}[1]
    \STATE \textbf{Input:} Source domain $\mathcal{D}_S$, evolving target domain $\{\mathcal{D}_{T_i}\}_{i=1}^t$, hyper-parameter $\lambda$.
    \STATE \textbf{Output:} Predictive function on $(t+1)^{\text{th}}$ target domain.
    \FOR{$i$ {\bfseries in} $[0, 1,\cdots,t]$}
    \IF{$i=0$}
        \STATE Minimizing the $\mathcal{J}(S, T_1)$ using Eq. (\ref{static_objective})
    \ELSE
        \STATE Generate pseudo-labels on unlabeled data in $\mathcal{D}_{T_i}$ using learned $q_{\phi}(y|\mathbf{x})$
        \STATE Minimizing the $\mathcal{J}(T_i, T_{i+1})$ using Eq. (\ref{static_objective})
    \ENDIF
    \STATE Output predictive function $q_{\phi}(y|\mathbf{x})$ on $\mathcal{D}_i$
    \ENDFOR
\end{algorithmic}
\end{algorithm}
    \end{minipage}
\end{wrapfigure}
As illustrated in Algorithm \ref{algorithm:CTL}, we first learn the predictive function on the first target domain using the knowledge from source domain by minimizing the objective function $\mathcal{J}(S, T_1)$ (shown in Step 5). Then it predicts the labels for the target domain at time stamp 1, which will be used to learn the predictive function for the target domain at time stamp 2 (shown in Step 7-8). Repeat this procedure until the predictive function on $(t+1)^{\text{th}}$ target domain is optimized. This allows us to optimize the predictive function at any time stamp using the knowledge from source domain and historical target domain.

The transferability from $S$ to $T_{t+1}$ could be identified using the empirical $\mathcal{C}$-divergence between source and target domains. It can be seen that (1) when the examples are indistinguishable for domain classifier, the empirical $\mathcal{C}$-divergence in Eq.~(\ref{joint_divergence_estimation}) would be small, which indicates the high transferability between domains; (2) on the other hand, when they are highly domain-separable, the empirical $\mathcal{C}$-divergence would be large, which might significantly enlarge the transfer signature based on Theorem \ref{T: semi_bound}, thus leading to negative transfer between source domain $S$ and target domain $T_{t+1}$.

\vspace{-2mm}
\section{Experimental Results}\label{Experiments}
\vspace{-2mm}
\subsection{Experiment Setup}
\vspace{-2mm}
\textbf{Synthetic Data:} We generate a simple synthetic data set to validate the label-informed distribution alignment. For this data set, each domain has 1000 positive examples and 1000 negative examples randomly generated from Gaussian distributions $\mathcal{N}([1.5\cos{\theta}, 1.5\sin{\theta}]^T, 0.5 \cdot \mathbf{I}_{2\times 2})$ and $\mathcal{N}([1.5\cos{(-\theta)}, 1.5\sin{(-\theta)}]^T, 0.5 \cdot \mathbf{I}_{2\times 2})$, respectively. We let $\theta=0$ for source domain (denoted as $S1$), and $\theta= \frac{i\cdot \pi}{n} (i=1,\cdots,n)$ for the time evolving target domain with $n=8$ time stamps (denoted as T1, $\cdots$, T8).

\textbf{Real Data:} We used the publicly available data sets: MNIST and SVHN. For a pair of SVHN and MNIST, we take the SVHN as source domain, and construct the time evolving target domain using MNIST. More specifically, we add the adversarial noise to the original MNIST images where the adversarial noise is learned by Fast Gradient Sign Method (FGSM)~\cite{goodfellow2014explaining}. The perturbation magnitude $\omega$ varies from 0.0 to 0.50 with an interval of 0.05, and the generated target domain at different time stamps are denoted as T1, $\cdots$, T11 respectively. For each time stamp in target domain, the number of labeled target training examples is set as 100.

\textbf{Baselines:} The baseline methods in our experiments are as follows. (1) SourceOnly: training with only source data; (2) {TargetOnly}: training with only target data assuming all the training target data are labeled. (3) TargetERM: empirical risk minimization (ERM) on only target domain; (4) CORAL~\cite{sun2016return}, DANN~\cite{ganin2016domain}, ADDA~\cite{tzeng2017adversarial}, WDGRL~\cite{shen2017wasserstein} and DIFA~\cite{volpi2018adversarial}: training with feature distribution alignment. (5) \model{}: training with label-informed distribution alignment on the evolving target domain. (6) \model{}\_p: a one-time transfer learning variant of \model{} that directly transfers from source domain to current time stamp in target domain. We fix $\lambda=0.05$ for all experiments, and all the methods use the same neural network architecture for feature extraction (or pre-encoding). Please see Section~\ref{appendix_ex} for more experiments in the appendices.

\vspace{-2mm}
\subsection{Evaluation of $\mathcal{C}$-divergence}\label{Ex: empirical_evaluation}
\vspace{-2mm}

\begin{wrapfigure}{r}{0.5\textwidth}
  \vspace{-7mm}
  \begin{center}
    \includegraphics[width=0.5\textwidth]{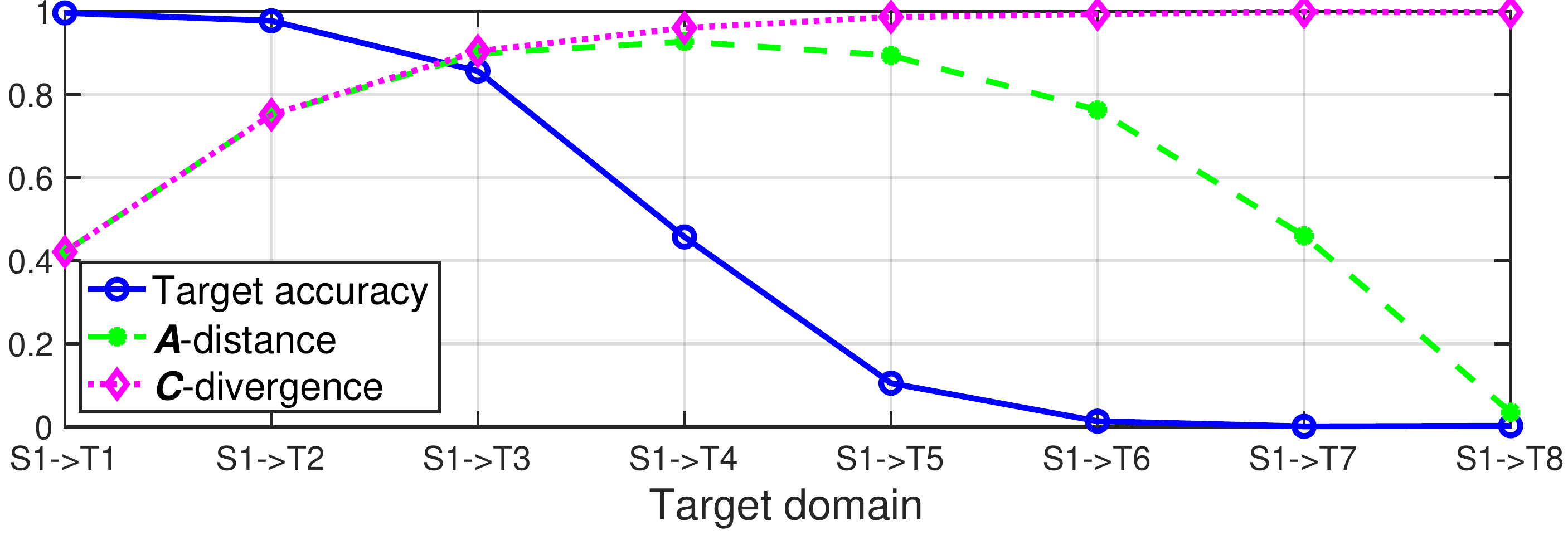}
  \end{center}
  \vspace{-5mm}
  \caption{Comparison of domain discrepancy and target accuracy}
  \label{fig:syn}
  \vspace{-2mm}
\end{wrapfigure}
 We compare the proposed $\mathcal{C}$-divergence with conventional domain discrepancy measure $\mathcal{A}$-distance~\cite{ben2007analysis} on a synthetic data set with an evolving target domain. We assumed the hypothesis space $\mathcal{H}$ to be consisting of the linear classifiers in feature space. Figure \ref{fig:syn} shows the domain discrepancy and target classification accuracy for each pair of source and target domains. We have the following observations. (1) The classification accuracy on the target domain significantly decreases from target domain T1 to target domain T8. One explanation is that the joint distribution $p(x,y)$ on the time evolving target domain is gradually shifted. (2) The $\mathcal{A}$-distance increases from S1$\rightarrow$T1 to S1$\rightarrow$T4, and then decreases from S1$\rightarrow$T4 to S1$\rightarrow$T8. That is because it only estimates the difference of the marginal data distribution $p(x)$ between source and target domains. (3) The $\mathcal{C}$-divergence keeps increasing from S1$\rightarrow$T1 to S1$\rightarrow$T8, which indicates the decreasing of task relatedness between the source domain and the target domain. Therefore, it provides an insight into avoiding the negative transfer by minimizing the $\mathcal{C}$-divergence between source and target domains in the latent feature space.

\vspace{-2mm}
\subsection{Evaluation of Error Bound}\label{Ex: empirical_bound}
\vspace{-2mm}

\begin{wrapfigure}{r}{0.5\textwidth}
  \vspace{-7mm}
  \begin{center}
    \includegraphics[width=0.5\textwidth]{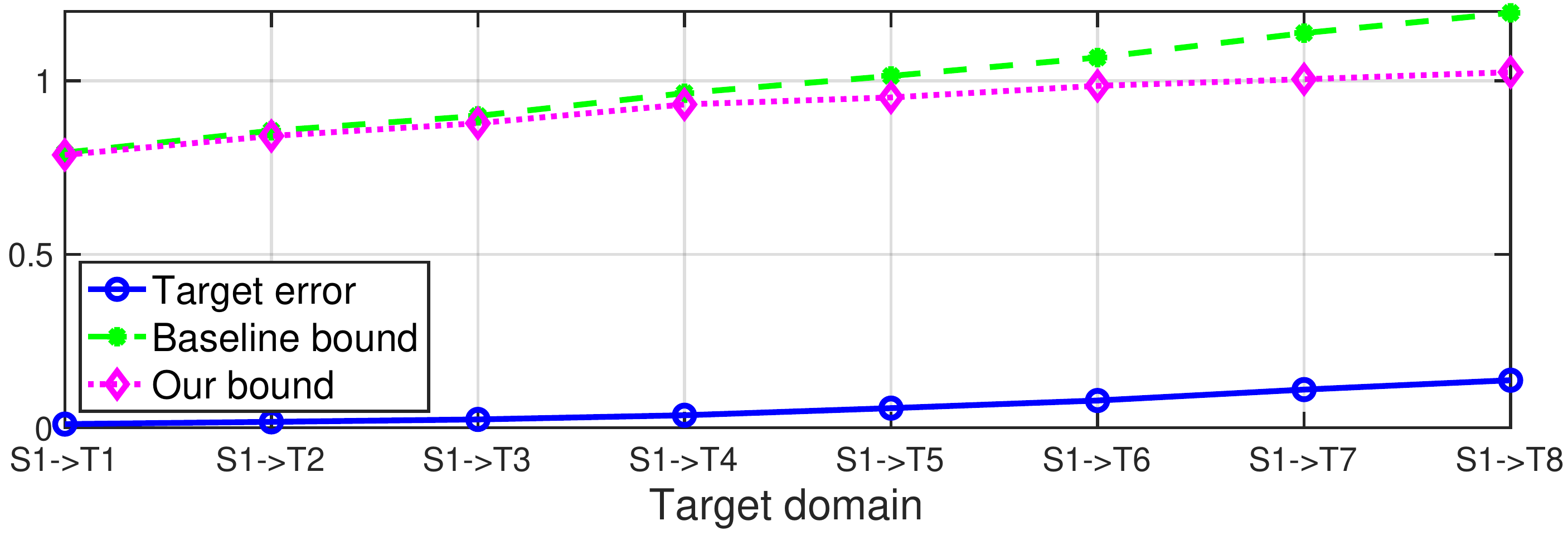}
  \end{center}
  \vspace{-5mm}
  \caption{Comparison of error bounds}
  \label{fig:bound}
  \vspace{-2mm}
\end{wrapfigure}
We empirically evaluate our derived error bound in Theorem \ref{empirical_generalization_bound} compared to the Rademacher complexity based error bound in \cite{mansour2009domain} (shown in Theorem \ref{baseline_bound} for being self-contained). In our experiments, we used the 0-1 loss function as $\mathcal{L}$ and assumed the hypothesis space $\mathcal{H}$ to be consisting of the linear classifiers in feature space. Figure \ref{fig:bound} shows the estimated error bounds and target error with the time evolving target domain (i.e., S1$\rightarrow$T1, $\cdots$, S1$\rightarrow$T8 in a new synthetic data set with a slower time evolving target domain to ensure that the baseline bound is meaningful most of the time) where we choose $h=h^*_S$. It demonstrates that our derived error bound is much tighter than the baseline. We would like to point out that when transferring source domain S1 to target domain T8, our error bound is largely determined by the $\mathcal{C}$-divergence, whereas the baseline is determined by the difference between the optimal source and target hypothesizes. Furthermore, given any hypothesis $h\in \mathcal{H}$, the baseline might not be able to estimate the error bound when the optimal hypothesis is not available.

\begin{table*}[t]
\vspace{-5mm}
\small
\centering
\setlength\tabcolsep{2.9pt}
\caption{Transfer learning accuracy from SVHN (source) to continuously evolving MNIST (target)}\label{svhn_mnist}
\begin{tabular}{|l|c|c|c|c|c|c|c|c|c|c|c|}
\hline
Target Domain & T1              & T2              & T3              & T4              & T5              & T6              & T7              & T8              & T9              & T10             & T11             \\ \hline\hline
SourceOnly    & 0.6998          & 0.6738          & 0.6336          & 0.5692          & 0.4747          & 0.4110          & 0.3087          & 0.2220          & 0.1481          & 0.0828          & 0.0764          \\ \hline
TargetOnly    & 0.9887          & 0.9918          & 0.9974          & 0.9976          & 0.9975          & 0.9976          & 0.9975          & 0.9971          & 0.9969          & 0.9972          & 0.9974          \\ \hline\hline
TargetERM & 0.7451 & 0.6997 & 0.6618 & 0.6314 & 0.6368 & 0.6359 & 0.6695 & 0.7133 & 0.7214 & 0.7450 & 0.7512 \\ \hline
CORAL~\cite{sun2016return}         & 0.8349          & 0.8410          & 0.7633          & 0.7063          & 0.6496          & 0.5900          & 0.5031          & 0.5101          & 0.4337          & 0.4156          & 0.4502          \\ \hline
DANN~\cite{ganin2016domain}          & 0.8666          & 0.8356          & 0.8018          & 0.7529          & 0.7309          & 0.6641          & 0.6614          & 0.5618          & 0.5204          & 0.5082          & 0.4594          \\ \hline
ADDA~\cite{tzeng2017adversarial}          & 0.8667          & 0.8487          & 0.7982          & 0.7187          & 0.6804          & 0.5397          & 0.4366          & 0.3473          & 0.2636          & 0.1659          & 0.1259          \\ \hline
WDGRL~\cite{shen2017wasserstein}         & 0.8990          & 0.8602          & 0.8247          & 0.8222          & 0.7452          & 0.6877          & 0.6481          & 0.5896          & 0.5145          & 0.4952          & 0.5196          \\ \hline
DIFA~\cite{volpi2018adversarial}          & 0.9164          & 0.8993          & 0.8713          & 0.8273          & 0.7935          & 0.6661          & 0.5956          & 0.4381          & 0.3479          & 0.2448          & 0.1332          \\ \hline\hline
TransLATE\_p  & 0.9621          & 0.9213          & 0.8977          & 0.8901          & 0.8274          & 0.8145          & 0.7360          & 0.7256          & 0.6199          & 0.6774          & 0.7009          \\ \hline
TransLATE     & \textbf{0.9621} & \textbf{0.9575} & \textbf{0.9520} & \textbf{0.9480} & \textbf{0.9488} & \textbf{0.9430} & \textbf{0.9389} & \textbf{0.9420} & \textbf{0.9453} & \textbf{0.9531} & \textbf{0.9663} \\ \hline
\end{tabular}
\vspace{-5mm}
\end{table*}

\vspace{-2mm}
\subsection{Evaluation of Continuous Transfer Learning}\label{experiments_CTL}
\vspace{-2mm}
Table \ref{svhn_mnist} provides the continuous transfer learning results on digital data sets where the classification accuracy on target domain is reported (the best results are indicated in bold). It is observed that (1) the classification accuracy using SourceOnly algorithm significantly decreased on the evolving target domain due to the shift of joint data distribution $p(\mathbf{x},y)$ on target domain; (2) TargetOnly achieves the satisfactory results, which indicates that the generated evolving target domain keeps highly class-separable; (3) transfer learning algorithms outperform TargetERM for T1-T5, whereas negative transfer might happen for T6-T11 when data distribution between source and current target tasks are largely shifted; (4) \model{} significantly outperformed \model{}\_p as well as other baseline algorithms on target domain because the historical target domain knowledge allows to smoothly re-align the target distribution when the change of target domain data distribution in consecutive time stamps is small.

\vspace{-2mm}
\section{Related Work}\label{related_work}
\vspace{-2mm}
{\bf Transfer Learning:}
Transfer learning~\cite{pan2009survey,ying2018transfer,jang2019learning} improves the performance of a learning algorithm on the target domain by using the knowledge from the source domain. Theoretically, it is proven that the target error is bounded by the source error, domain discrepancy and labeling difference between the source and target domains~\cite{ben2007analysis,ben2010theory, mansour2009domain, zhang2019bridging}, followed by a lot of practical transfer learning algorithms~\cite{ganin2016domain,shen2017wasserstein,long2017deep,long2018conditional,saito2018maximum,chen2019transferability} with covariate shift assumption. However, it is observed that this assumption does not always hold in real-world scenarios~\cite{rosenstein2005transfer, zhao2019learning, johansson2019support,wang2019characterizing}. In this paper, we proposed to study the transferability between a source domain and a time evolving target domain via label-informed $\mathcal{C}$-divergence. Besides, compared to domain divergences in~\cite{mohri2012new,zhang2012generalization}, our proposed $\mathcal{C}$-divergence is derived from the perspective of measurable set matching, thus shedding light on the empirical estimate of label-informed domain discrepancy from finite samples in practice.

{\bf Continual Learning:} 
Continual lifelong learning~\cite{ruvolo2013ella,rannen2017encoder,parisi2019continual,rusu2016progressive,hoffman2014continuous, bobu2018adapting} involves the sequential learning tasks with the goal of learning a predictive function on the new task using knowledge from historical tasks. Most of them focused on mitigating catastrophic forgetting when learning new tasks from only one domain, whereas our work studied the transferability between a source domain and a time evolving target domain. Besides, little work has been devoting to characterizing the potential negative transfer induced by a source domain and a time evolving target domain.

\vspace{-2mm}
\section{Conclusion}\label{conclusion}
\vspace{-2mm}
In this paper, we study a novel continuous transfer learning setting with a time evolving target domain. We start by proposing a label-informed $\mathcal{C}$-divergence to measure the domain discrepancy induced by the shift of joint data distributions. Then we provide the error bounds of continuous transfer learning in terms of the empirical $\mathcal{C}$-divergence, and characterize the negative transfer which might appear due to the cumulative change of the target domain. Following the theoretical analysis, we propose a generic adversarial Variational Auto-encoder framework named \model{} for continuous transfer learning. Extensive experiments on both synthetic and real data sets demonstrate the effectiveness of our \model{} framework.

\clearpage
\section*{Broader Impact}

This work focused on providing a theoretical analysis on the continuous transfer learning problem, followed by a practical continuous transfer learning framework. Generally speaking, the communities working on inferring the object's behavior from historical data might benefit from our paper. However, one common ethical concern for leveraging historical data is privacy and security. The malicious manipulation on historical data might provide deceitful and misleading information on understanding the object's behavior. In this paper, we characterize the negative transfer using our proposed $\mathcal{C}$-divergence, thereby leading to identify whether such malicious manipulations happen or not.



\bibliographystyle{plain}
\bibliography{reference}

\clearpage
\appendix
\section{Appendices}
To better reproduce the experimental results, we provide additional details here.

\subsection{Notation}
The main notation used in this paper is summarized in Table \ref{tab: notations}.

\begin{table}[htp]
\centering
\caption{Notation} \label{tab: notations}
\begin{tabular}{|c|l|}
\hline
\multicolumn{1}{|c|}{Notation} & \multicolumn{1}{c|}{Definition}         \\ \hline\hline
$\mathcal{X}, \mathcal{Y}$, $\mathcal{Z}$ & Input space, class space, latent feature space \\ \hline
$\mathcal{D}_S$, $\{\mathcal{D}_{T_t}\}_{t=1}^n$ & Source domain and evolving target domain \\ \hline
$\epsilon_S, \hat{\epsilon}_S$ & Expected and estimated source error \\ \hline
$\epsilon_T, \hat{\epsilon}_T$ & Expected and estimated target error \\ \hline
$p_S, p_T$ & Probability density functions (pdf) \\\hline
$\text{Pr}_{\mathcal{D}_S}, \text{Pr}_{\mathcal{D}_T}$ & Probability mass functions (pmf) \\ \hline
$m_S$, $m_T$ & Number of labeled source and target samples \\ \hline
\end{tabular}
\end{table}

\subsection{Theoretical Analysis}
We first introduce some useful existing lemmas and theorems, followed by the details regarding the proof for lemmas and theorems involved in this paper.

\subsubsection{Existing Definitions, Lemmas and Theorems}
For being self-contained in this paper, we provide some exiting lemmas and theorems as follows.

\begin{definition}\label{def:rademacher_complexity} ({\em Rademacher Complexity}~\cite{mansour2009domain})
    Given a set of real-valued functions $\mathcal{F}$ over $\mathcal{X}$ and an example $\mathcal{B}=\{ \mathbf{x}_1,\cdots, \mathbf{x}_m\} \in \mathcal{X}^m$, the empirical Rademacher complexity of $\mathcal{F}$ is defined as follow:
    \begin{equation*}
        \hat{\Re}_{\mathcal{B}}(\mathcal{F}) = \frac{2}{m} \mathbb{E}_{\sigma}\Big[ \sup_{f\in \mathcal{F}} \big| \sum_{i=1}^m \sigma_i f(\mathbf{x}_i) \big| \Big| \mathcal{B}=\{ \mathbf{x}_1,\cdots, \mathbf{x}_m\} \Big]
    \end{equation*}
\end{definition}
where $\sigma=(\sigma_1, \cdots, \sigma_m)$ with each $\sigma_i$ sampling from two values $\{-1, +1\}$ according to an independent and uniform distribution.

\begin{lemma}\label{McDiarmid} ({McDiarmid's inequality})
Let $X_1, \cdots, X_m$ be independently random variables taking values in the set $\mathcal{X}$ and $f: \mathcal{X}^m \rightarrow \mathbb{R}$ be a function over $X_1, \cdots, X_m$ that satisfies $\forall i, \forall x_1, \cdots, x_m, x'_i \in \mathcal{X}$,
\begin{equation*}
    \left| f(x_1,\cdots,x_i,\cdots,x_m) - f(x_1,\cdots,x'_i,\cdots,x_m) \right| \leq c_i
\end{equation*}
Then, for any $\epsilon > 0$,
\begin{equation*}
    \text{Pr}\left[ f-\mathbb{E}[f] \geq \epsilon \right] \leq \exp{\left( \frac{-2\epsilon^2}{\sum_{i=1}^m c_i^2} \right)}
\end{equation*}
\end{lemma}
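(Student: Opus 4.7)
The plan is to prove McDiarmid's inequality via the classical martingale method of bounded differences, which reduces it to Hoeffding's lemma applied to a telescoping sum of conditional expectations.

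First, I would set $V = f(X_1,\ldots,X_m) - \mathbb{E}[f]$ and construct the Doob martingale by letting $Z_i = \mathbb{E}[f(X_1,\ldots,X_m)\mid X_1,\ldots,X_i]$ for $i=0,1,\ldots,m$, so that $Z_0 = \mathbb{E}[f]$ and $Z_m = f(X_1,\ldots,X_m)$. Writing $V_i = Z_i - Z_{i-1}$, we get the telescoping decomposition $V = \sum_{i=1}^m V_i$, and by construction $\mathbb{E}[V_i \mid X_1,\ldots,X_{i-1}]=0$.

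Next, I would establish a bounded conditional range for each $V_i$. Define
\begin{equation*}
U_i = \sup_{x} \mathbb{E}[f \mid X_1,\ldots,X_{i-1}, X_i=x] - Z_{i-1}, \qquad L_i = \inf_{x} \mathbb{E}[f \mid X_1,\ldots,X_{i-1}, X_i=x] - Z_{i-1}.
\end{equation*}
Using the bounded-differences hypothesis together with independence of the $X_j$'s (which lets us integrate out $X_{i+1},\ldots,X_m$ with the same measure regardless of the value plugged in for $X_i$), I would show $U_i - L_i \leq c_i$, since for any two candidate values $x, x'$ of $X_i$, swapping them inside $f$ changes the value by at most $c_i$, and this bound persists under the expectation over the remaining variables. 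Crucially, $V_i$ almost surely lies in $[L_i, U_i]$ conditionally on $X_1,\ldots,X_{i-1}$.

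I would then invoke Hoeffding's lemma: for any random variable $W$ with $\mathbb{E}[W]=0$ and $W\in[a,b]$ a.s., $\mathbb{E}[e^{sW}] \leq \exp(s^2(b-a)^2/8)$. Applying it conditionally gives $\mathbb{E}[e^{sV_i}\mid X_1,\ldots,X_{i-1}] \leq \exp(s^2 c_i^2/8)$. Iterating the tower property from $i=m$ down to $i=1$ yields $\mathbb{E}[e^{sV}] \leq \exp\bigl(s^2 \sum_{i=1}^m c_i^2 / 8\bigr)$. A Chernoff bound then gives
\begin{equation*}
\Pr[V \geq \epsilon] \leq \inf_{s>0} \exp\!\Bigl(-s\epsilon + \tfrac{s^2}{8}\sum_i c_i^2\Bigr),
\end{equation*}
and optimizing over $s$ (taking $s = 4\epsilon/\sum_i c_i^2$) produces the stated bound $\exp(-2\epsilon^2/\sum_i c_i^2)$.

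The main obstacle is the bounded-difference step for the martingale increments: one must carefully exploit independence to argue that the two conditional expectations $\mathbb{E}[f\mid X_1,\ldots,X_{i-1},X_i=x]$ and $\mathbb{E}[f\mid X_1,\ldots,X_{i-1},X_i=x']$ differ by at most $c_i$, since they are integrals of $f$ against the same product distribution over $X_{i+1},\ldots,X_m$. Once this coupling argument is in hand, Hoeffding's lemma and the tower-property chaining are mechanical.
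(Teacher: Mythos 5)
Your proof is correct: it is the standard martingale (bounded-differences) argument for McDiarmid's inequality, with the Doob decomposition, the conditional range bound $U_i - L_i \le c_i$ via independence, Hoeffding's lemma applied conditionally, and the optimized Chernoff bound all carried out properly. The paper states this lemma as a known classical result and gives no proof of its own, so there is nothing to compare against; your argument is the canonical one and fills that gap correctly.
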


\begin{lemma} ({Hoeffding's inequality})
If $X_1, \cdots, X_m$ are independently random variables with $a_i \leq X_i \leq b_i$, then for any $\epsilon > 0$,
\begin{equation*}
    \text{Pr}[|\Bar{X} - \mathbb{E}[\Bar{X}]|\geq \epsilon] \leq 2\exp{\left(\frac{-2m^2\epsilon^2}{\sum_{i=1}^m(b_i-a_i)^2} \right)}
\end{equation*}
where $\Bar{X} = (X_1+\cdots+X_m)/m$ and $\mathbb{E}[\Bar{X}]$ is the expectation over $\Bar{X}$.
\end{lemma}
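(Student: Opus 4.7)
The plan is to follow the classical Chernoff-bounding strategy: convert a tail probability into a moment generating function (MGF) inequality, use independence to factor the MGF, bound each factor via Hoeffding's auxiliary lemma, and finally optimize over the free parameter. For the upper tail I would first note that for any $t>0$, Markov's inequality applied to the non-negative random variable $\exp(t\sum_{i=1}^m(X_i-\mathbb{E}[X_i]))$ yields
\begin{equation*}
\text{Pr}\!\left[\Bar{X}-\mathbb{E}[\Bar{X}] \geq \epsilon\right] \;\leq\; e^{-tm\epsilon}\,\mathbb{E}\!\left[e^{t\sum_{i=1}^m(X_i-\mathbb{E}[X_i])}\right] \;=\; e^{-tm\epsilon}\prod_{i=1}^m \mathbb{E}\!\left[e^{t(X_i-\mathbb{E}[X_i])}\right],
\end{equation*}
where the product form uses the independence of $X_1,\dots,X_m$.

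The main obstacle is the auxiliary Hoeffding lemma: for any zero-mean random variable $Y$ supported in $[a,b]$, one has $\mathbb{E}[e^{tY}] \leq \exp(t^2(b-a)^2/8)$. I would prove this by writing $Y$ pointwise as the convex combination $Y=\frac{b-Y}{b-a}\,a+\frac{Y-a}{b-a}\,b$, invoking convexity of $y\mapsto e^{ty}$, and then taking expectations (so that $\mathbb{E}[Y]=0$ eliminates the $Y$-dependence). The resulting deterministic upper bound can be written as $\exp(\varphi(t))$ with $\varphi(t)=\log\!\left(\tfrac{b}{b-a}e^{ta}-\tfrac{a}{b-a}e^{tb}\right)$. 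A direct calculation shows $\varphi(0)=0$, $\varphi'(0)=0$, and $\varphi''(t)\leq (b-a)^2/4$ uniformly in $t$, so Taylor's theorem with an integral remainder gives $\varphi(t)\leq t^2(b-a)^2/8$. This analytic step is where the quadratic-in-$t$ exponent in the final bound ultimately comes from.

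Applying this lemma to each centered summand $X_i-\mathbb{E}[X_i]$, which lies in an interval of length $b_i-a_i$, I obtain
\begin{equation*}
\text{Pr}\!\left[\Bar{X}-\mathbb{E}[\Bar{X}] \geq \epsilon\right] \;\leq\; \exp\!\left(-tm\epsilon + \frac{t^2}{8}\sum_{i=1}^m(b_i-a_i)^2\right).
\end{equation*}
Minimizing the right-hand side over $t>0$ gives the optimal choice $t^{\star}=4m\epsilon/\sum_{i=1}^m(b_i-a_i)^2$, which substituted back produces the one-sided bound $\exp\!\left(-2m^2\epsilon^2/\sum_{i=1}^m(b_i-a_i)^2\right)$. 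The symmetric argument applied to $-X_i$ handles the lower tail with the same bound, and a union bound over the two one-sided events supplies the factor of $2$ in the stated inequality, completing the proof.
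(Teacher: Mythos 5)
Your proof is correct: the Chernoff bounding step, the auxiliary Hoeffding lemma via convexity of $y\mapsto e^{ty}$ and the second-order Taylor bound $\varphi(t)\leq t^2(b-a)^2/8$, the optimization $t^{\star}=4m\epsilon/\sum_{i=1}^m(b_i-a_i)^2$, and the union bound over the two tails all check out and yield exactly the stated constant. Note that the paper does not prove this lemma at all — it is listed among the ``existing definitions, lemmas and theorems'' and used as a black box — so your argument is the standard textbook derivation supplying a proof the authors deliberately omitted, and there is nothing in the paper to compare it against.
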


We restate the conventional error bound based on Rademacher complexity (see Theorem 8 in \cite{mansour2009domain}) as follows.
\begin{theorem}({Error Bound in \cite{mansour2009domain}})\label{baseline_bound}
Assume that the loss function $\mathcal{L}$ is symmetric and obeys the triangle inequality. Then, for any hypothesis $h\in \mathcal{H}$, the following holds
\begin{equation*}
    \epsilon_T(h) \leq \epsilon_T(h^*_T) + \mathbb{E}_{\mathbf{x} \sim p_S(\mathbf{x})} \big[\mathcal{L}(h(\mathbf{x}), h^*_S(\mathbf{x})) \big] + \mathbb{E}_{\mathbf{x} \sim p_S(\mathbf{x})} \big[\mathcal{L}(h^*_T(\mathbf{x}), h^*_S(\mathbf{x})) \big] + d_{\mathcal{L}}(\mathcal{D}_S, \mathcal{D}_T)
\end{equation*}
where $d_{\mathcal{L}}(\mathcal{D}_S, \mathcal{D}_T) = \max_{h,h'\in\mathcal{H}} \left| \mathbb{E}_{\mathbf{x} \sim p_S(\mathbf{x})} \big[\mathcal{L}(h(\mathbf{x}), h'(\mathbf{x})) \big] - \mathbb{E}_{\mathbf{x} \sim p_T(\mathbf{x})} \big[\mathcal{L}(h(\mathbf{x}), h'(\mathbf{x})) \big] \right|$, and $h_S^*, h_T^*$ denote the optimal hypothesises of $\epsilon_S(h)$ and $\epsilon_T(h)$, respectively.
\end{theorem}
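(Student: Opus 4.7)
The plan is to derive this bound by repeated application of the triangle inequality on $\mathcal{L}$ together with a single application of the discrepancy $d_{\mathcal{L}}$ to switch expectations from target to source. Since $\mathcal{L}$ is assumed symmetric and to obey the triangle inequality, for any three hypotheses (or a hypothesis and the target labeling function) we may freely split $\mathcal{L}(h,h'')$ through an intermediate $h'$. The key observation is that the final statement contains exactly one copy of $d_{\mathcal{L}}(\mathcal{D}_S,\mathcal{D}_T)$ — so the discrepancy step must be applied before splitting through $h_S^*$, not after, otherwise one would accumulate two discrepancy terms.

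Concretely, I would proceed in three steps. First, apply the triangle inequality on the target, inserting $h_T^*$ as intermediary:
\begin{equation*}
\epsilon_T(h) = \mathbb{E}_{\mathbf{x}\sim p_T}\big[\mathcal{L}(h(\mathbf{x}),f_T(\mathbf{x}))\big]
\le \mathbb{E}_{\mathbf{x}\sim p_T}\big[\mathcal{L}(h(\mathbf{x}),h_T^*(\mathbf{x}))\big] + \epsilon_T(h_T^*),
\end{equation*}
where $f_T$ denotes the true target labeling rule so that the last term is exactly $\epsilon_T(h_T^*)$. Second, switch the remaining target expectation to a source expectation by invoking the definition of $d_{\mathcal{L}}$ with the pair $(h, h_T^*)\in\mathcal{H}\times\mathcal{H}$:
\begin{equation*}
\mathbb{E}_{\mathbf{x}\sim p_T}\big[\mathcal{L}(h(\mathbf{x}),h_T^*(\mathbf{x}))\big]
\le \mathbb{E}_{\mathbf{x}\sim p_S}\big[\mathcal{L}(h(\mathbf{x}),h_T^*(\mathbf{x}))\big] + d_{\mathcal{L}}(\mathcal{D}_S,\mathcal{D}_T).
\end{equation*}
Third, apply the triangle inequality once more, now under $p_S$, inserting $h_S^*$ as intermediary:
\begin{equation*}
\mathbb{E}_{\mathbf{x}\sim p_S}\big[\mathcal{L}(h(\mathbf{x}),h_T^*(\mathbf{x}))\big]
\le \mathbb{E}_{\mathbf{x}\sim p_S}\big[\mathcal{L}(h(\mathbf{x}),h_S^*(\mathbf{x}))\big] + \mathbb{E}_{\mathbf{x}\sim p_S}\big[\mathcal{L}(h_T^*(\mathbf{x}),h_S^*(\mathbf{x}))\big],
\end{equation*}
using symmetry to write the second term with arguments in the order stated in the theorem. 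Chaining the three inequalities yields exactly the claimed bound.

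There is essentially no technical obstacle — everything reduces to pointwise application of the triangle inequality under the integral and a single absolute-value bound by $d_{\mathcal{L}}$. The only subtlety worth flagging is the order of operations highlighted above: had one first applied the triangle inequality on the target (through $h_S^*$) and then converted each of the two resulting target expectations to source via $d_{\mathcal{L}}$, the bound would pick up a factor of $2$ in front of $d_{\mathcal{L}}$. Performing the discrepancy step first, while the expression still involves a single pairing $(h,h_T^*)$, is what keeps the constant equal to one and matches the theorem as stated.
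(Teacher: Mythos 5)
Your proof is correct: the three steps (triangle inequality on the target through $h_T^*$, a single swap of expectations via $d_{\mathcal{L}}$ applied to the pair $(h,h_T^*)$, then the triangle inequality under $p_S$ through $h_S^*$ with symmetry to reorder the arguments) chain together to give exactly the stated bound, and your remark about why the discrepancy must be applied before splitting through $h_S^*$ is the right observation for keeping the constant at one. Note that the paper itself restates this result from Theorem~8 of \cite{mansour2009domain} without proof, and your argument is essentially the original proof given there.
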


\subsubsection{Our Results}
Then we provide the theoretical analysis and proof regarding our lemmas and theorems used in this paper as follows.

\begin{lemma}\label{estimation_error}
Assume that loss function $\mathcal{L}$ is bounded, i.e., there exists $M>0$ such that $0\leq\mathcal{L} \leq M$. For $h\in \mathcal{H}$ and $\delta \in (0,1)$, with probability at least $1-\delta$ over $m$ samples $\mathcal{B}_S$ drawn from $\mathcal{D}_S$, we have:
\begin{equation*}
    \text{Pr}[|\hat{\epsilon}_S(h) - \epsilon_S(h)|\geq \epsilon] \leq 2\exp{(-2m\epsilon^2/M^2)}
\end{equation*}
\end{lemma}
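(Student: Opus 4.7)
The plan is to recognize this as a direct application of Hoeffding's inequality, which is already stated in the preliminaries of the appendix. The single source of randomness is the i.i.d. sample $\mathcal{B}_S=\{(\mathbf{x}_i,y_i)\}_{i=1}^m$ drawn from $\mathcal{D}_S$, and the hypothesis $h$ is fixed (not chosen data-dependently), so no uniform-convergence machinery is needed and the bound is a pointwise concentration statement.

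First I would define the per-example random variables $X_i=\mathcal{L}(h(\mathbf{x}_i),y_i)$ for $i=1,\dots,m$. By the boundedness hypothesis, each $X_i$ takes values in $[0,M]$, so $a_i=0$ and $b_i=M$ in the notation of Hoeffding's inequality. Since the samples are drawn independently from $\mathcal{D}_S$, the $X_i$ are i.i.d., and their common expectation is $\mathbb{E}[X_i]=\mathbb{E}_{(\mathbf{x},y)\sim p_S}[\mathcal{L}(h(\mathbf{x}),y)]=\epsilon_S(h)$. The empirical mean $\bar X=\frac{1}{m}\sum_{i=1}^m X_i$ is exactly $\hat{\epsilon}_S(h)$.

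Next I would plug these ingredients into Hoeffding's inequality. We have $\sum_{i=1}^m (b_i-a_i)^2 = mM^2$, so the inequality reads
\begin{equation*}
\Pr\bigl[\,|\hat{\epsilon}_S(h)-\epsilon_S(h)|\geq \epsilon\,\bigr]\;\leq\;2\exp\!\left(\frac{-2m^2\epsilon^2}{mM^2}\right)\;=\;2\exp\!\left(-\frac{2m\epsilon^2}{M^2}\right),
\end{equation*}
which is exactly the claimed bound. Setting this tail probability equal to $\delta$ and solving for $\epsilon$ gives the equivalent ``with probability at least $1-\delta$'' statement $|\hat{\epsilon}_S(h)-\epsilon_S(h)|\leq M\sqrt{\log(2/\delta)/(2m)}$, which is the form typically chained into the later generalization bounds.

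There is no real obstacle here: the lemma is essentially a textbook concentration result and all the work is bookkeeping. The only subtlety worth flagging is that this bound is for a \emph{single} fixed $h$; any appeal to it inside a uniform bound over $\mathcal{H}$ (as happens when combining with the Rademacher-type term in Theorem \ref{empirical_generalization_bound}) would require either a union bound or symmetrization, but that is a concern for the downstream theorems rather than for this lemma itself.
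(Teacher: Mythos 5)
Your proof is correct and matches the paper's approach exactly: the paper's proof is a one-line appeal to Hoeffding's inequality with $0\leq\mathcal{L}(h(\mathbf{x}),y)\leq M$ per sample, and you have simply filled in the same bookkeeping (identifying $X_i$, the range $[0,M]$, and $\sum_i(b_i-a_i)^2=mM^2$) in more detail. Your closing remark about the bound holding only for a fixed $h$ is a fair observation but, as you note, is handled downstream rather than in this lemma.
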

\begin{proof}
It simply follows the Hoeffding's equality considering $0\leq\mathcal{L}(h(\mathbf{x}), y) \leq M$ for each sample.
\end{proof}

\begin{lemma}\label{lemma:relaxed_CSA} ({Property of Relaxed Covariate Shift Assumption}) If the covariate shift assumption between source and target domains holds, and source and target examples follow the IID assumption w.r.t. $p_S(\mathbf{x},y)$ and $p_T(\mathbf{x},y)$ respectively, then the relaxed covariate shift assumption holds. Furthermore, it would be equivalent to covariance shift assumption when $I(h)$ consists of only one example for all $h\in \mathcal{H}$.
\end{lemma}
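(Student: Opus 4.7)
The plan is to break the lemma into its two assertions and handle them in turn: (i) the standard covariate shift assumption (CSA) implies the relaxed CSA of Definition \ref{relaxed_CSA}, and (ii) when each $I(h)$ collapses to a single point, the two assumptions coincide.

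For (i), I would begin by recalling that the standard CSA posits $p_S(y\mid \mathbf{x}) = p_T(y\mid \mathbf{x})$ pointwise in $\mathbf{x}$, with only the marginals $p_S(\mathbf{x})$ and $p_T(\mathbf{x})$ potentially differing. Denote this common conditional by $p(y\mid \mathbf{x})$. For an arbitrary $h\in\mathcal{H}$ with associated measurable set $I(h)\subseteq \mathcal{X}$, I would expand both sides of the relaxed CSA through Bayes' rule,
\begin{equation*}
\Pr_{\mathcal{D}}[y \mid I(h)] \;=\; \frac{\int_{I(h)} p(y\mid\mathbf{x})\,p_{\mathcal{D}}(\mathbf{x})\,d\mathbf{x}}{\int_{I(h)} p_{\mathcal{D}}(\mathbf{x})\,d\mathbf{x}}, \qquad \mathcal{D}\in\{\mathcal{D}_S,\mathcal{D}_T\},
\end{equation*}
substitute the common $p(y\mid\mathbf{x})$ into both, and argue that the induced $\Pr[y\mid I(h)]$ is the shared fiberwise conditional that Definition \ref{relaxed_CSA} refers to. The IID assumption then enters to let us translate this population-level identity into the empirical identity that is actually used downstream, so that both source and target samples concentrate around the same conditional distribution on $I(h)$ as $m_S,m_T$ grow.

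For (ii), I would specialize to the setting where $I(h)=\{\mathbf{x}_0\}$ for some point $\mathbf{x}_0$ and every $h\in \mathcal{H}$. In that case the set-level conditional simply reduces to $\Pr[y\mid \mathbf{x}=\mathbf{x}_0]=p(y\mid \mathbf{x}_0)$, so the relaxed CSA degenerates into the pointwise identity $p_S(y\mid \mathbf{x}_0)=p_T(y\mid \mathbf{x}_0)$. Since this is required to hold for every hypothesis and hence every $\mathbf{x}_0\in\mathcal{X}$, it is identical to the standard CSA, and the converse implication of (i) is then immediate.

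The main obstacle I foresee is the subtlety in step (i): naively, pointwise agreement of $p(y\mid\mathbf{x})$ does not by itself imply agreement of set-level conditionals, because the integrals are weighted by different marginals $p_S(\mathbf{x})$ versus $p_T(\mathbf{x})$ over $I(h)$. I would need to be careful about the intended reading of the notation $\Pr[y\mid I(h)]$ in Definition \ref{relaxed_CSA} as referring to the common mechanism $p(y\mid\mathbf{x})$ inherited from covariate shift, rather than to a literal marginal-weighted average. Making this interpretation precise, and invoking the IID assumption to justify the passage between population and empirical quantities, is the delicate part; once that is pinned down, the remaining manipulations are routine Bayes-rule expansions.
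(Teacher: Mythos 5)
Your part (ii) matches the paper's (both reduce the set-level conditional to the pointwise conditional when $I(h)$ is a singleton), but there is a genuine gap in part (i): you correctly identify the central obstacle --- under the literal Bayes-rule reading, $\Pr_{\mathcal{D}}[y\mid I(h)]$ is a marginal-weighted average of $p(y\mid\mathbf{x})$ over $I(h)$, and pointwise agreement of the conditionals does \emph{not} force the two weighted averages to agree when $p_S(\mathbf{x})$ and $p_T(\mathbf{x})$ differ on $I(h)$ --- but you do not actually resolve it. Your proposed fix, namely to ``interpret $\Pr[y\mid I(h)]$ as referring to the common mechanism $p(y\mid\mathbf{x})$,'' is circular: under that reading Definition \ref{relaxed_CSA} holds by fiat and there is nothing to prove, while under the literal reading the implication is false in general. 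Your secondary use of the IID assumption as a concentration argument (empirical conditionals converging as $m_S,m_T\to\infty$) does not repair this, since the population-level identity you would be concentrating toward is precisely the one in doubt.

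The paper's proof closes this gap with a different device, which is the missing idea: it treats $I(h)$ as a finite collection of data points $\mathbf{x}_1,\dots,\mathbf{x}_n$ and uses the IID assumption to \emph{factorize} the joint probability,
\begin{equation*}
\Pr(y\mid I(h))\,\Pr(I(h)) \;=\; \Pr(y,I(h)) \;=\; \prod_{i=1}^{n}\Pr(y,\mathbf{x}_i) \;=\; \Big(\prod_{i=1}^{n}\Pr(y\mid\mathbf{x}_i)\Big)\Pr(I(h)),
\end{equation*}
so that the marginal factor $\Pr(I(h))=\prod_i\Pr(\mathbf{x}_i)$ cancels and $\Pr(y\mid I(h))=\prod_i\Pr(y\mid\mathbf{x}_i)$ depends only on the pointwise conditionals. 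Covariate shift then transfers directly to the set-level conditional with no interference from the marginals. Whatever one thinks of the measure-theoretic legitimacy of reading $\{y,I(h)\}$ as a conjunction of independent per-point events, that product factorization --- not a Bayes-rule average and not a concentration argument --- is what the lemma's hypothesis ``examples follow the IID assumption'' is there to license, and your proposal does not contain an equivalent step.
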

\begin{proof}
For either source or target domain, if its examples follow the IID assumption, then we have
\begin{equation*}
    \begin{aligned}
        \text{Pr}(y | I(h)) \text{Pr}(I(h)) &= \text{Pr}(y, I(h)) =  \text{Pr}(y, \mathbf{x}_1)\cdots \text{Pr}(y, \mathbf{x}_n) \\
        &= \text{Pr}(y | \mathbf{x}_1)\text{Pr}(\mathbf{x}_1)\cdots \text{Pr}(y | \mathbf{x}_n)\text{Pr}(\mathbf{x}_n) \\
        &= \text{Pr}(y | \mathbf{x}_1)\cdots \text{Pr}(y | \mathbf{x}_n) \text{Pr}(I(h))
    \end{aligned}
\end{equation*}
where we denote $\mathbf{x}_1,…,\mathbf{x}_n$ are the data points in the set $I(h)$. Then if covariate shift assumption holds, i.e., $\text{Pr}_S(y | \mathbf{x}_i)=\text{Pr}_T(y | \mathbf{x}_i)$ for all examples $\mathbf{x}_1,…,\mathbf{x}_n$, we have $\text{Pr}_S(y | I(h))= \text{Pr}_T(y | I(h))$ as shown in the relaxed covariance shift assumption (see Definition \ref{relaxed_CSA}). It is easy to show that when $I(h)$ consists of only one example for all $h\in \mathcal{H}$, it is equivalent to covariance shift assumption.
\end{proof}

\begin{lemma}\label{Triangle_property} ({Triangle Inequality of $\mathcal{C}$-divergence})
Given domains $\mathcal{D}_1$, $\mathcal{D}_2$ and $\mathcal{D}_3$, the $\mathcal{C}$-divergence satisfies the following triangle property:
\begin{equation}
    d_{\mathcal{C}}(\mathcal{D}_1, \mathcal{D}_{2}) \leq d_{\mathcal{C}}(\mathcal{D}_1, \mathcal{D}_{3}) + d_{\mathcal{C}}(\mathcal{D}_2, \mathcal{D}_{3})
\end{equation}
\end{lemma}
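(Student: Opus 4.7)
The plan is to reduce the claim to the elementary triangle inequality for absolute values of real numbers, followed by a standard argument for how this inequality behaves under suprema. The core observation is that for each hypothesis $h \in \mathcal{H}$, the $\mathcal{C}$-divergence is built as the supremum over $h$ of the absolute difference of a single scalar functional $f(h, \mathcal{D}) := \text{Pr}_{\mathcal{D}}[\{I(h), y=1\} \cup \{\overline{I(h)}, y=0\}]$ evaluated on the two domains. Since for any fixed $h$ these are just real numbers, the ordinary triangle inequality applies pointwise.

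Concretely, I would first fix an arbitrary $h \in \mathcal{H}$ and define $A_h := \{I(h), y=1\} \cup \{\overline{I(h)}, y=0\}$. Then I would add and subtract $\text{Pr}_{\mathcal{D}_3}[A_h]$ inside the absolute value:
\begin{equation*}
\bigl|\text{Pr}_{\mathcal{D}_1}[A_h] - \text{Pr}_{\mathcal{D}_2}[A_h]\bigr| \leq \bigl|\text{Pr}_{\mathcal{D}_1}[A_h] - \text{Pr}_{\mathcal{D}_3}[A_h]\bigr| + \bigl|\text{Pr}_{\mathcal{D}_3}[A_h] - \text{Pr}_{\mathcal{D}_2}[A_h]\bigr|.
\end{equation*}
Each of the two terms on the right is bounded above by the supremum of the same expression over all $h' \in \mathcal{H}$, which by definition equals $d_{\mathcal{C}}(\mathcal{D}_1, \mathcal{D}_3)$ and $d_{\mathcal{C}}(\mathcal{D}_2, \mathcal{D}_3)$, respectively. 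Thus for every $h$, the left side is bounded by $d_{\mathcal{C}}(\mathcal{D}_1, \mathcal{D}_3) + d_{\mathcal{C}}(\mathcal{D}_2, \mathcal{D}_3)$. Finally, taking the supremum over $h \in \mathcal{H}$ on the left preserves this bound since the right-hand side does not depend on $h$, yielding the desired inequality.

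There is no real obstacle here; the proof is essentially mechanical once the correct scalar functional is identified. The one subtlety worth noting in the write-up is that the supremum is taken \emph{outside} the absolute value, so one must be careful to bound the pointwise inequality first and only then pass to the supremum on the left-hand side (rather than trying to split the $\sup$ over a difference, which would be invalid). Because the set $A_h$ is the same one appearing on both sides of each divergence term, the decomposition is symmetric in $\mathcal{D}_1, \mathcal{D}_2, \mathcal{D}_3$ and no additional structural assumption on $\mathcal{H}$ (such as symmetry) is needed for this particular lemma.
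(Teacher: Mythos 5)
Your proof is correct and follows essentially the same route as the paper's: both insert $\text{Pr}_{\mathcal{D}_3}[A_h]$ by adding and subtracting it, apply the scalar triangle inequality, and then pass to the supremum over $h$. Your remark about bounding pointwise before taking the supremum is a slightly more careful phrasing of the same step the paper performs by splitting $\sup_h|a_h+b_h|\leq \sup_h|a_h|+\sup_h|b_h|$.
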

\begin{proof}
Following the definition of $\mathcal{C}$-divergence in Eq. (\ref{definition_divergence}), it is easy to show the $\mathcal{C}$-divergence is symmetric with respect to its two arguments. Then we have
\begin{equation*}
    \begin{aligned}
        d_{\mathcal{C}}(\mathcal{D}_1, \mathcal{D}_2) &= \sup_{h\in \mathcal{H}} \Big|\text{Pr}_{\mathcal{D}_1}[\{I(h), y=1\}\cup \{\overline{I(h)}, y=0\}] - \text{Pr}_{\mathcal{D}_2}[\{I(h), y=1\}\cup \{\overline{I(h)}, y=0\}] \Big| \\
        &= \sup_{h\in \mathcal{H}} \Big|\text{Pr}_{\mathcal{D}_1}[\{I(h), y=1\}\cup \{\overline{I(h)}, y=0\}] - \text{Pr}_{\mathcal{D}_3}[\{I(h), y=1\}\cup \{\overline{I(h)}, y=0\}] \\
        &\quad + \text{Pr}_{\mathcal{D}_3}[\{I(h), y=1\}\cup \{\overline{I(h)}, y=0\}] - \text{Pr}_{\mathcal{D}_2}[\{I(h), y=1\}\cup \{\overline{I(h)}, y=0\}] \Big| \\
        &\leq \sup_{h\in \mathcal{H}} \Big|\text{Pr}_{\mathcal{D}_1}[\{I(h), y=1\}\cup \{\overline{I(h)}, y=0\}] - \text{Pr}_{\mathcal{D}_3}[\{I(h), y=1\}\cup \{\overline{I(h)}, y=0\}] \Big| \\
        &\quad + \sup_{h\in \mathcal{H}} \Big| \text{Pr}_{\mathcal{D}_3}[\{I(h), y=1\}\cup \{\overline{I(h)}, y=0\}] - \text{Pr}_{\mathcal{D}_2}[\{I(h), y=1\}\cup \{\overline{I(h)}, y=0\}] \Big| \\
        &= d_{\mathcal{C}}(\mathcal{D}_1, \mathcal{D}_{3}) + d_{\mathcal{C}}(\mathcal{D}_2, \mathcal{D}_{3})
    \end{aligned}
\end{equation*}
which completes the proof.
\end{proof}

{\bf Proof of Lemma~\ref{joint_marginal}}.
Lemma~\ref{joint_marginal} states that with relaxed covariate shift assumption, for any $h\in \mathcal{H}$, we have
\begin{equation*}
\begin{aligned}
d_{\mathcal{C}}(\mathcal{D}_S, \mathcal{D}_T) = \sup_{h\in \mathcal{H}} \Big|\Big(\text{Pr}_{\mathcal{D}_S}[I(h)] - \text{Pr}_{\mathcal{D}_T}[I(h)] \Big)\cdot \mathcal{S}_h + \text{Pr}_{\mathcal{D}_T}[y=1] - \text{Pr}_{\mathcal{D}_S}[y=1]\Big|
\end{aligned}
\end{equation*}
where
\begin{equation*}
    \mathcal{S}_h = \text{Pr}[y=1 | I(h)] - \text{Pr}[y=0 | I(h)]
\end{equation*}
\begin{proof}
For any $h\in \mathcal{H}$, we have
\begin{equation*}
    \begin{aligned}
        &\text{Pr}_{\mathcal{D}_S}[\{I(h), y=1\}\cup \{\overline{I(h)}, y=0\}] - \text{Pr}_{\mathcal{D}_T}[\{I(h), y=1\}\cup \{\overline{I(h)}, y=0\}] \\
        &=\text{Pr}_{\mathcal{D}_S}[I(h), y=1] + \text{Pr}_{\mathcal{D}_S}[y=0] -\text{Pr}_{\mathcal{D}_S}[I(h), y=0] \\
        &\quad - \text{Pr}_{\mathcal{D}_T}[I(h), y=1] - \text{Pr}_{\mathcal{D}_T}[y=0] + \text{Pr}_{\mathcal{D}_T}[I(h), y=0] \\
        &= 2\text{Pr}_{\mathcal{D}_S}[I(h), y=1] + 1 - \text{Pr}_{\mathcal{D}_S}[y=1] - \text{Pr}_{\mathcal{D}_S}[I(h)] \\
        &\quad - 2\text{Pr}_{\mathcal{D}_T}[I(h), y=1] - (1 - \text{Pr}_{\mathcal{D}_T}[y=1]) + \text{Pr}_{\mathcal{D}_T}[I(h)] \\
        &= \Big(\text{Pr}_{\mathcal{D}_S}[I(h)] - \text{Pr}_{\mathcal{D}_T}[I(h)] \Big)\Big(2\text{Pr}_{\mathcal{D}_S}[y=1 | I(h)] - 1 \Big) + \Big(\text{Pr}_{\mathcal{D}_T}[y=1] - \text{Pr}_{\mathcal{D}_S}[y=1]\Big) \\
        &\quad + \text{Pr}_{\mathcal{D}_T}[I(h)]\Big( \text{Pr}_{\mathcal{D}_S}[y=1 | I(h)] - \text{Pr}_{\mathcal{D}_T}[y=1 | I(h)] \Big)
    \end{aligned}
\end{equation*}
With the relaxed covariate shift assumption $\text{Pr}_{\mathcal{D}_S}[y ~|~ I(h)]=\text{Pr}_{\mathcal{D}_T}[y ~|~ I(h)]= \text{Pr}[y ~|~ I(h)]$, we have
\begin{equation*}
    \begin{aligned}
    &d_{\mathcal{C}}(\mathcal{D}_S, \mathcal{D}_T) = \sup_{h\in \mathcal{H}} \Big|\text{Pr}_{\mathcal{D}_S}[\{I(h), y=1\}\cup \{\overline{I(h)}, y=0\}] - \text{Pr}_{\mathcal{D}_T}[\{I(h), y=1\}\cup \{\overline{I(h)}, y=0\}] \Big| \\
        &= \sup_{h\in \mathcal{H}} \Big| \Big(\text{Pr}_{\mathcal{D}_S}[I(h)] - \text{Pr}_{\mathcal{D}_T}[I(h)] \Big)\Big(2\text{Pr}[y=1 | I(h)] - 1 \Big) + \Big(\text{Pr}_{\mathcal{D}_T}[y=1] - \text{Pr}_{\mathcal{D}_S}[y=1]\Big) \Big| \\
        &= \sup_{h\in \mathcal{H}} \Big|\Big(\text{Pr}_{\mathcal{D}_S}[I(h)] - \text{Pr}_{\mathcal{D}_T}[I(h)] \Big)\cdot \mathcal{S}_h + \text{Pr}_{\mathcal{D}_T}[y=1] - \text{Pr}_{\mathcal{D}_S}[y=1]\Big|
    \end{aligned}
\end{equation*}
which completes the proof.
\end{proof}

{\bf Proof of Lemma~\ref{L: negative_transfer}}.
Lemma~\ref{L: negative_transfer} states that when $p_S(\mathbf{x})=p_T(\mathbf{x})$ and $\epsilon_S(h)=0$, if $\mathcal{L}(h(\mathbf{x}),y)=|h(\mathbf{x})-y|$, we have
\begin{equation*}
\begin{aligned}
    \epsilon_T(h) \geq \big| \text{Pr}_{\mathcal{D}_T}[y=1] - \text{Pr}_{\mathcal{D}_S}[y=1] \big|
\end{aligned}
\end{equation*}
\begin{proof}
We know that $\text{Pr}_{\mathcal{D}_S}[y=1] = \int p_S(\mathbf{x},y=1) d\mathbf{x} = \int \left(\sum_y p_S(\mathbf{x},y) y \right) d\mathbf{x}$, then,
\begin{equation*}
    \begin{aligned}
        \big|\text{Pr}_{\mathcal{D}_S}[y=1] - \int p_S(\mathbf{x})h(\mathbf{x}) d\mathbf{x} \big|
        &= \left|\int\left(\sum_y p_S(\mathbf{x},y) y \right)d\mathbf{x} - \int\left(\sum_y p_S(\mathbf{x},y)h(\mathbf{x}) \right) d\mathbf{x} \right| \\
        \leq & \int\left(\sum_y p_S(\mathbf{x},y) |y-h(\mathbf{x})| \right) d\mathbf{x} = \epsilon_S(h)=0
    \end{aligned}
\end{equation*}
Thus, $\text{Pr}_{\mathcal{D}_S}[y=1] = \int p_S(\mathbf{x})h(\mathbf{x}) d\mathbf{x}$, and
\begin{equation*}
    \begin{aligned}
        \epsilon_T(h) &= \int\left(\sum_y p_T(\mathbf{x},y) \mathcal{L} \left(h(\mathbf{x}), y \right)  \right) d\mathbf{x} = \int\left(\sum_y p_T(\mathbf{x},y) |h(\mathbf{x})-y| \right) d\mathbf{x} \\
        & \geq \Big|\int\left(\sum_y p_T(\mathbf{x},y) (h(\mathbf{x})-y) \right) d\mathbf{x} \Big| = \Big|\int p_T(\mathbf{x})h(\mathbf{x}) d\mathbf{x} - \int\left(\sum_y p_T(\mathbf{x},y) y \right) d\mathbf{x} \Big| \\
        & = \Big|\int p_S(\mathbf{x})h(\mathbf{x}) d\mathbf{x} - \text{Pr}_{\mathcal{D}_T}[y=1] \Big| = \Big| \text{Pr}_{\mathcal{D}_S}[y=1] - \text{Pr}_{\mathcal{D}_T}[y=1] \Big|
    \end{aligned}
\end{equation*}
which completes the proof.
\end{proof}

{\bf Proof of Theorem~\ref{generalization}}.
Theorem~\ref{generalization} states that if loss function $\mathcal{L}$ is bounded, i.e., there exists $M>0$ such that $0\leq\mathcal{L} \leq M$, for a hypothesis $h\in \mathcal{H}$,  the target error can be bounded by the source error and the $\mathcal{C}$-divergence between the distributions $\mathcal{D}_S$ and $\mathcal{D}_T$. Specifically, we have
\begin{equation*}
    \epsilon_T(h) \leq \epsilon_S(h) + M\cdot d_{\mathcal{C}}(\mathcal{D}_S, \mathcal{D}_T)
\end{equation*}
\begin{proof}
Given $\epsilon_S(h) = \mathbb{E}_{(\mathbf{x},y)\sim \mathcal{D}_S} \big[\mathcal{L}(h(\mathbf{x}), y)|\big]$, we have
\begin{equation*}
    \begin{aligned}
    \epsilon_T(h) &= \epsilon_S(h) + \epsilon_T(h) - \epsilon_S(h) \\
    &\leq \epsilon_S(h) + \Big|\text{Pr}_{\mathcal{D}_S}[\mathcal{L}(h(\mathbf{x}), y)] - \text{Pr}_{\mathcal{D}_T}[\mathcal{L}(h(\mathbf{x}), y)] \Big| \\
    &\leq \epsilon_S(h) + M\cdot \Big|\text{Pr}_{\mathcal{D}_S}[h(\mathbf{x}) \neq y] - \text{Pr}_{\mathcal{D}_T}[h(\mathbf{x}) \neq y] \Big| \\
    & = \epsilon_S(h) + M\cdot \Big|\text{Pr}_{\mathcal{D}_S}[h(\mathbf{x}) = y] - \text{Pr}_{\mathcal{D}_T}[h(\mathbf{x}) = y] \Big| \\
    &= \epsilon_S(h) + M\cdot \Big|\text{Pr}_{\mathcal{D}_S}[\{I(h), y=1\}\cup \{\overline{I(h)}, y=0\}] - \text{Pr}_{\mathcal{D}_T}[\{I(h), y=1\}\cup \{\overline{I(h)}, y=0\}] \Big| \\
    &\leq \epsilon_S(h) + M\cdot d_{\mathcal{C}}(\mathcal{D}_S, \mathcal{D}_T)
    \end{aligned}
\end{equation*}
which completes the proof.
\end{proof}

{\bf Proof of Lemma~\ref{T: Rademacher_bound}}.
Lemma~\ref{T: Rademacher_bound} states that for any $\delta \in (0,1)$, with probability at least $1-\delta$ over $m_S$ labeled source samples $\mathcal{B}_S$ and $m_T$ labeled target samples $\mathcal{B}_T$, we have:
\begin{equation*}
    \begin{aligned}
    d_\mathcal{C}(\mathcal{D}_S, \mathcal{D}_T) &\leq d_{\mathcal{C}}(\hat{\mathcal{D}}_S, \hat{\mathcal{D}}_T) + \Big(\hat{\Re}_{\mathcal{B}_S}(L_H) + \hat{\Re}_{\mathcal{B}_T}(L_H) \Big)  + 3 \Bigg(\sqrt{\frac{\log{\frac{4}{\delta}}}{2m_S}} + \sqrt{\frac{\log{\frac{4}{\delta}}}{2m_T}} \Bigg)
    \end{aligned}
\end{equation*}
\begin{proof}
Based on the Rademacher Bound~\cite{mansour2009domain}, with probability at least $1-\delta/2$ over $m_S$ labeled source samples $\mathcal{B}_S$, we have
\begin{equation*}
\begin{aligned}
    \mathbb{E}_{(\mathbf{x},y)\sim p_S(\mathbf{x},y)}[h(\mathbf{x})=y] & \leq \mathbb{E}_{(\mathbf{x},y)\sim \hat{p}_S(\mathbf{x},y)}[h(\mathbf{x})=y] + \hat{\Re}_{\mathcal{B}_S}(L_H) + 3\sqrt{\frac{\log{\frac{4}{\delta}}}{2m_S}}
\end{aligned}
\end{equation*}
where $\hat{p}_S(\mathbf{x},y)$ is the empirical estimated probability density function on source domain.
Since $\text{Pr}_{\mathcal{D}_S}[h(\mathbf{x})=y] = \mathbb{E}_{(\mathbf{x},y)\sim p_S(\mathbf{x},y)}[h(\mathbf{x})=y]$ for any $h\in \mathcal{H}$. Thus,
\begin{equation*}
    d_{\mathcal{C}}(\mathcal{D}_S, \hat{\mathcal{D}}_S) \leq \hat{\Re}_{\mathcal{B}_S}(L_H) + 3\sqrt{\frac{\log{\frac{4}{\delta}}}{2m_S}}
\end{equation*}
The same result holds for target domain. Based on the triangle inequality,
\begin{equation*}
    \begin{aligned}
        d_\mathcal{C}(\mathcal{D}_S, \mathcal{D}_T) &\leq d_{\mathcal{C}}(\mathcal{D}_S, \hat{\mathcal{D}}_S) + d_{\mathcal{C}}(\hat{\mathcal{D}}_S, \hat{\mathcal{D}}_T) + d_{\mathcal{C}}(\mathcal{D}_T, \hat{\mathcal{D}}_T) \\
        &\leq d_{\mathcal{C}}(\hat{\mathcal{D}}_S, \hat{\mathcal{D}}_T) + \Big(\hat{\Re}_{\mathcal{B}_S}(L_H) + \hat{\Re}_{\mathcal{B}_T}(L_H) \Big)  + 3 \Bigg(\sqrt{\frac{\log{\frac{4}{\delta}}}{2m_S}} + \sqrt{\frac{\log{\frac{4}{\delta}}}{2m_T}} \Bigg)
    \end{aligned}
\end{equation*}
which completes the proof.
\end{proof}

{\bf Proof of Theorem~\ref{empirical_generalization_bound}}.
Theorem \ref{empirical_generalization_bound} states that assume loss function $\mathcal{L}$ is bounded with $0\leq\mathcal{L} \leq M$. For a hypothesis $h\in \mathcal{H}$ and $\delta \in (0,1)$, with probability at least $1-\delta$ over $m_S$ examples $\mathcal{B}_S$ drawn from $\mathcal{D}_S$ and $m_T$ examples $\mathcal{B}_T$ drawn from $\mathcal{D}_T$, we have:
\begin{equation*} 
\small
    \begin{aligned}
    \epsilon_T(h) &\leq \hat{\epsilon}_S(h) + M\Bigg(d_{\mathcal{C}}(\hat{\mathcal{D}}_S, \hat{\mathcal{D}}_T) + \hat{\Re}_{\mathcal{B}_S}(L_H) + \hat{\Re}_{\mathcal{B}_T}(L_H) + 3\sqrt{\frac{\log{\frac{8}{\delta}}}{2m_S}} + 3\sqrt{\frac{\log{\frac{8}{\delta}}}{2m_T}} + \sqrt{\frac{M^2\log{\frac{4}{\delta}}}{2m_S}} \Bigg)
    \end{aligned}
\end{equation*}
where $\hat{\epsilon}_S(h)$ denotes the empirical source error over finite data set $\mathcal{B}_S$.
\begin{proof}
Combining Theorem \ref{generalization}, Lemma \ref{T: Rademacher_bound} and Lemma \ref{estimation_error}, the result can be derived.
\end{proof}

{\bf Proof of Theorem~\ref{continuous_transfer_learning_bound}}.
Theorem \ref{continuous_transfer_learning_bound} states that assume the loss function $\mathcal{L}$ is bounded and $d_{\mathcal{C}}(\mathcal{D}_S, \mathcal{D}_{T_1}) \leq \Delta$, $d_{\mathcal{C}}(\mathcal{D}_{T_{i}}, \mathcal{D}_{T_{i+1}}) \leq \Delta$ for all $i=1,\cdots,n$ where $\Delta>0$. Then, for any $\delta>0$ and $h\in \mathcal{H}$, with probability at least $1-\delta$, the target domain error $\epsilon_{T_{t+1}}$ is bounded by
\begin{equation*}
    \begin{aligned}
        &\epsilon_{T_{t+1}}(h) \leq \frac{1}{t+1} \left( \hat{\epsilon}_S(h) + \sum_{i=1}^t \hat{\epsilon}_{T_i}(h) \right) + \frac{(t+2)M\Delta}{2} + \Tilde{\delta}
    \end{aligned}
\end{equation*}
where $\Tilde{\delta}=\frac{M}{t+1}\left( \sqrt{\frac{\log{\frac{2(t+1)}{\delta}}}{2m_S}} +\sum_{i=1}^t \sqrt{\frac{\log{\frac{2(t+1)}{\delta}}}{2m_{T_i}}} + \sqrt{\frac{2\log{\frac{2}{\delta}}}{m_{all}}} \right)$, $m_{all} = m_S + \sum_{i=1}^t m_{T_i}$ and $m_{T_i}$ is the number of labeled instances in $\mathcal{D}_{T_i}$.
\begin{proof}
For any sample set
$\mathcal{B}_{S,T}=\left( \{(\mathbf{x}_j, y_j)\}_{j=1}^{m_S}, \{(\mathbf{x}_j, y_j)\}_{j=1}^{m_{T_1}}, \cdots, \{(\mathbf{x}_j, y_j)\}_{j=1}^{m_{T_t}} \right) \in \left( \mathcal{X}\times \mathcal{Y} \right)^{m_{all}}$ sampled from the product distribution $p(\mathbf{x},y)=p_S(\mathbf{x},y)^{m_S} \otimes p_{T_1}(\mathbf{x},y)^{m_{T_1}} \otimes \cdots \otimes p_{T_t}(\mathbf{x},y)^{m_{T_t}}$, we define a function $g$ over $\mathcal{B}_{S,T}$ as follows.
\begin{equation*}
    g(\mathcal{B}_{S,T}) = \epsilon_{T_{t+1}}(h) - \frac{1}{t+1} \left( \hat{\epsilon}_S(h) + \sum_{i=1}^t \hat{\epsilon}_{T_i}(h) \right)
\end{equation*}
where $\hat{\epsilon}_S(h)=\frac{1}{m_S} \sum_{j=1}^{m_S} \mathcal{L}\left(h(\mathbf{x}_j), y_j\right)$ and $\hat{\epsilon}_{T_i}(h)=\frac{1}{m_{T_i}} \sum_{j=1}^{m_{T_i}} \mathcal{L}\left(h(\mathbf{x}_j), y_j\right)$ for all $i=1,\cdots,n$.

Let $\mathcal{B}_{S,T}$ and $\mathcal{B}'_{S,T}$ be two sample sets containing only one different labeled sample, then we have
\begin{equation*}
    \left| g(\mathcal{B}_{S,T}) - g(\mathcal{B}'_{S,T}) \right| \leq \frac{1}{t+1} \left| \mathcal{L}\left(h(\mathbf{x}_j), y_j \right) - \mathcal{L}\left(h(\mathbf{x}'_j), y'_j\right) \right| \leq \frac{2M}{t+1}
\end{equation*}
Based on McDiarmid's inequality (showin in Lemma \ref{McDiarmid}), we have for any $\epsilon > 0$
\begin{equation*}
    \text{Pr}\left[ \left| g(\mathcal{B}_{S,T}) -\mathbb{E}_{\mathcal{B}_{S,T}\sim p(\mathbf{x},y)} \left[g(\mathcal{B}_{S,T}) \right] \right| \geq \epsilon \right] \leq \exp{\left( \frac{-m_{all}(t+1)^2\epsilon^2}{2M^2} \right)}
\end{equation*}
Then, for any $\delta/2 > 0$, with probability at least $1-\delta/2$, the following holds
\begin{equation*}
    g(\mathcal{B}_{S,T}) \leq \mathbb{E}_{\mathcal{B}_{S,T}\sim p(\mathbf{x},y)} \left[g(\mathcal{B}_{S,T})\right] + \frac{M}{t+1}\sqrt{\frac{2\log{\frac{2}{\delta}}}{m_{all}}}
\end{equation*}
Besides, based on Lemma \ref{estimation_error} and triangle equality of $\mathcal{C}$-divergence, for any $\delta/2 > 0$, with probability at least $1-\delta/2$, we have
\begin{equation*}
    \begin{aligned}
        &\mathbb{E}_{\mathcal{B}_{S,T}\sim p(\mathbf{x},y)} \left[g(\mathcal{B}_{S,T})\right] = \mathbb{E}_{\mathcal{B}_{S,T}\sim p(\mathbf{x},y)} \left[ \epsilon_{T_{t+1}}(h) - \frac{1}{t+1} \left( \hat{\epsilon}_S(h) + \sum_{i=1}^t \hat{\epsilon}_{T_i}(h) \right) \right] \\
        =& \mathbb{E}_{\mathcal{B}_{S,T}\sim p(\mathbf{x},y)} \left[ \epsilon_{T_{t+1}}(h) - \frac{1}{t+1} \left( {\epsilon}_S(h) + \sum_{i=1}^t {\epsilon}_{T_i}(h) \right) \right] \\
        & + \mathbb{E}_{\mathcal{B}_{S,T}\sim p(\mathbf{x},y)} \left[
        \frac{1}{t+1} \left( {\epsilon}_S(h) + \sum_{i=1}^t {\epsilon}_{T_i}(h) \right) - \frac{1}{t+1} \left( \hat{\epsilon}_S(h) + \sum_{i=1}^t \hat{\epsilon}_{T_i}(h) \right) \right] \\
        =& \frac{1}{t+1} \left[ \left( \epsilon_{T_{t+1}}(h) - {\epsilon}_S(h) \right) + \sum_{i=1}^t \left( \epsilon_{T_{t+1}}(h) - {\epsilon}_{T_i}(h) \right) \right] \\
        & + \frac{1}{t+1} \mathbb{E}_{\mathcal{B}_{S,T}\sim p(\mathbf{x},y)} \left[ \left( \epsilon_S(h) - \hat{\epsilon}_S(h) \right) + \sum_{i=1}^t \left( \epsilon_{T_i}(h)- \hat{\epsilon}_{T_i}(h) \right) \right] \\
        \leq & \frac{1}{t+1} \left( d_{\mathcal{C}}(\mathcal{D}_{T_{t+1}}, \mathcal{D}_S) + \sum_{i=1}^t d_{\mathcal{C}}(\mathcal{D}_{T_{t+1}}, \mathcal{D}_{T_i}) \right) + \frac{M}{t+1}\left( \sqrt{\frac{\log{\frac{2(t+1)}{\delta}}}{2m_S}} +\sum_{i=1}^t \sqrt{\frac{\log{\frac{2(t+1)}{\delta}}}{2m_{T_i}}} \right) \\
        \leq & \frac{1}{t+1} \left( (t+1)\Delta + \sum_{i=1}^t (t+1-i) \Delta \right) + \frac{M}{t+1}\left( \sqrt{\frac{\log{\frac{2(t+1)}{\delta}}}{2m_S}} +\sum_{i=1}^t \sqrt{\frac{\log{\frac{2(t+1)}{\delta}}}{2m_{T_i}}} \right) \\
        \leq & \frac{(t+2)M\Delta}{2} + \frac{M}{t+1}\left( \sqrt{\frac{\log{\frac{2(t+1)}{\delta}}}{2m_S}} +\sum_{i=1}^t \sqrt{\frac{\log{\frac{2(t+1)}{\delta}}}{2m_{T_i}}} \right)
    \end{aligned}
\end{equation*}
Therefore,
\begin{equation*}
    \begin{aligned}
        \epsilon_{T_{t+1}}(h) & \leq \frac{1}{t+1} \left( \hat{\epsilon}_S(h) + \sum_{i=1}^t \hat{\epsilon}_{T_i}(h) \right) + \mathbb{E}_{\mathcal{B}_{S,T}\sim p(\mathbf{x},y)} \left[f(\mathcal{B}_{S,T})\right] + \frac{M}{t+1}\sqrt{\frac{2\log{\frac{2}{\delta}}}{m_{all}}} \\
        &\leq \frac{1}{t+1} \left( \hat{\epsilon}_S(h) + \sum_{i=1}^t \hat{\epsilon}_{T_i}(h) \right) + \frac{(t+2)M\Delta}{2} \\
        &\quad + \frac{M}{t+1}\left( \sqrt{\frac{\log{\frac{2(t+1)}{\delta}}}{2m_S}} +\sum_{i=1}^t \sqrt{\frac{\log{\frac{2(t+1)}{\delta}}}{2m_{T_i}}} \right) + \frac{M}{t+1}\sqrt{\frac{2\log{\frac{2}{\delta}}}{m_{all}}}
    \end{aligned}
\end{equation*}
which completes the proof.
\end{proof}

{\bf Proof of Theorem~\ref{T: semi_bound}}.
Theorem~\ref{T: semi_bound} states that if loss function $\mathcal{L}$ is bounded, let $\epsilon_{\alpha}(h) = \alpha \epsilon_T(h) + (1-\alpha) \epsilon_S(h)$, then we have
\begin{equation*}
\begin{aligned}
    \epsilon_T({h}_{\alpha}^*) &\leq \epsilon_T(h_T^*) + 2(1-\alpha) M d_{\mathcal{C}}(\mathcal{D}_S, \mathcal{D}_T)
\end{aligned}
\end{equation*}
Furthermore,
\begin{equation*}
    TS(\mathcal{D}_T || \mathcal{D}_S)) \leq  2(1-\alpha) M d_{\mathcal{C}}(\mathcal{D}_S, \mathcal{D}_T)
\end{equation*}
\begin{proof}
It is easy to show $|\epsilon_{\alpha}(h) - \epsilon_T(h)| = (1-\alpha)|\epsilon_T(h) - \epsilon_S(h)|\leq (1-\alpha)M\cdot d_{\mathcal{C}}(\mathcal{D}_S, \mathcal{D}_T)$. Then
\begin{equation*}
    \begin{aligned}
        \epsilon_T(h_{\alpha}^*) &\leq \epsilon_{\alpha}(h_{\alpha}^*) + (1-\alpha) M d_{\mathcal{C}}(\mathcal{D}_S, \mathcal{D}_T) \\
        &\leq \epsilon_{\alpha}(h_T^*) + (1-\alpha) M d_{\mathcal{C}}(\mathcal{D}_S, \mathcal{D}_T) \\
        & \leq \epsilon_T(h_T^*) + 2(1-\alpha) M d_{\mathcal{C}}(\mathcal{D}_S, \mathcal{D}_T) \\
    \end{aligned}
\end{equation*}
Then, the transfer signature can be bounded as follows.
\begin{equation*}
    \begin{aligned}
        TS(\mathcal{D}_T || \mathcal{D}_S)) &= \inf_{A\in \mathcal{G}} \Big( \epsilon_T \big(A(\mathcal{D}_S, \mathcal{D}_T)\big) - \epsilon_T \big(A(\emptyset, \mathcal{D}_T)\big) \Big) \\
        &=  \inf_{A\in \mathcal{G}} \big( \epsilon_T({h}_{\alpha}^*) - \epsilon_T(h_T^*) \big) \\
        &\leq \inf_{A\in \mathcal{G}} \big( 2(1-\alpha) M d_{\mathcal{C}}(\mathcal{D}_S, \mathcal{D}_T) \big) \\
        &= 2(1-\alpha) M d_{\mathcal{C}}(\mathcal{D}_S, \mathcal{D}_T)
    \end{aligned}
\end{equation*}
where both $M$ and $d_{\mathcal{C}}(\mathcal{D}_S, \mathcal{D}_T)$ are model-agnostic.
\end{proof}

\subsection{Proposed Framework}

\begin{figure}[h]
    \centering
    \includegraphics[width=\textwidth]{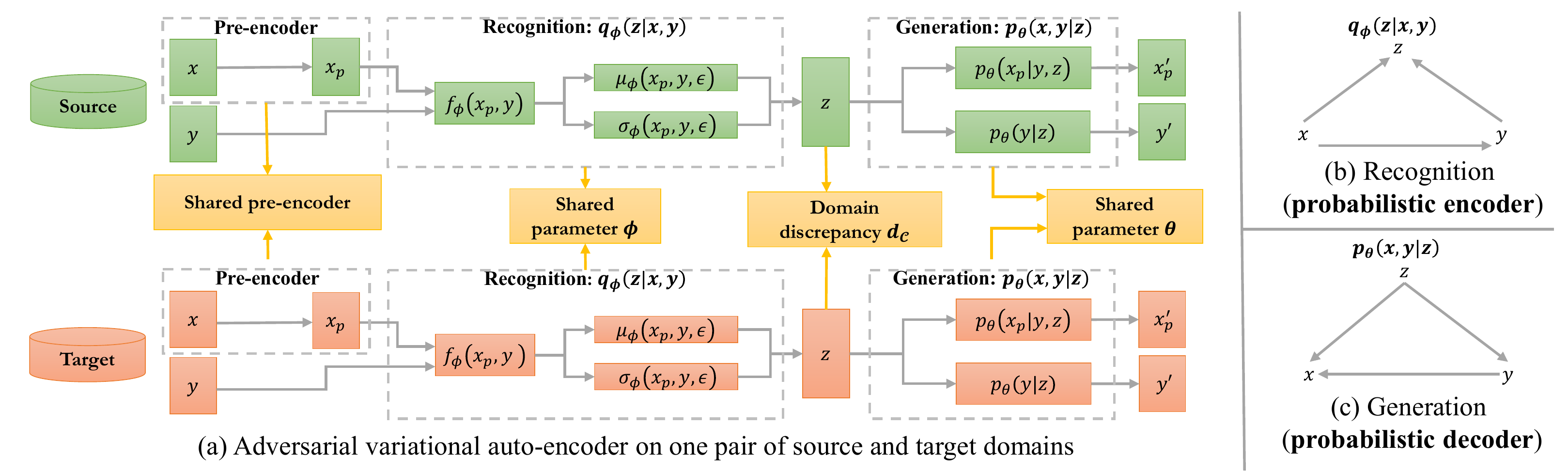}
    \caption{Overview of our proposed transfer learning framework (best viewed in color). (a) Adversarial variational auto-encoder learns domain-invariant hidden representation. (b) and (c) indicate the probabilistic graphical model for our recognition and generation modules.}\label{overview}
\end{figure}
Figure~\ref{overview} provides an overview of our proposed transfer learning framework based on label-informed $\mathcal{C}$-divergence. It can be seen that key components to our frameworks are variational auto-encoder and domain discrepancy measure. The intuition of variational auto-encoder used in our framework are as follows: (1) it learns a label-informed latent representation using both data feature and data label in order to estimate the C-divergence between source and target domains; (2) it could learn the discriminative classifier $q(\cdot|\mathbf{x})$ in a semi-supervised manner using knowledge from both labeled source examples and limited labeled target examples as well as adequate unlabeled target examples. Then, the domain discrepancy $d_{\mathcal{C}}$ could be estimated using the label-informed latent representation from source and target domains such that the minimization of $\mathcal{C}$-divergence $d_{\mathcal{C}}$ enables the better alignment of data distributions across domains. In addition, Figure \ref{overview}(b)(c) provides the probabilistic graphical model for our recognition (probabilistic encoder) and generation (probabilistic decoder) modules in our framework. It assumes that for probabilistic encoder $q_{\phi}(\mathbf{x},y, \mathbf{z})=q_{\phi}(\mathbf{z}| y, \mathbf{x})q_{\phi}(y|\mathbf{x})q(\mathbf{x})$, and for probabilistic decoder we have $p_{\theta}(\mathbf{x},y, \mathbf{z})=p_{\theta}(\mathbf{x}| y, \mathbf{z})p_{\theta}(y|\mathbf{z})p(\mathbf{z})$.

\subsection{Experimental Details}\label{appendix_ex}
We provide the experimental details, including data simulation, model configuration and additional results on digital image data sets. All our experiments are performed on a Windows machine with four 3.80GHz Intel Cores and 64GB RAM.

\subsubsection{Data Sets}
\begin{figure*}[h]
    \includegraphics[width=.32\textwidth]{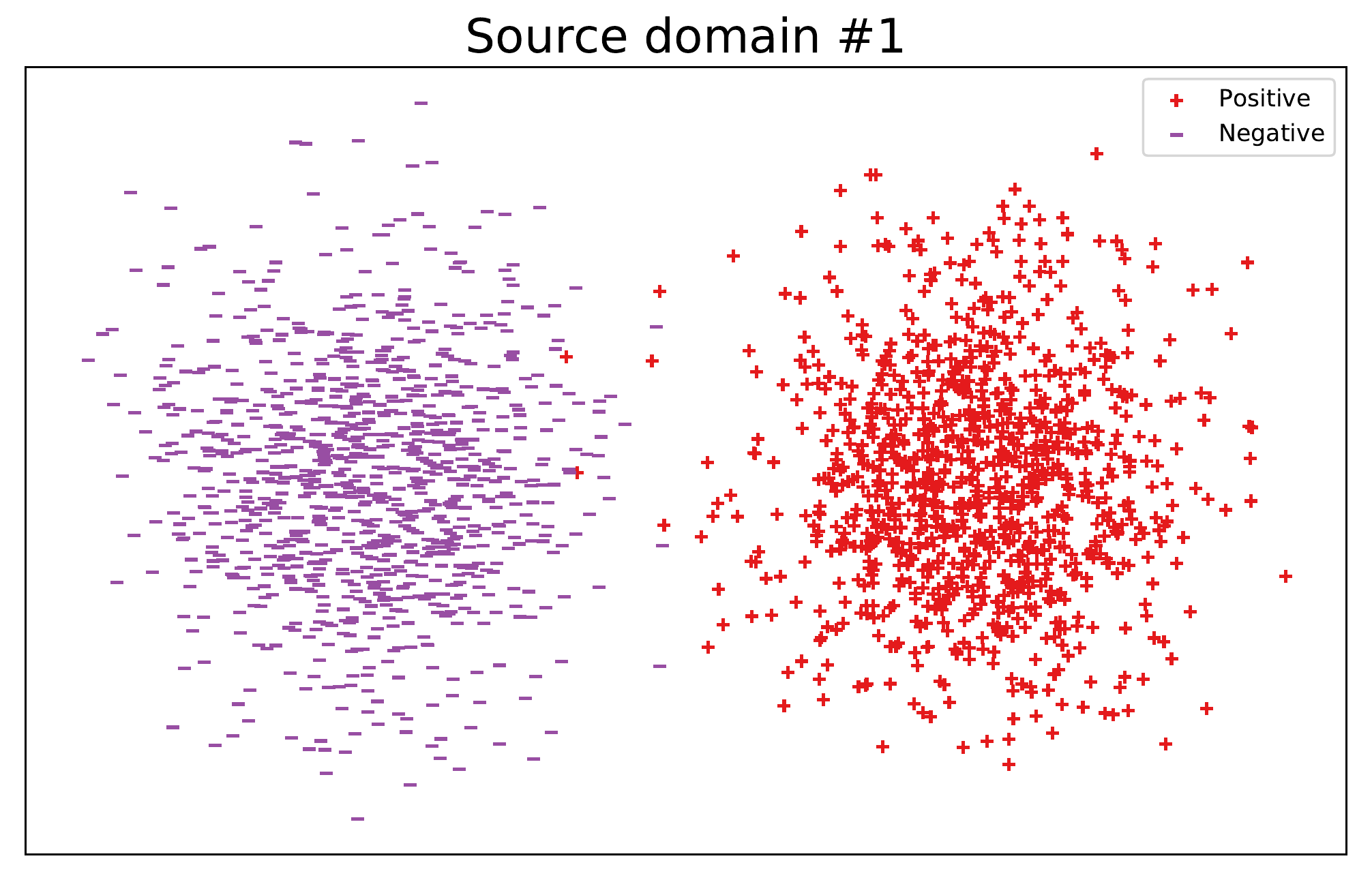}\hfill
    \includegraphics[width=.32\textwidth]{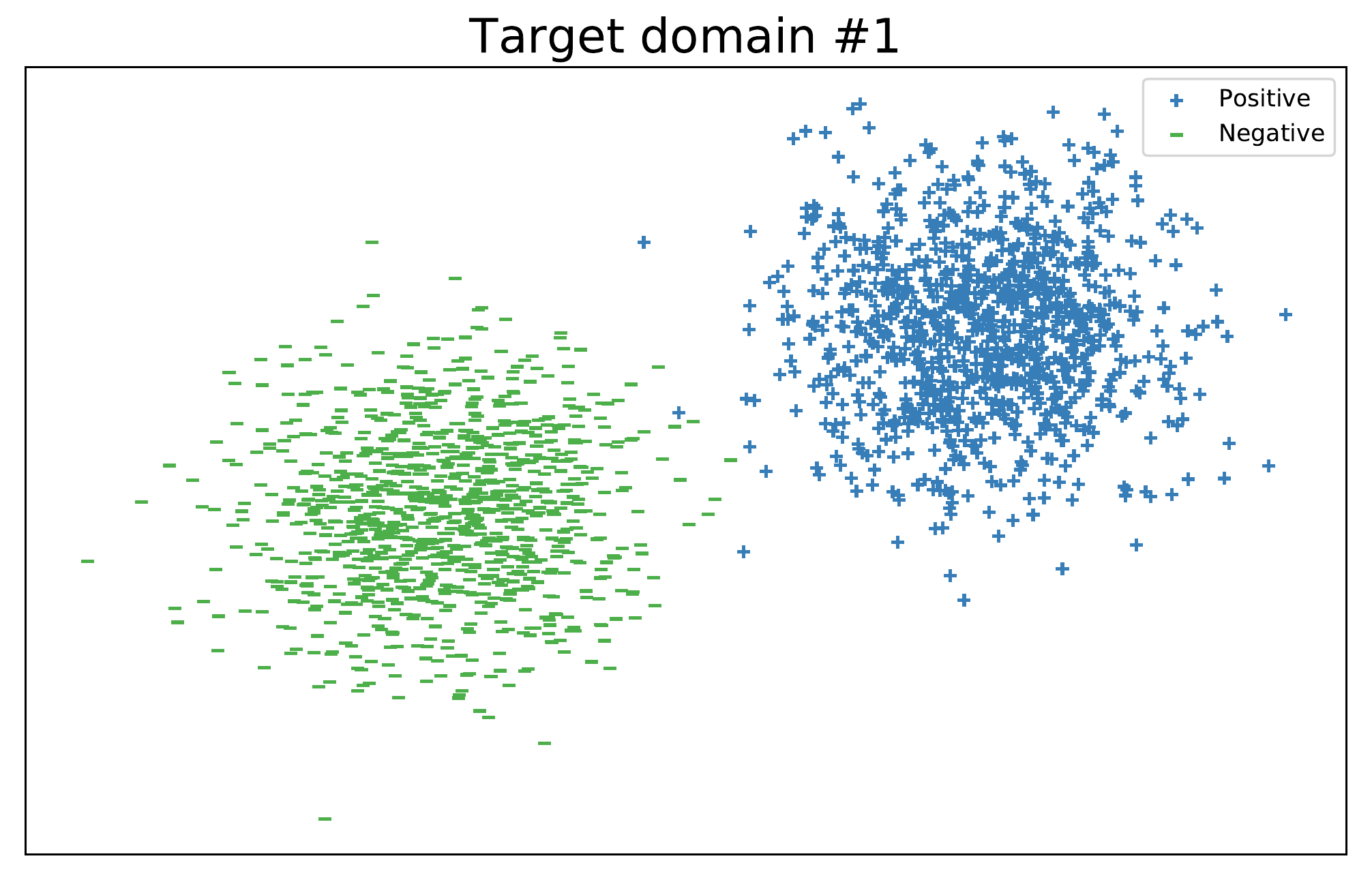}\hfill
    \includegraphics[width=.32\textwidth]{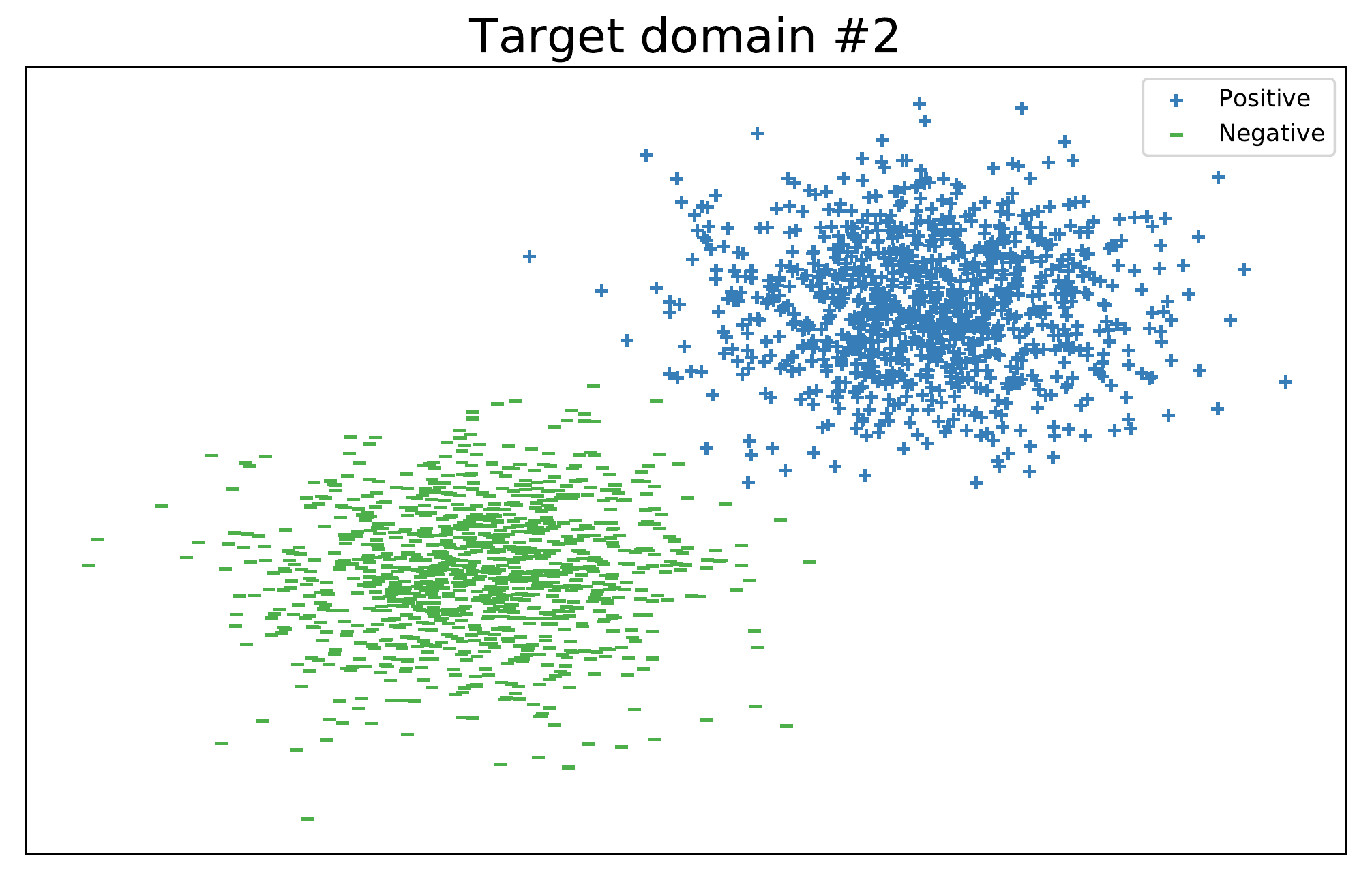}
    \\[\smallskipamount]
    \includegraphics[width=.32\textwidth]{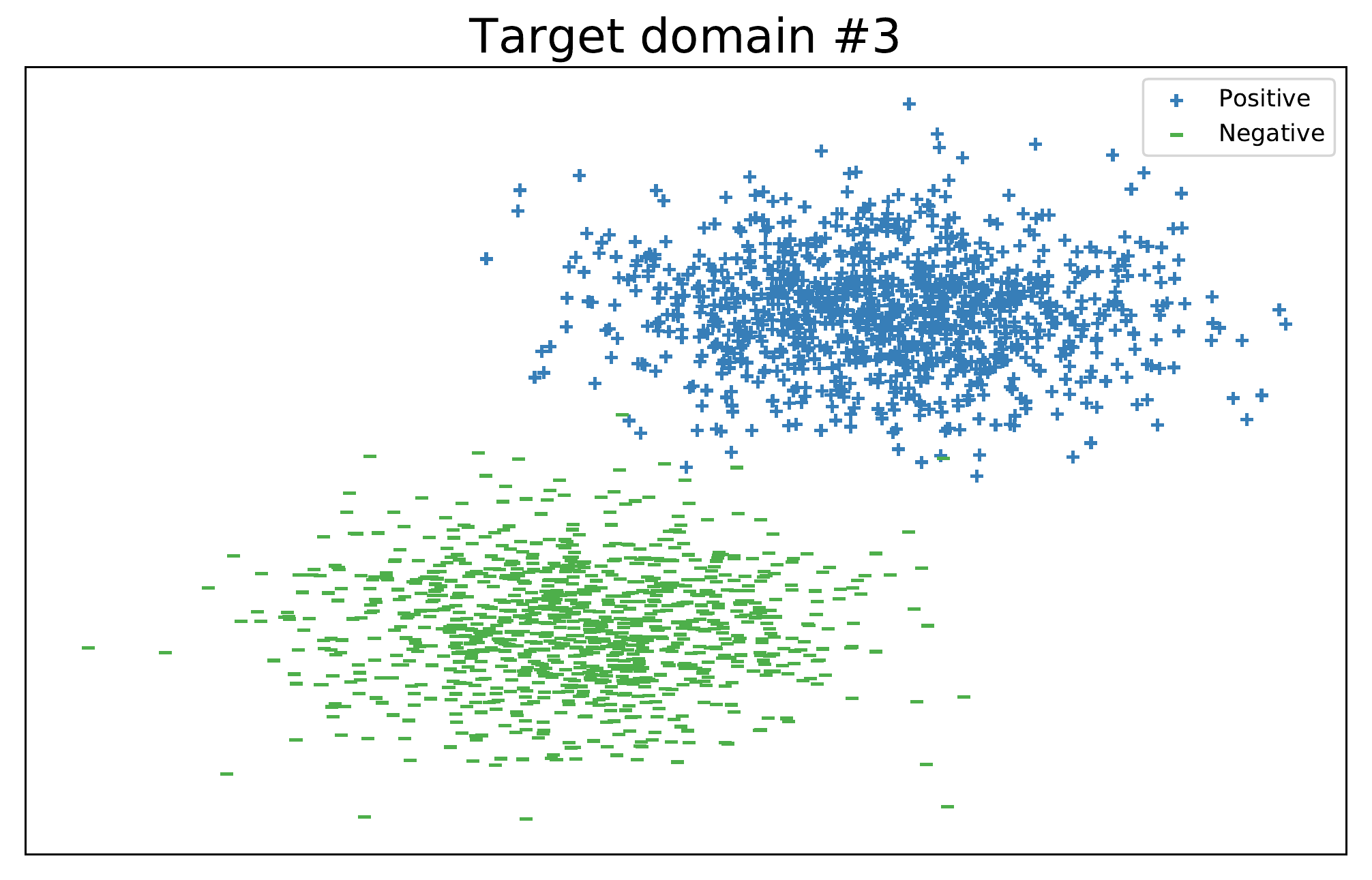}\hfill
    \includegraphics[width=.32\textwidth]{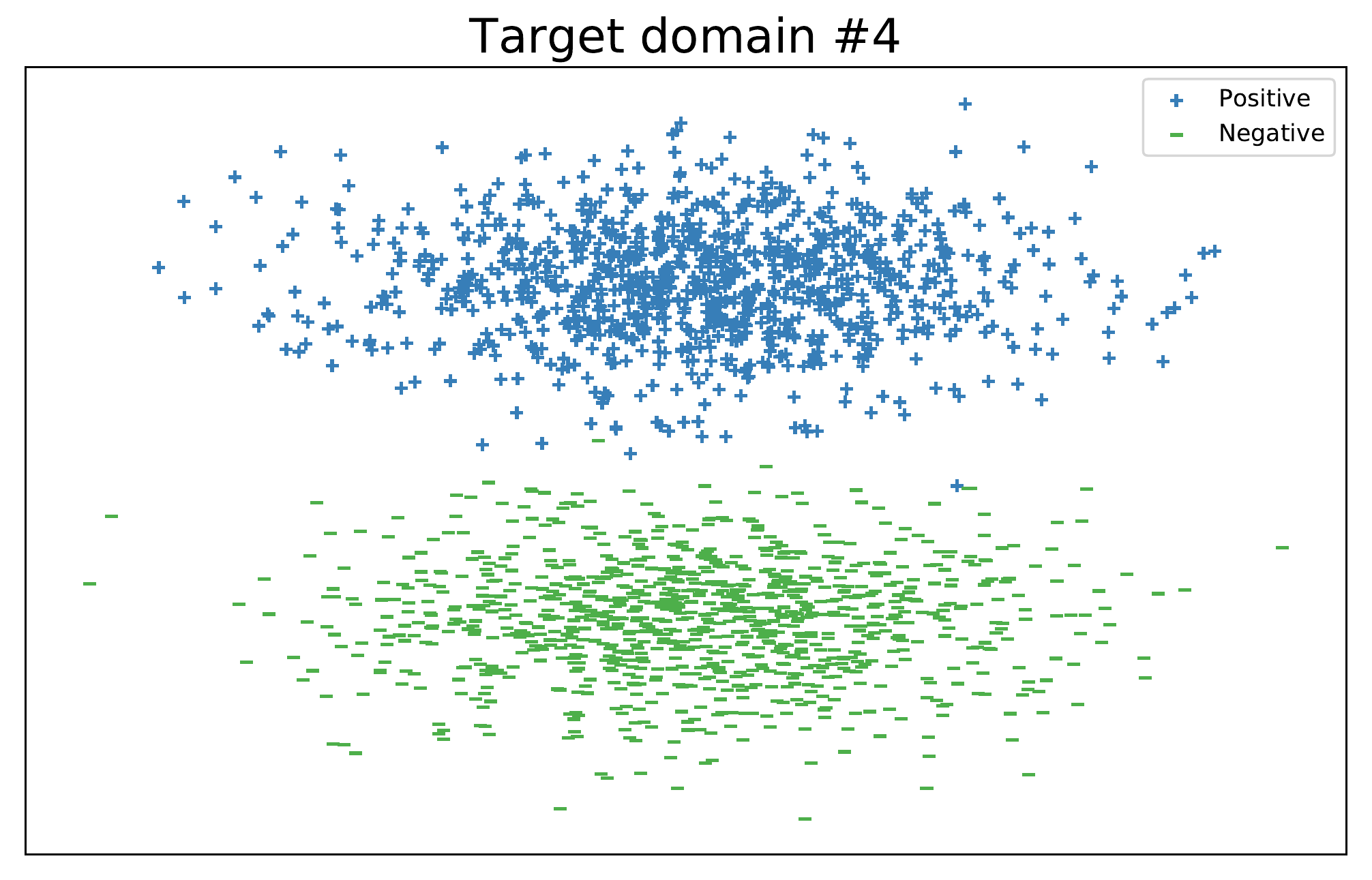}\hfill
    \includegraphics[width=.32\textwidth]{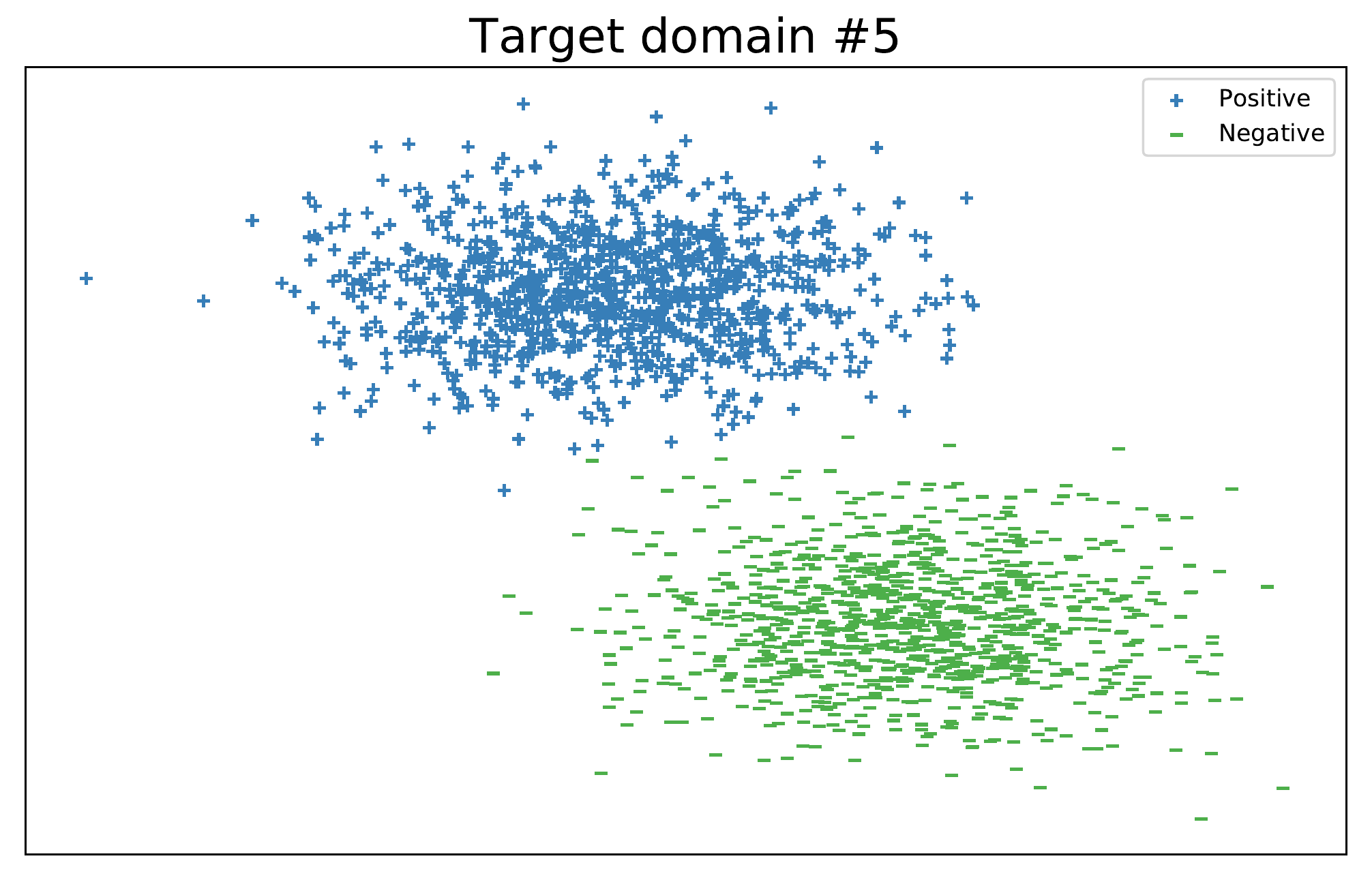}
    \\[\smallskipamount]
    \includegraphics[width=.32\textwidth]{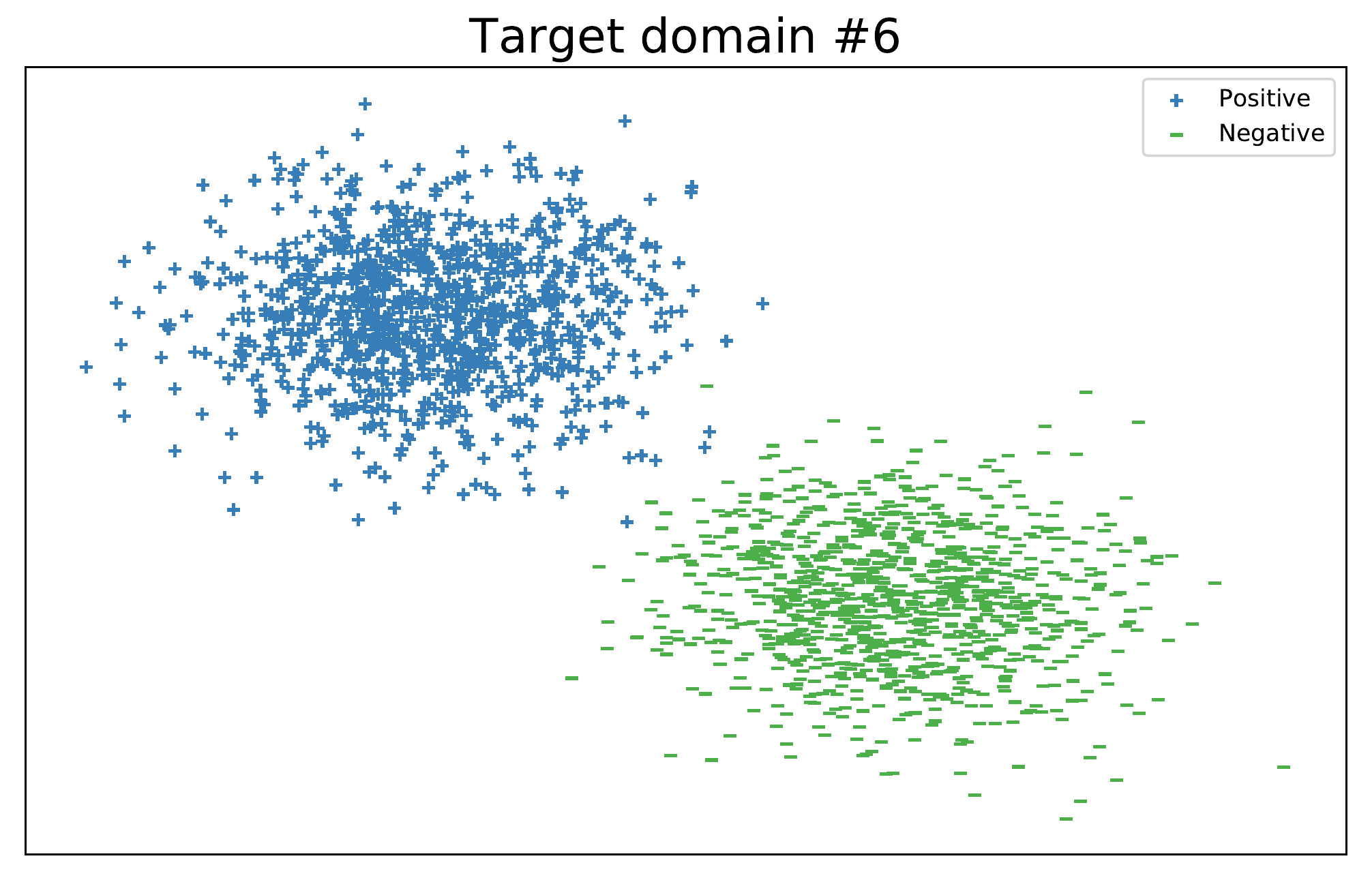}\hfill
    \includegraphics[width=.32\textwidth]{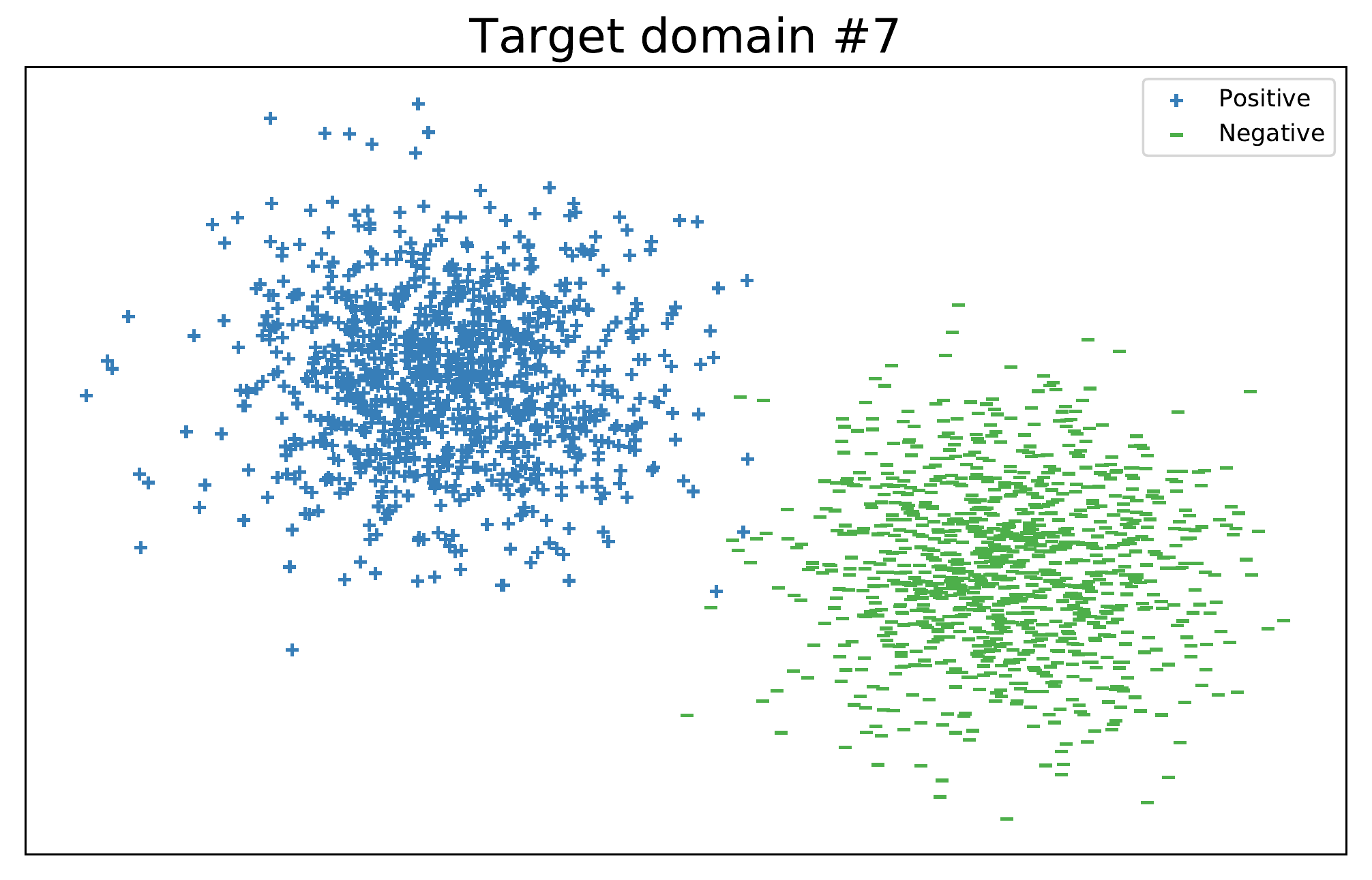}\hfill
    \includegraphics[width=.32\textwidth]{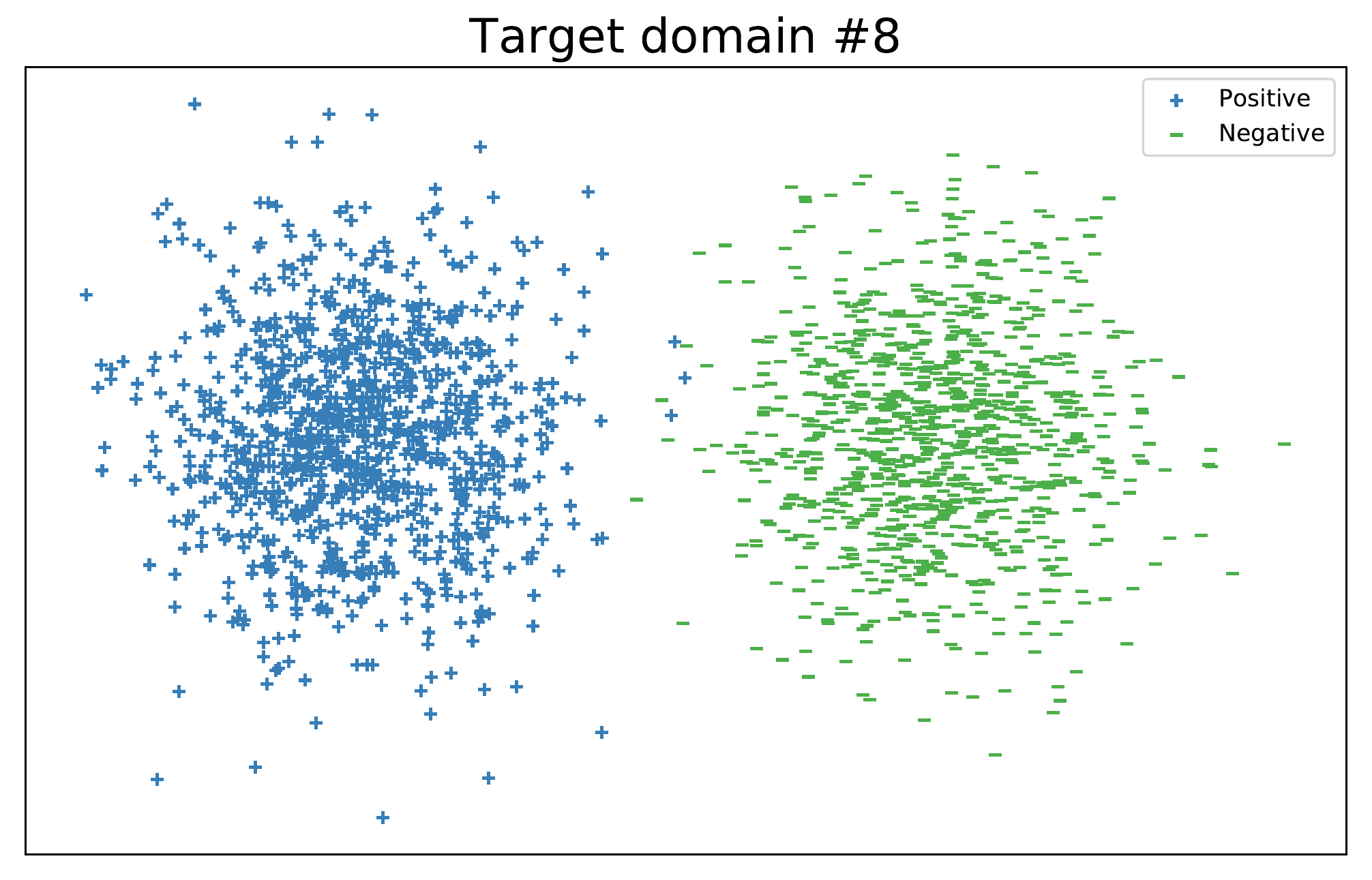}
    \caption{Synthetic source and target data (best viewed in color). For source domain (S1 at time stamp 1), positive samples are red ones and negative samples are violet ones. For target domain (T1, $\cdots$, T8 at time stamp 1$\sim$8), positive samples are in blue and negative samples are in green.}\label{fig:syn_data}
\end{figure*}

\textbf{Synthetic Data:} Figure \ref{fig:syn_data} provides the synthetic data set with a set of source and target data points where positive and negative samples are randomly sampled from two independent Gaussian distributions $\mathcal{N}([1.5\cos{\theta}, 1.5\sin{\theta}]^T, 0.5 \cdot \mathbf{I}_{2\times 2})$ and $\mathcal{N}([1.5\cos{(-\theta)}, 1.5\sin{(-\theta)}]^T, 0.5 \cdot \mathbf{I}_{2\times 2})$. We let $\theta=0$ for source domain (denoted as $S1$), and then the data points are rotated by setting $\theta$ as $\frac{\pi}{8}, \frac{\pi}{4}, \frac{3\pi}{8}, \frac{\pi}{2}, \frac{5\pi}{8}, \frac{3\pi}{4}, \frac{7\pi}{8}, \pi$ to generate the target domain with time-evolving nature. The data distribution of target domain slightly shifts in each time stamp. Intuitively, it can be observed that source domain S1 has the similar data distribution as the target domain T1, whereas it is significantly different from the target domain T8 (specifically, they have the significantly different conditional distribution $p(y|x)$ but similar marginal distribution $p(x)$).

\textbf{Real Data:} We used three publicly available data sets: MNIST\footnote{\url{http://yann.lecun.com/exdb/mnist/}} (with 60,000/10,000 train/test examples), SVHN\footnote{\url{http://ufldl.stanford.edu/housenumbers/}} (with 531,131/26,032 train/test examples) and USPS\footnote{\url{https://www.csie.ntu.edu.tw/~cjlin/libsvmtools/datasets/}} (with 7,291 / 2,007 train/test examples). In our experiments, we generate the time-evolving target domain by adding the adversarial noise to the clean target image data (e.g. MNIST for transfer learning on SVHN$\rightarrow$MNIST). The reason why we add the adversarial noise is that it could change the data distribution by adding the adversarial noise such that the generated adversarial examples largely fool the classifier learned on the clean examples. Besides, the generated adversarial examples are still highly separable in the new feature space, which has been empirically validated in our experiments by evaluating the TargetOnly method on those examples. More specifically, we used the Fast Gradient Sign Method (FGSM)~\cite{goodfellow2014explaining} to learn the adversarial noise on the image data sets. The adversarial noise generated by FGSM is defined as follows.
\begin{equation*}
    \tau = \omega \nabla_{\mathbf{x}} \mathcal{J}_{base}(\theta, \mathbf{x}, y)
\end{equation*}
where $\omega \geq 0$ is the magnitude of adversarial noise and $\mathcal{J}_{base}$ is the loss function of a neural network model (parameterized by $\theta$) to be attacked over example $(\mathbf{x}, y)$. Here we simply use the pre-trained LeNet\footnote{\url{https://drive.google.com/drive/folders/1fn83DF14tWmit0RTKWRhPq5uVXt73e0h}} model as the base model $\mathcal{J}_{base}$. Due to the transferability of adversarial examples, the adversarial examples generated by one model could easily fool another model. Therefore, give one target domain (e.g., MNIST for transfer learning on SVHN$\rightarrow$MNIST), we can generate new target domain examples by adding the adversarial noise. When the magnitude of adversarial noise $\omega$ linearly changes from 0.0 to 0.50 with an interval of 0.05, it would generate the evolving target domain examples. Figure \ref{svhn_mnist_show} shows the image examples of a static source domain (SVHN) and a time evolving target domain (MNIST) for continuous transfer learning.

\begin{figure*}[!t]
    \centering
    \includegraphics[width=\textwidth]{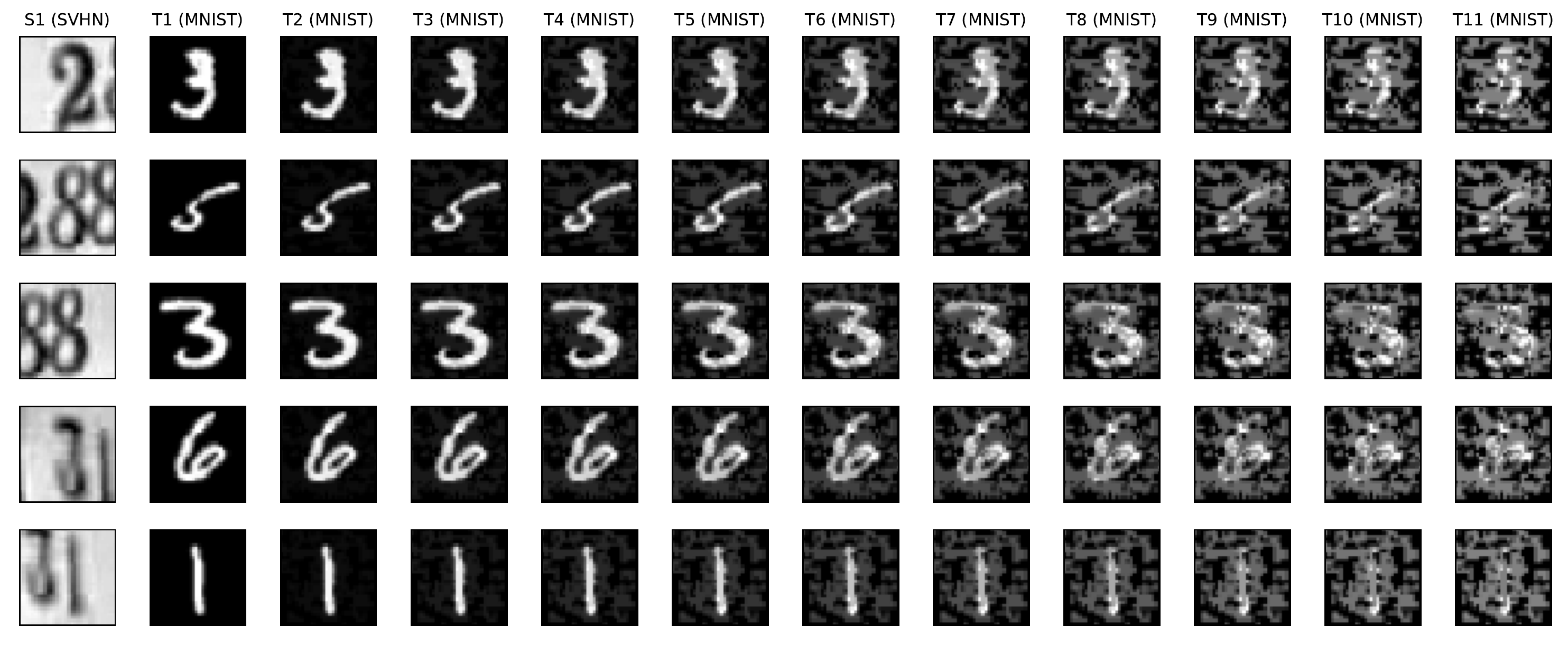}
    \vspace{-5mm}
    \caption{Examples of source domain (SVHN) and time-evolving target domain (MNIST). The first column is the source image examples in SVHN data set. The other columns are the target image examples from MNIST data set with different magnitude of adversarial noise.}\label{svhn_mnist_show}
\end{figure*}


In addition, we consider another real transfer learning scenario where the source domain is SVHN and the evolving target domain is MNIST with various rotations as suggested in \cite{bobu2018adapting}. More specifically, the evolving target domain is generated by rotating the original MNIST images with rotation degree $0^{\circ}$, $15^{\circ}$, $30^{\circ}$, $45^{\circ}$, $60^{\circ}$, $75^{\circ}$ and $90^{\circ}$, respectively. Figure~\ref{svhn_mnist_rorate} shows the image examples of a static source domain (SVHN) and a time evolving target domain (MNIST).

\begin{figure*}[!t]
    \centering
    \includegraphics[width=0.6\textwidth]{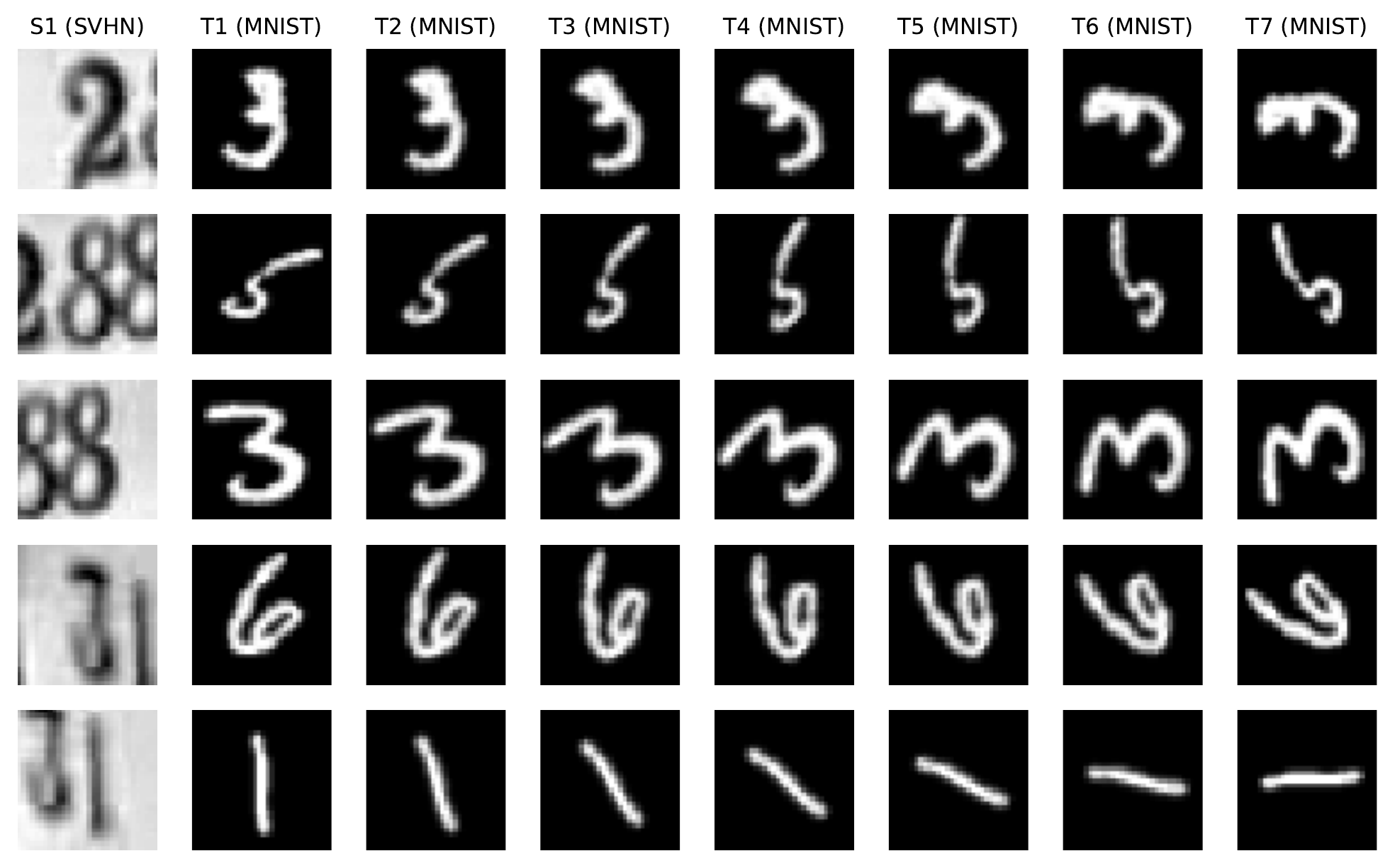}
    \caption{Examples of source domain (SVHN) and time-evolving target domain (MNIST). The first column is the source image examples from SVHN data set. The other columns are the target image examples from MNIST data set with different rotation degrees.}\label{svhn_mnist_rorate}
\end{figure*}

\subsubsection{Model Configuration}
The neural network architecture used in our experiments is shown in Figure \ref{architecture} where we used the gradient reversal layer (GRL)~\cite{ganin2016domain} to implement our proposed $\mathcal{C}$-divergence between source and target domains in the latent space.

In addition, we apply the Stochastic Gradient Descent (SGD) with the momentum of 0.9 to train our model where all the hidden parameters are initialized with Xavier initialization. The cross-entropy loss is adopted to measure the loss of label prediction and domain prediction. Following~\cite{ganin2016domain}, the learning rate $\eta_p$ is adjusted when training the model: $\eta_p=\frac{\eta_0}{(1+\alpha p)^{\beta}}$ where $p$ is an epoch-dependent scalar linearly varying from 0 to 1, and $\eta_0 = 0.01$, $\alpha=10$, $\beta=0.75$. The total number of training epochs is 10, 000 in our experiments. The domain adaptation parameter in gradient reversal layer is given by: $\lambda_p = \frac{2}{1+\exp{(-\gamma p)}} -1$ where $\gamma=10$.

\begin{figure*}[h]
    \centering
    \includegraphics[width=\textwidth]{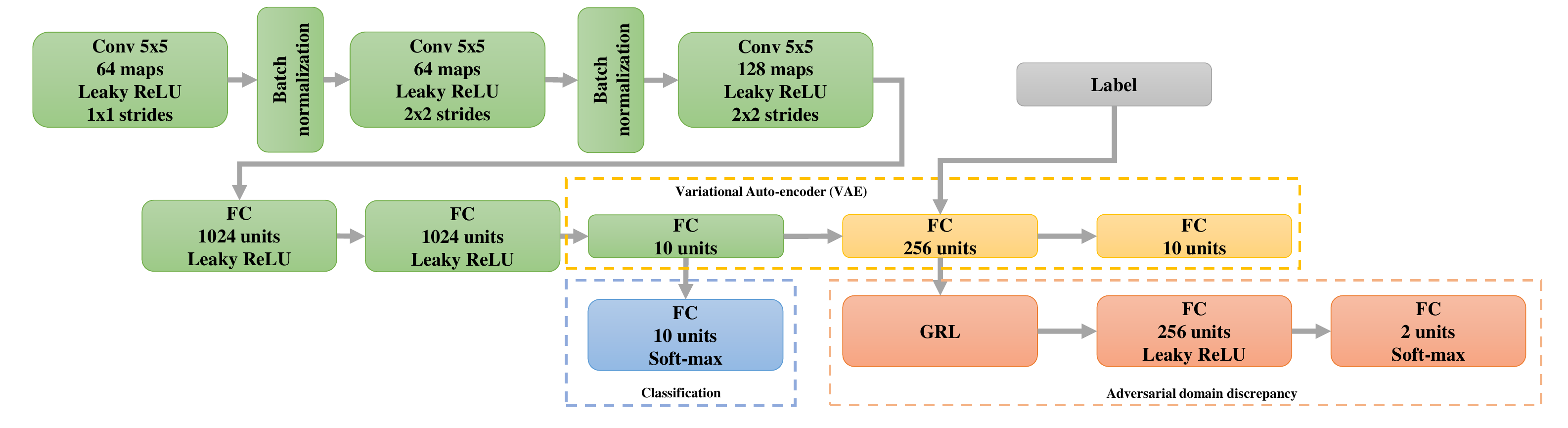}
    \vspace{-5mm}
    \caption{Neural network architecture used in our experiments. If data point is labeled, its class $y$ is used, otherwise, it uses the class prediction as a pseudo-label for learning the latent representation in the Variational Auto-encoder (VAE) framework. We applied the gradient reversal layer (GRL)~\cite{ganin2016domain} to implement the adversarial domain discrepancy.}\label{architecture}
\end{figure*}

\subsubsection{Additional Results}
{\bf Evaluation of Continuous Transfer Learning:}
Table~\ref{mnist_usps} and Table \ref{usps_mnist} shows the continuous transfer learning results on MNIST and USPS when using adversarial attacks to generate the evolving target domain. The results are consistent with our observations in Section \ref{experiments_CTL}. We would like to point out that TargetERM only is a semi-supervised learning scenario without using source and historical target knowledge when limited target examples are available and minimizes the empirical risk on those labeled examples. It can be seen that this baseline is stable, but could not achieve satisfactory performance on the evolving target domain. This tells us that (1) source and historical target knowledge could largely improve the classification performance on the evolving target domain; (2) static transfer learning baselines might produce worse classification performance than TargetERM, thus leading to the occurrence of negative transfer when data distribution between source and current target tasks are largely shifted for T6-T11.

\begin{table*}[!t]
\small
\centering
\setlength\tabcolsep{2.9pt}
\caption{Transfer learning accuracy from MNIST (source) to continuously evolving USPS (target)}\label{mnist_usps}
\begin{tabular}{|l|c|c|c|c|c|c|c|c|c|c|c|}
\hline
Target Domain & T1              & T2              & T3              & T4              & T5              & T6              & T7              & T8              & T9              & T10             & T11             \\ \hline\hline
SourceOnly    & 0.8196          & 0.7778          & 0.6946          & 0.5745          & 0.3921          & 0.2272          & 0.1579          & 0.0907          & 0.0613          & 0.0429          & 0.0289          \\ \hline
TargetOnly    & 0.9616          & 0.9522          & 0.9646          & 0.9771          & 0.9781          & 0.9806          & 0.9791          & 0.9865          & 0.9880          & 0.9880          & 0.9880          \\ \hline\hline
TargetERM & 0.8012 & 0.7474 & 0.6951 & 0.6557 & 0.6253 & 0.6412 & 0.7205 & 0.7384 & 0.7693 & 0.7828 & 0.8381 \\\hline
CORAL~\cite{sun2016return}         & 0.8570          & 0.8211          & 0.7897          & 0.7195          & 0.7240          & 0.6288          & 0.6323          & 0.6831          & 0.6313          & 0.6139          & 0.6632          \\ \hline
DANN~\cite{ganin2016domain}          & 0.9088          & 0.8774          & 0.8411          & 0.8037          & 0.7633          & 0.7389          & 0.7260          & 0.6413          & 0.6986          & 0.7688          & 0.7997          \\ \hline
ADDA~\cite{tzeng2017adversarial}          & 0.9098          & 0.8859          & 0.8540          & 0.8012          & 0.7210          & 0.5835          & 0.4509          & 0.4434          & 0.4245          & 0.4410          & 0.4808          \\ \hline
WDGRL~\cite{shen2017wasserstein}         & 0.9133          & 0.8485          & 0.8510          & 0.8067          & 0.7793          & 0.7195          & 0.7559          & 0.7369          & 0.8127          & 0.8052          & 0.8062          \\ \hline
DIFA~\cite{volpi2018adversarial}          & 0.8680          & 0.8361          & 0.8122          & 0.7683          & 0.7140          & 0.6163          & 0.4295          & 0.3687          & 0.4559          & 0.3627          & 0.4425          \\ \hline\hline
TransLATE\_p  & 0.9537          & \textbf{0.9392} & 0.9053          & 0.8655          & 0.8306          & 0.8176          & 0.7997          & 0.8550          & 0.8615          & 0.8445          & 0.8710          \\ \hline
TransLATE     & \textbf{0.9537} & 0.9367          & \textbf{0.9263} & \textbf{0.9178} & \textbf{0.9283} & \textbf{0.9352} & \textbf{0.9482} & \textbf{0.9517} & \textbf{0.9522} & \textbf{0.9591} & \textbf{0.9716} \\ \hline
\end{tabular}
\end{table*}

\begin{table*}[!t]
\small
\centering
\setlength\tabcolsep{2.9pt}
\caption{Transfer learning accuracy from USPS (source) to continuously evolving MNIST (target)}\label{usps_mnist}
\begin{tabular}{|l|c|c|c|c|c|c|c|c|c|c|c|}
\hline
Target Domain & T1              & T2              & T3              & T4              & T5              & T6              & T7              & T8              & T9              & T10             & T11             \\ \hline\hline
SourceOnly    & 0.4558          & 0.4429          & 0.4097          & 0.4105          & 0.3872          & 0.3437          & 0.3019          & 0.2472          & 0.2061          & 0.1506          & 0.1179          \\ \hline
TargetOnly    & 0.9934          & 0.9975          & 0.9992          & 0.9989          & 0.9989          & 0.9993          & 0.9989          & 0.9990          & 0.9987          & 0.9986          & 0.9988          \\ \hline\hline
TargetERM &0.7326& 0.6867& 0.6793& 0.6442& 0.6188& 0.6368&0.6386 & 0.7029& 0.7459 &0.7606& 0.7595 \\\hline
CORAL~\cite{sun2016return}         & 0.8619          & 0.8705          & 0.8259          & 0.7927          & 0.7161          & 0.6565          & 0.6753          & 0.6307          & 0.5930          & 0.6354          & 0.5783          \\ \hline
DANN~\cite{ganin2016domain}          & 0.8919          & 0.8791          & 0.8432          & 0.8188          & 0.7745          & 0.7535          & 0.7347          & 0.7298          & 0.6548          & 0.7088          & 0.6804          \\ \hline
ADDA~\cite{tzeng2017adversarial}          & 0.9130          & 0.8930          & 0.8432          & 0.7989          & 0.7436          & 0.7272          & 0.6569          & 0.6650          & 0.6114          & 0.6442          & 0.5431          \\ \hline
WDGRL~\cite{shen2017wasserstein}         & 0.9193          & 0.8841          & 0.8655          & 0.8031          & 0.7399          & 0.7010          & 0.7513          & 0.7309          & 0.7121          & 0.7570          & 0.7499          \\ \hline
DIFA~\cite{volpi2018adversarial}          & 0.9211          & 0.9151          & 0.8931          & 0.8408          & 0.8233          & 0.7408          & 0.6503          & 0.5433          & 0.3249          & 0.1728          & 0.0919          \\ \hline\hline
TransLATE\_p  & 0.9596          & 0.9197          & 0.9190          & 0.8539          & 0.8511          & 0.8856          & 0.8768          & 0.8898          & 0.8665          & 0.8738          & 0.8767          \\ \hline
TransLATE     & \textbf{0.9596} & \textbf{0.9493} & \textbf{0.9439} & \textbf{0.9333} & \textbf{0.9302} & \textbf{0.9243} & \textbf{0.9182} & \textbf{0.9143} & \textbf{0.9099} & \textbf{0.9161} & \textbf{0.9189} \\ \hline
\end{tabular}
\end{table*}

\begin{table}[!t]
\small
\centering
\caption{Transfer learning accuracy from SVHN (source) to continuously evolving MNIST (target) with various rotations}\label{tab:svhn_mnist_rotation}
\begin{tabular}{|l|c|c|c|c|c|c|c|}
\hline
             & T1 ($0^{\circ}$) & T2 ($15^{\circ}$) & T3 ($30^{\circ}$) & T4 ($45^{\circ}$) & T5 ($60^{\circ}$) & T6 ($75^{\circ}$) & T7 ($90^{\circ}$) \\ \hline
SourceOnly   & 0.6998 & 0.6879  & 0.6005  & 0.3135  & 0.1704  & 0.1340  & 0.1393  \\ \hline
CORAL~\cite{sun2016return}        & 0.8349 & 0.8633  & 0.7527  & 0.6719  & 0.5969  & 0.5563  & 0.6155  \\ \hline
DANN~\cite{ganin2016domain}         & 0.8666 & 0.8332  & 0.7870  & 0.7606  & 0.6490  & 0.5799  & 0.6497  \\ \hline
WDGRL~\cite{shen2017wasserstein}        & 0.8990 & 0.8527  & 0.8290  & 0.8521  & 0.7582  & 0.8301  & 0.8126  \\ \hline
TransLATE\_p & 0.9621 & \textbf{0.9524}  & 0.8813  & 0.8543  & 0.7902  & 0.8564  & 0.8322  \\ \hline
TransLATE    & \textbf{0.9621} & 0.9469  & \textbf{0.8962}  & \textbf{0.8913}  & \textbf{0.8925}  & \textbf{0.9009}  & \textbf{0.8798}  \\ \hline
\end{tabular}
\end{table}
Table~\ref{tab:svhn_mnist_rotation} shows the transfer learning performance for a static source domain (SVHN) and a time-evolving target domain (MNIST with various rotation degrees). It demonstrated the effectiveness of our proposed \model{} algorithm on this data set. SourceOnly obtains terrible classification performance on the evolving target domain. In contrast, the transfer learning baselines could achieve significantly better performance when limited target examples are available.

{\bf Effect of limited label information in the target domain:} We evaluate the effect of limited label information in the target domain on mitigating the negative transfer in the static transfer learning problem. When no label information is available in the target domain, it would be difficult to characterize and avoid the negative transfer. Figure~\ref{fig:w_o_labels} shows the classification performance of transfer learning algorithms from SVHN (source) to MNIST (target) where "w/" indicates "with limited label information in the target domain" (semi-supervised transfer learning) and "w/o" indicates "without any label information in the target domain" (unsupervised transfer learning). For our proposed \model\_d algorithm, it would infer all the unlabeled target examples to produce the pseudo-labels for measuring the label-informed $\mathcal{C}$-divergence when no label information is available in the target domain.
It can be seen that without any target label information, negative transfer is more likely to occur for transfer learning algorithms. It demonstrates that limited label information in the target domain is necessary to characterize the negative transfer.

\begin{figure*}[!h]
\centering
\subfigure[DANN]{\label{fig:a}\includegraphics[width=0.323\textwidth]{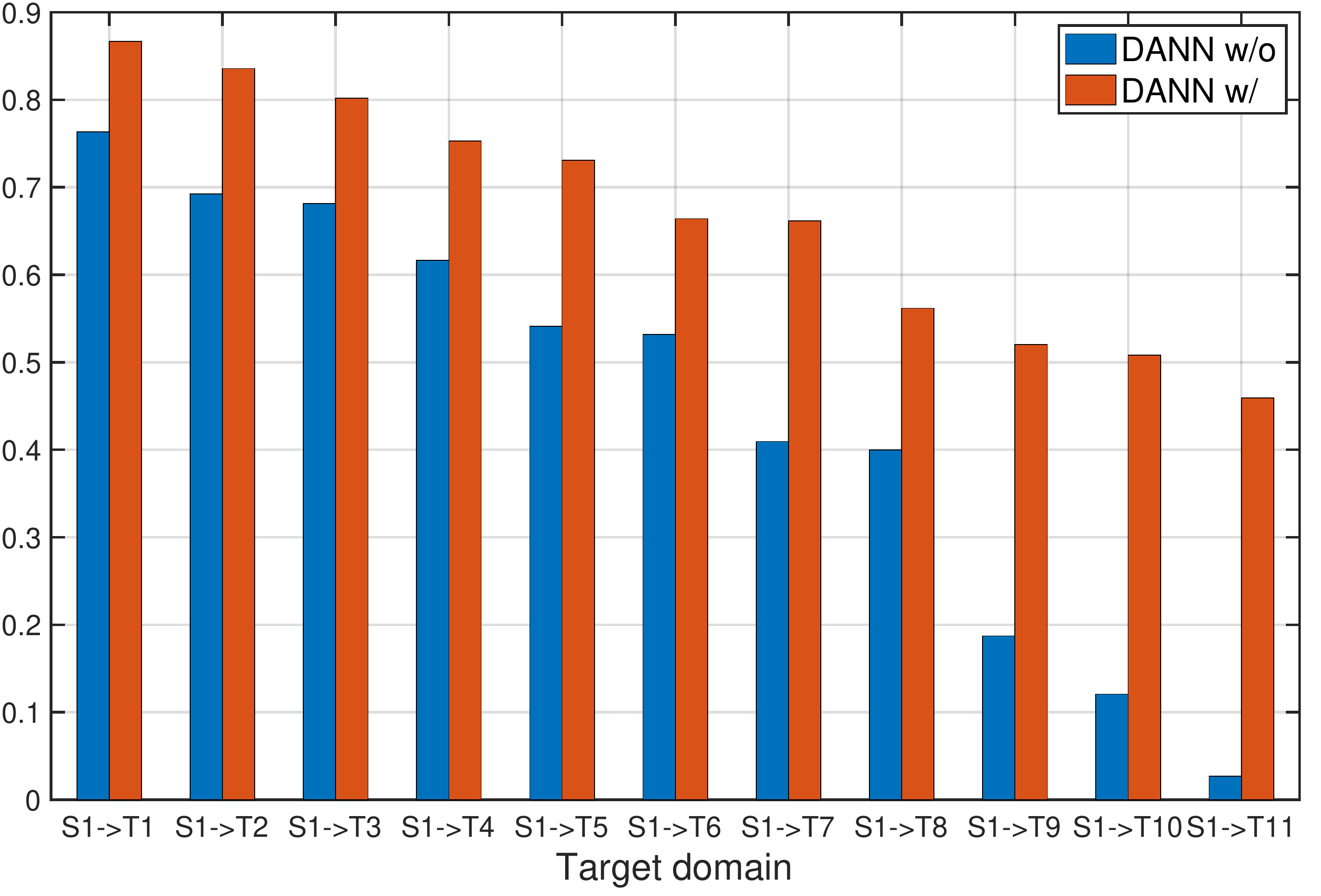}}
\subfigure[WDGRL]{\label{fig:c}\includegraphics[width=0.323\textwidth]{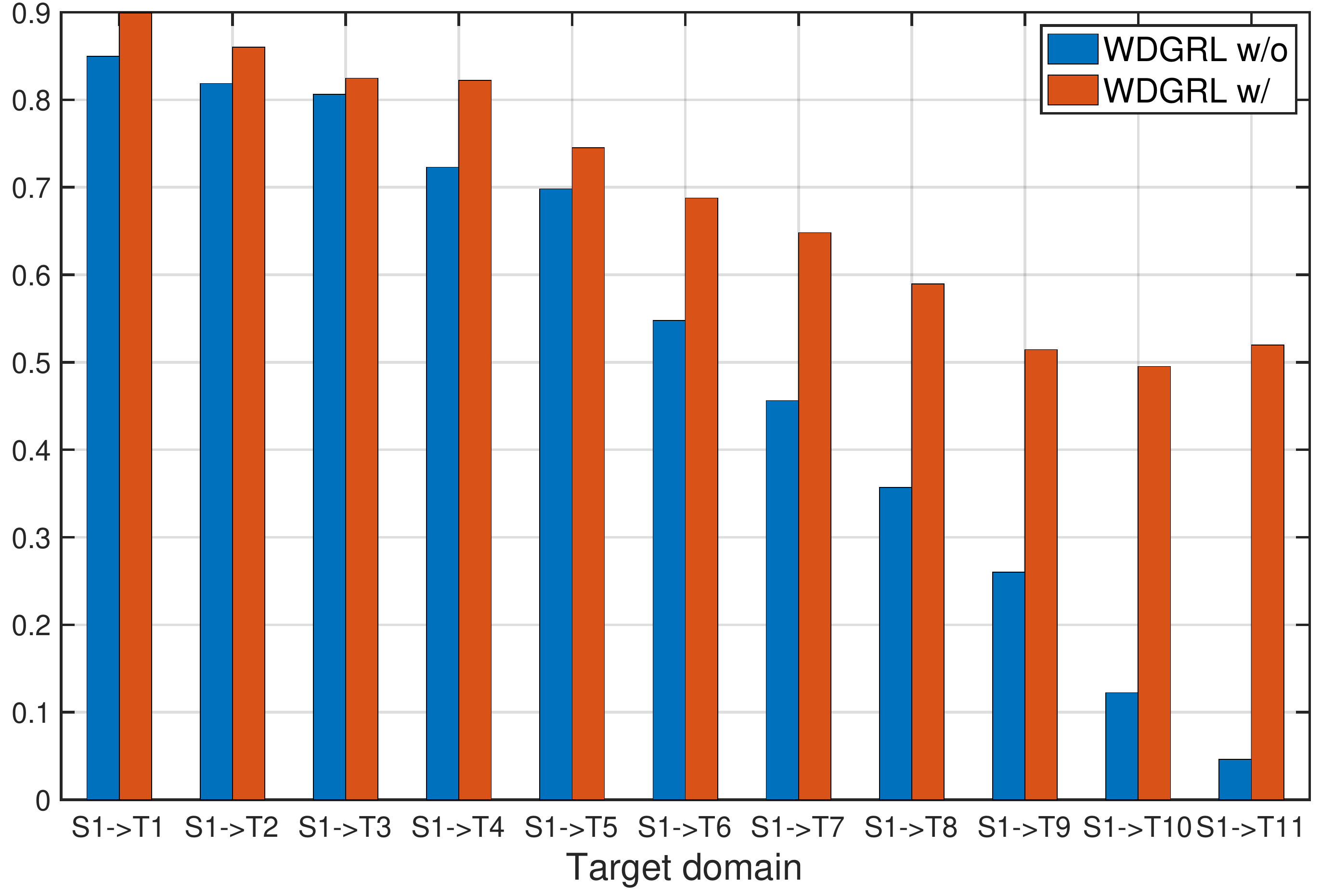}}
\subfigure[\model\_d]{\label{fig:d}\includegraphics[width=0.323\textwidth]{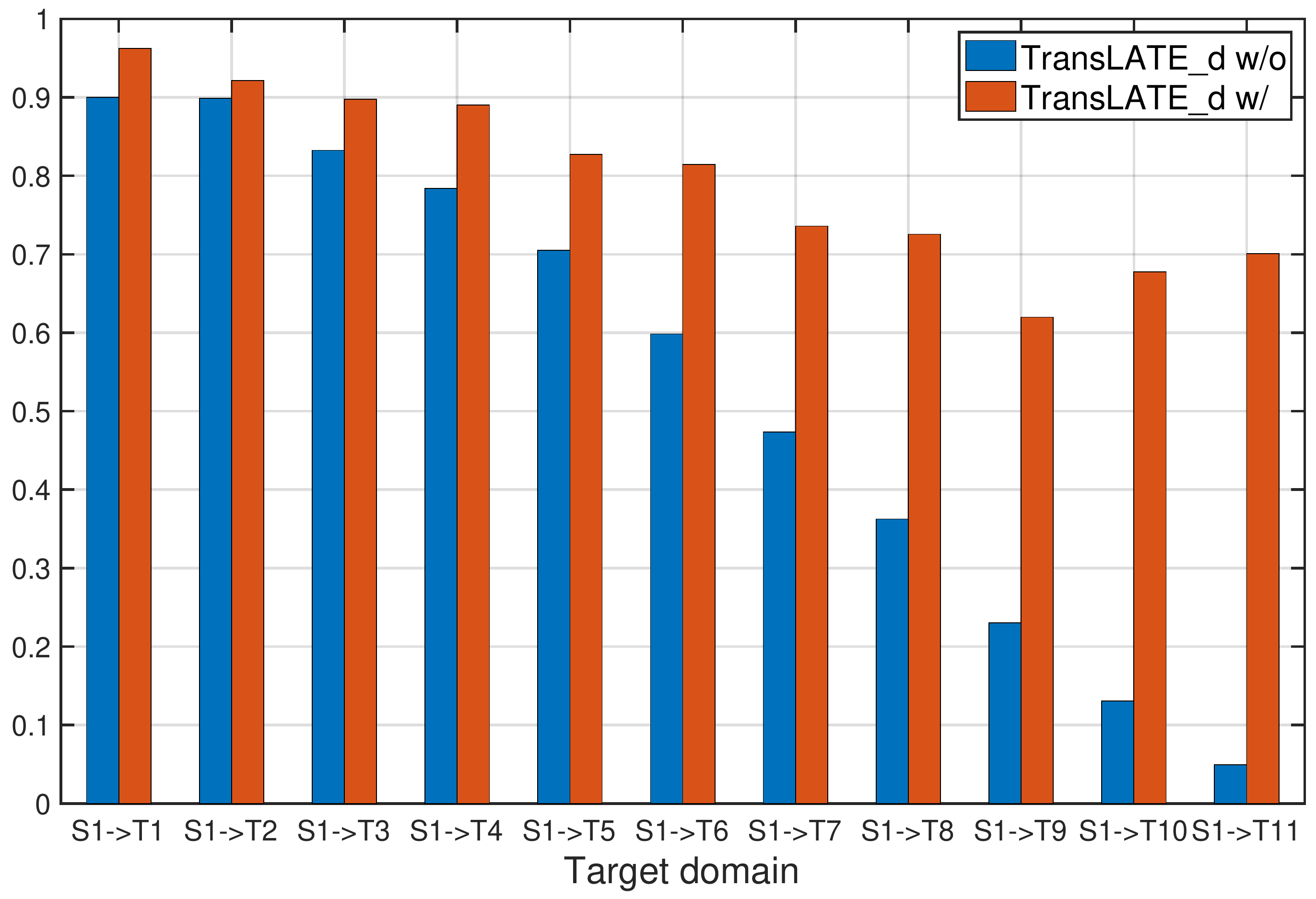}}
\vspace{-3mm}
\caption{Transfer learning accuracy with or without limited label information in the target domain}\label{fig:w_o_labels}
\end{figure*}

\begin{figure}
  \begin{center}
    \includegraphics[width=0.323\textwidth]{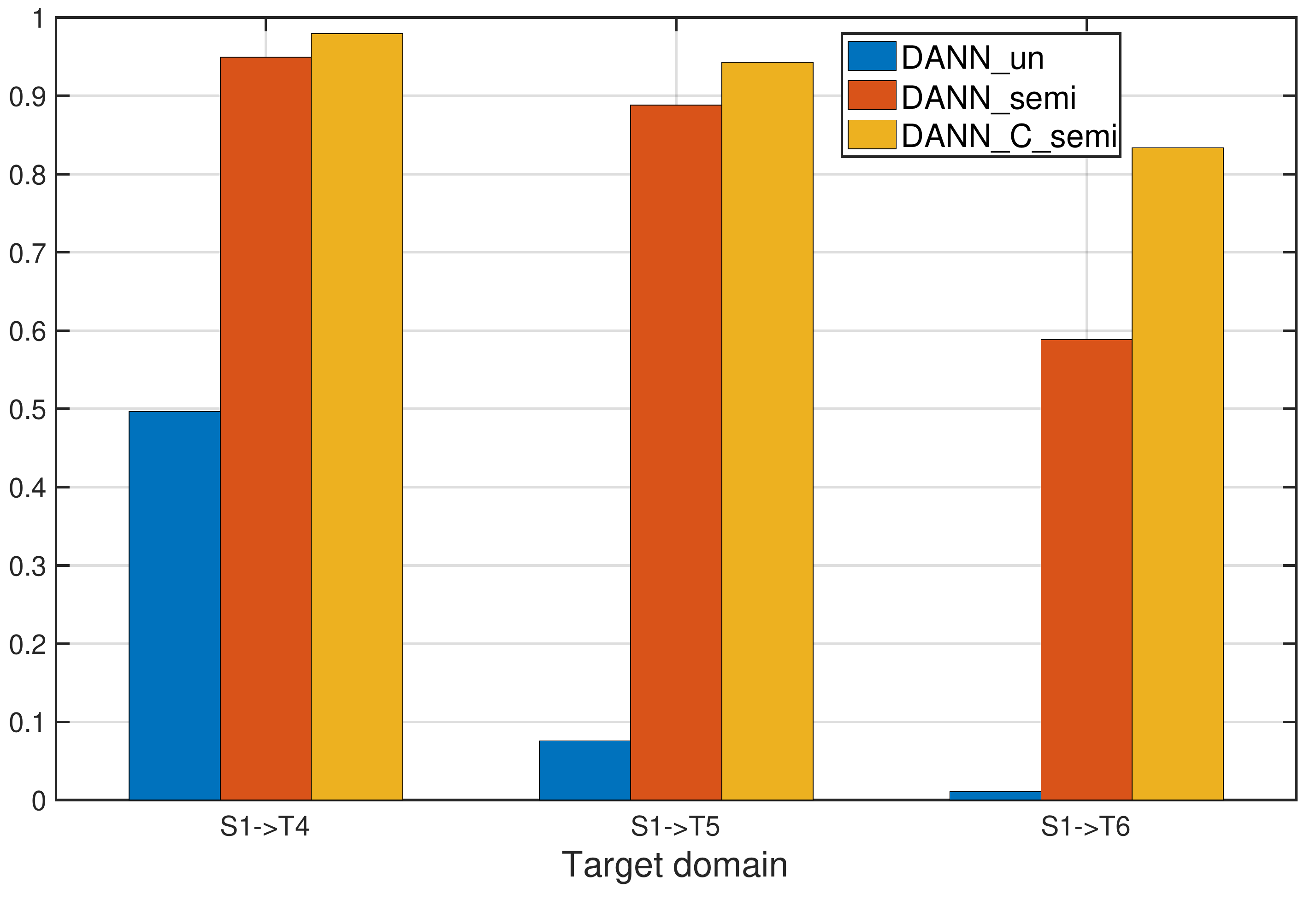}
  \end{center}
  \vspace{-5mm}
  \caption{Effect of $\mathcal{C}$-divergence}
  \label{fig:Effect_of_C}
  \vspace{-5mm}
\end{figure}
{\bf Effect of $C$-divergence:} We empirically compare the proposed $\mathcal{C}$-divergence with unsupervised domain divergence in~\cite{ganin2016domain} on the synthetic data set (shown in Figure~\ref{fig:syn_data}). To be more specific, we implement a simple domain-adversarial neural network~\cite{ganin2016domain} with either unsupervised domain divergence or our $\mathcal{C}$-divergence, and consider the following three algorithms. DANN\_un: proposed in~\cite{ganin2016domain} with unsupervised domain divergence (no labeled target examples are available); DANN\_semi: a variant of DANN\_un with unsupervised domain divergence, but with limited labeled target examples for minimizing the classification error; DANN\_C\_semi: a variant of DANN\_un with our proposed $\mathcal{C}$-divergence and limited labeled target examples could help both minimize the classification error and label-informed distribution alignment. Figure~\ref{fig:Effect_of_C} shows the transfer learning performance from the source (S1) to the target T4, T5 and T6, respectively. With limited target examples, DANN\_semi could largely avoid the negative transfer compared to DANN\_un. That confirms the effect of limited label target information for transfer learning. One intuitive explanation is that T5 and T6 (see Figure~\ref{fig:syn_data} for Target domain \#5 and \#6) are more likely to be aligned incorrectly with the source domain when no label information in the target domain is available. Limited target label information helps mitigate the occurrence of negative transfer in this case. Moreover, our proposed $\mathcal{C}$-divergence could help improve the transfer learning performance and avoid the negative transfer by encouraging the alignment of label-informed data distribution.

{\bf Visualization:}
As stated in Theorem \ref{empirical_generalization_bound}, minimizing the source error and label-informed domain discrepancy is the way to find the optimal hypothesis on minimizing the target error. Thus, the learned latent features should have the following two properties: label-informed distribution matching (minimizing $\mathcal{C}$-divergence) and highly separability (minimizing source error).
The visualization of latent feature representation is shown in Figure \ref{fig:embedding} using t-SNE~\cite{maaten2008visualizing}. It is observed that the feature representation learned by our proposed \model{} framework is well separable in the latent space and the feature distribution is matched according to the class label (10 classes in total). On the other hand, the baseline methods could not well match the label-informed data distribution though some of them (e.g., WDGRL) learned the separable latent feature representation.

\begin{figure*}[!h]
\centering
\subfigure[DANN]{\label{fig:a}\includegraphics[width=3.42cm]{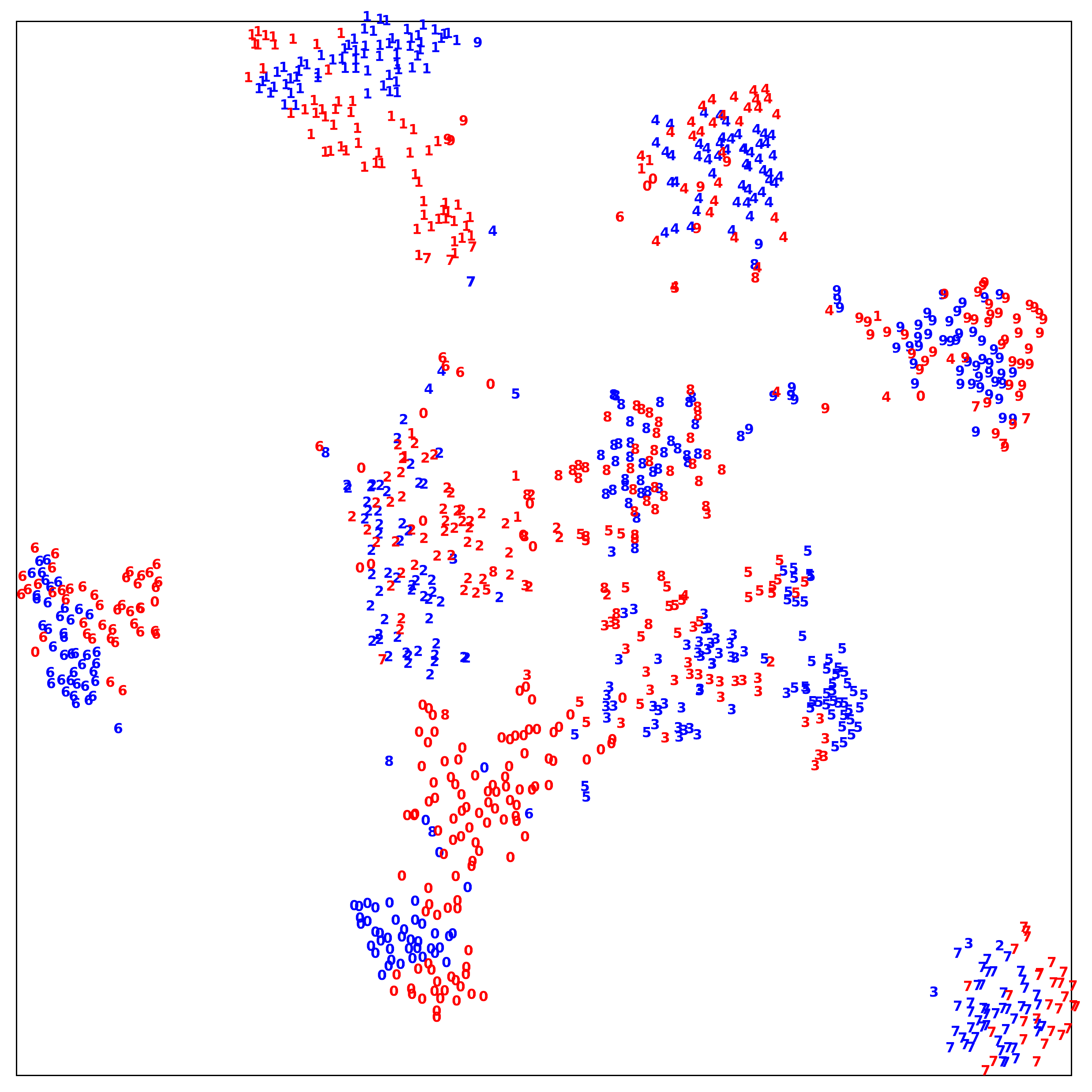}}
\subfigure[ADDA]{\label{fig:b}\includegraphics[width=3.42cm]{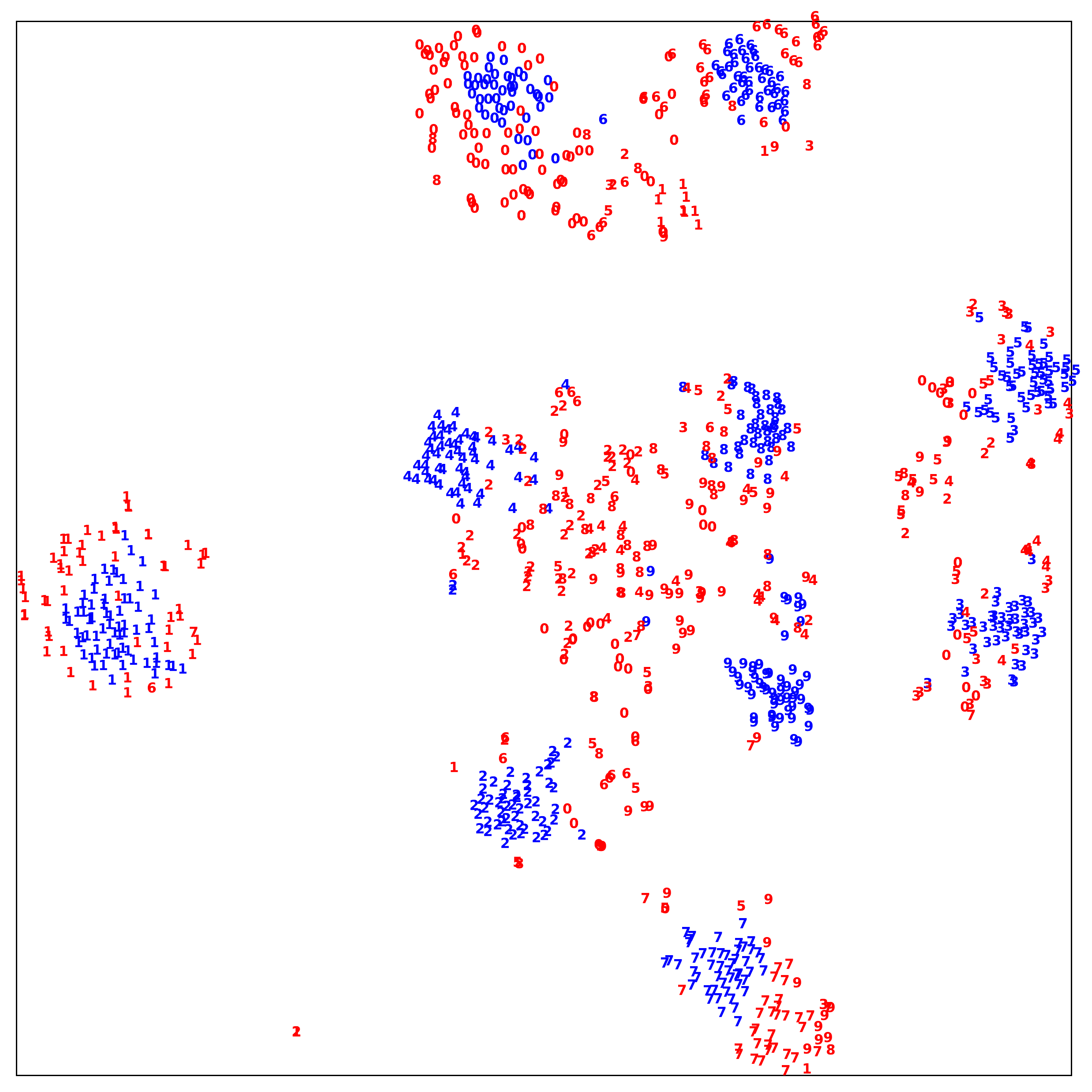}}
\subfigure[WDGRL]{\label{fig:c}\includegraphics[width=3.42cm]{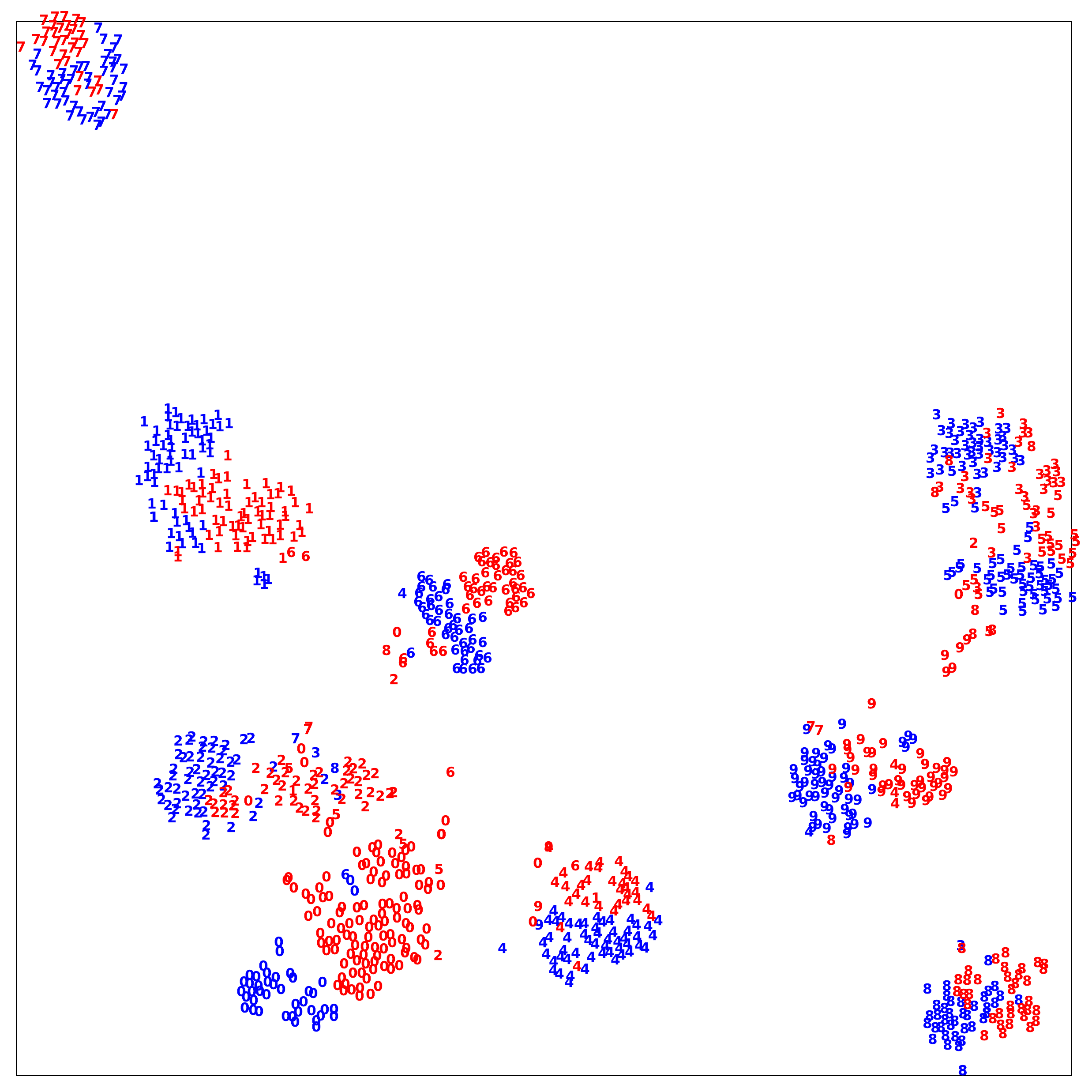}}
\subfigure[\model]{\label{fig:d}\includegraphics[width=3.42cm]{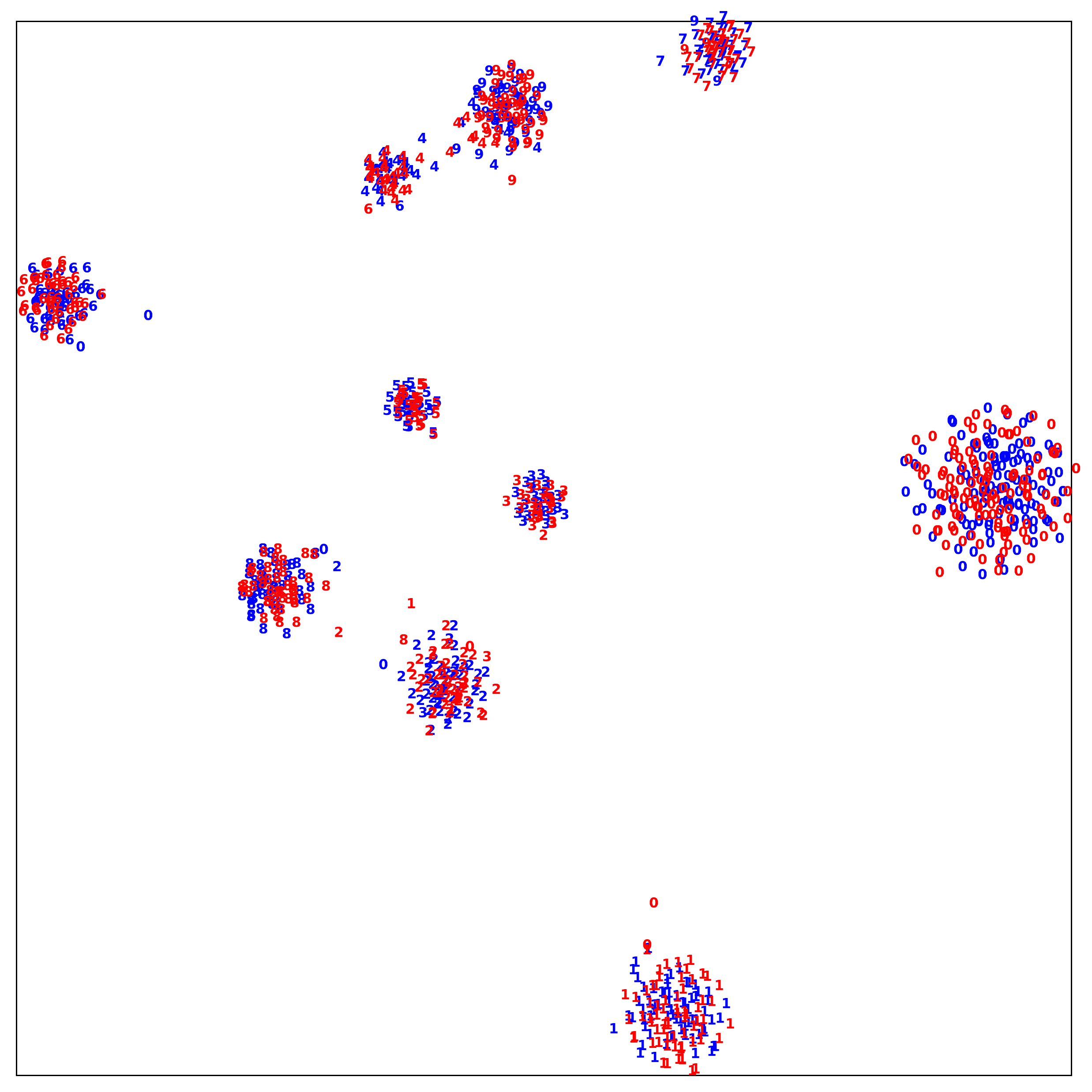}}
\vspace{-3mm}
\caption{Visualization of latent feature representation on T11 target domain (MNIST$\rightarrow$USPS). Source examples are in blue and target examples are in red. The class labels of examples are indicated in the numbers.}\label{fig:embedding}
\end{figure*}

\end{document}